\def\eqref#1{equation~\ref{#1}}
\def\1{\bm{1}}
\DeclareMathAlphabet{\mathsfit}{\encodingdefault}{\sfdefault}{m}{sl}
\SetMathAlphabet{\mathsfit}{bold}{\encodingdefault}{\sfdefault}{bx}{n}
\DeclareMathOperator*{\argmax}{arg\,max}
\DeclareMathOperator*{\argmin}{arg\,min}
\theoremstyle{plain}
\newtheorem{theorem}{Theorem}[section]
\newtheorem{proposition}[theorem]{Proposition}
\newtheorem{lemma}[theorem]{Lemma}
\theoremstyle{definition}
\newtheorem{assumption}[theorem]{Assumption}
\theoremstyle{remark}
\newtheorem{remark}[theorem]{Remark}
\title{Don’t fear the unlabelled: \\Safe semi-supervised learning via  debiasing}
\author{Hugo Schmutz \thanks{Université Côte d'Azur, Inria, Maasai, LJAD, CNRS, Nice, France}\enspace\thanks{Université Côte d'Azur, TIRO-MATOs, UMR E 4320 CEA, Nice, France}\\
        \texttt{hugo.schmutz@inria.fr}
\And
        Olivier Humbert \footnotemark[2]\enspace \thanks{Centre Antoine Lacassagne, Nice, France}\\
\And
Pierre-Alexandre Mattei \footnotemark[1]
}
\begin{document}

\maketitle

\begin{abstract}
Semi-supervised learning (SSL) provides an effective means of leveraging unlabelled data to improve a model’s performance. Even though the domain has received a considerable amount of attention in the past years, most methods present the common drawback of lacking theoretical guarantees. Our starting point is to notice that the estimate of the risk that most discriminative SSL methods minimise is biased, even asymptotically. This bias impedes the use of standard statistical learning theory and can hurt empirical performance. We propose a simple way of removing the bias. Our debiasing approach is straightforward to implement and applicable to most deep SSL methods.  We provide simple theoretical guarantees on the trustworthiness of these modified methods, without having to rely on the strong assumptions on the data distribution that SSL theory usually requires. In particular, we provide generalisation error bounds for the proposed methods. We evaluate debiased versions of different existing SSL methods, such as the Pseudo-label method and Fixmatch, and show that debiasing can compete with classic deep SSL techniques in various settings by providing better calibrated models. Additionally, we provide a theoretical explanation of the intuition of the popular SSL methods. An implementation of a debiased version of Fixmatch is available at \url{https://github.com/HugoSchmutz/DeFixmatch}
\end{abstract}

\section{Introduction}

The promise of semi-supervised learning (SSL) is to be able to learn powerful predictive models using partially labelled data. In turn, this would allow machine learning to be less dependent on the often costly and sometimes dangerously biased task of labelling data. Early SSL approaches---e.g. Scudder's \citeyearpar{scudder1965probability} untaught pattern recognition machine---simply replaced unknown labels with predictions made by some estimate of the predictive model and used the obtained \emph{pseudo-labels} to refine their initial estimate. Other more complex branches of SSL have been explored since, notably using generative models (from \citealp{mclachlan1977estimating}, to \citealp{kingma2014semi}) or graphs (notably following \citealp{zhu2003semi}). Deep neural networks, which are state-of-the-art supervised predictors, have been trained successfully using SSL. Somewhat surprisingly, the main ingredient of their success is still the notion of pseudo-labels (or one of its variants), combined with systematic use of data augmentation (e.g. \citealp{xie2019unsupervised,sohn2020fixmatch,rizve2021defense}).

\looseness=-1
An obvious SSL baseline is simply throwing away the unlabelled data. We will call such a baseline the \emph{complete case}, following the missing data literature (e.g. \citealp{tsiatis2006semiparametric}). As reported in \citet{van2020survey}, the main risk of SSL is the potential degradation caused by the introduction of unlabelled data. Indeed, semi-supervised learning outperforms the complete case baseline only in specific cases \citep{singh2008unlabeled, scholkopf2012causal, li2014towards}. This degradation risk for generative models has been analysed in \citet[Chapter 4]{chapelle2006semi}. To overcome this issue, previous works introduced the notion of \textit{safe} semi-supervised learning for techniques which never reduce predictive performance by introducing unlabelled data \citep{li2014towards,  guo2020safe}. Our loose definition of safeness is as follows: \emph{an SSL algorithm is safe if it has theoretical guarantees that are similar or stronger to the complete case baseline}. The ``theoretical'' part of the definition is motivated by the fact that any empirical assessment of generalisation performances of an SSL algorithm is jeopardised by the scarcity of labels. "Similar or stronger guarantees" can be understood in a broad sense since there are many kinds of theoretical guarantees (e.g. the two methods may be both consistent, have similar generalisation bounds, be both asymptotically normal with related asymptotic variances). Unfortunately, popular deep SSL techniques generally do not benefit from theoretical guarantees without strong and essentially untestable assumptions on the data distribution \citep{mey2022improved} such as the smoothness assumption (small perturbations on the features $x$ do not cause large modification in the labels, $p(y|\textit{pert}(x))\approx p(y|x)$) or the cluster assumption (data points are distributed on discrete clusters and points in the same cluster are likely to share the same label). 

\begin{figure}
\vspace{-5pt}
    \centering
    \includegraphics[width=0.20\columnwidth]{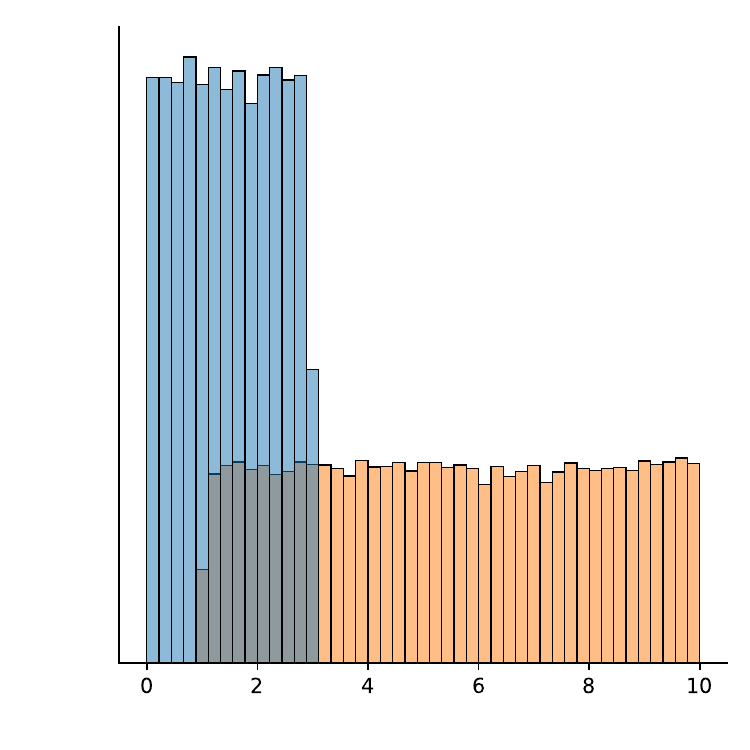}
    \includegraphics[width=0.35\columnwidth]{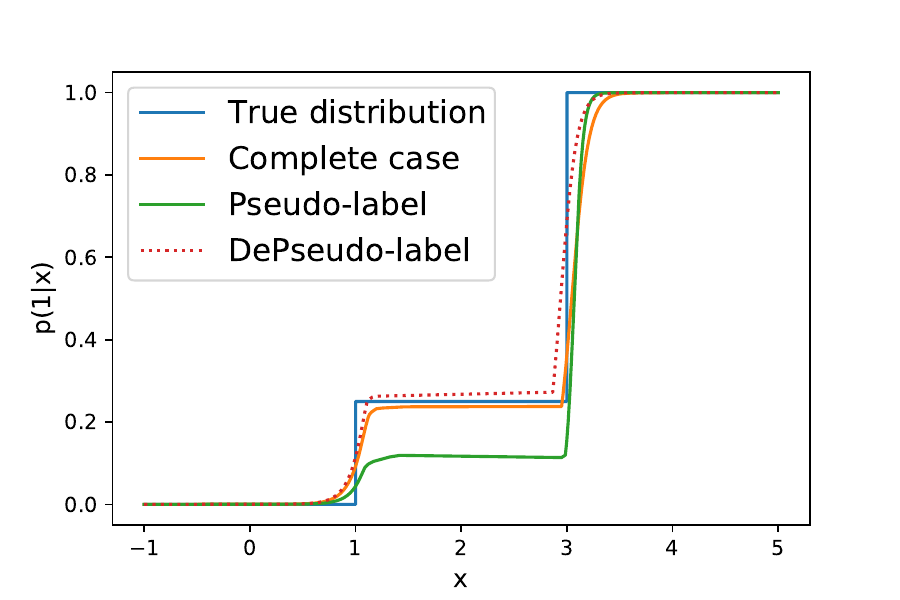}
    \caption{(Left) Data histogram. (Right) Posterior probabilities $p(1|x)$ of the same model trained following either complete case (only labelled data), Pseudo-label or our DePseudo-label.}
    \label{fig:toy}
    \vspace{-15pt}
\end{figure}

Most semi-supervised methods rely on these distributional assumptions to ensure performance in entropy minimisation, pseudo-labelling and consistency-based methods. However, no proof is given that guarantees the effectiveness of state-of-the-art methods \citep{tarvainen2017mean, miyato2018virtual, sohn2020fixmatch, pham2021meta}. To illustrate that SSL requires specific assumptions, we show in a toy example that pseudo-labelling can fail. To do so, we draw samples from two uniform distributions with a small overlap. Both supervised and semi-supervised neural networks are trained using the same labelled dataset. While the supervised algorithm learns perfectly the true distribution of $p(1|x)$, the semi-supervised learning methods (both entropy minimisation and pseudo-label) underestimate $p(1|x)$ for $x \in [1,3]$ (see Figure \ref{fig:toy}). We also test our proposed method (DeSSL) on this dataset and show that the unbiased version of each SSL technique learns the true distribution accurately. See Appendix \ref{section:toyexample} for the results with Entropy Minimisation. Beyond this toy example, a recent benchmark \citep{wang2022usb} of the last SSL methods demonstrates that no one method is empirically better than the others. Therefore, the scarcity of labels brings out the need for competitive methods that benefit from theoretical guarantees. The main motivation of this work is to show that these competitive methods can be easily modified to benefit from theoretical guarantees without performance degradation.

\subsection{Contributions}
Rather than relying on the strong geometric assumptions usually used in SSL theory, we use the \emph{missing completely at random (MCAR)} assumption, a standard assumption from the missing data literature (see e.g. \citealp{little2019statistical}) and often implicitly made in most SSL works. With this only assumption on the data distribution, we propose a new safe SSL method derived from simply debiasing common SSL risk estimates. Our main contributions are:
\begin{itemize}
    \item We introduce debiased SSL (DeSSL), a safe method that can be applied to most deep SSL algorithms without assumptions on the data distribution;
    \item We propose a theoretical explanation of the intuition of popular SSL methods. We provide theoretical guarantees on the safeness of using DeSSL both on consistency, calibration and asymptotic normality. We also provide a generalisation error bound;
    \item We show how simple it is to apply DeSSL to the most popular methods such as Pseudo-label and Fixmatch, and show empirically that DeSSL leads to models that are never worse than their classical counterparts, generally better calibrated and sometimes much more accurate.
\end{itemize}

\section{Semi-supervised learning}


The ultimate objective of most of the learning frameworks is to minimise a risk $\mathcal{R}$, defined as the expectation of a particular loss function $L$ over a data distribution $p(x,y)$, on a set of models $f_\theta(x)$, parametrised by $\theta \in \Theta$. The distribution $p(x,y)$ being unknown, we generally minimise a Monte Carlo approximation of the risk, the empirical risk $\hat{\mathcal{R}}(\theta)$ computed on a sample of $n$ i.i.d points drawn from $p(x,y)$. $\hat{\mathcal{R}}(\theta)$ is an unbiased and consistent estimate of $\mathcal{R}(\theta)$ under mild assumptions. Its unbiased nature is one of the basic properties that is used for the development of traditional learning theory and asymptotic statistics \citep{van2000asymptotic, shalev2014understanding}.


Semi-supervised learning leverages both labelled and unlabelled data to improve the model's performance and generalisation.  Further information on the distribution $p(x)$ provides a better understanding of the distributions $p(x,y)$ and also $p(y|x)$. Indeed,  $p(x)$ may contain information on $p(y|x)$ (\citealp{scholkopf2012causal}, \citealp[Chapter 7.6]{Goodfellow-et-al-2016}, \citealp{van2020survey}). 

In the following, we have access to $n$ samples drawn from the distribution $p(x,y)$ where some of the labels are missing. We introduce a new  binary random variable $r \sim \mathcal{B}(\pi)$ that governs whether or not a data point is labelled ($r=0$  missing, $r=1$ observed, and $\pi \in (0,1)$ is the probability of being labelled). The labelled (respectively the unlabelled) datapoints are indexed by the set $\mathcal{L}$ (respectively $\mathcal{U}$), $\mathcal{L} \cup \mathcal{U} = \{1, ..., n\}$. We note $n_l$ the number of labelled and $n_u$ the number of unlabelled datapoints.

The MCAR  assumption states that the missingness of a label $y$ is independent of its features and the value of the label: $p(x,y,r)=p(x,y)p(r)$.  This is the case when neither features nor labels carry information about the potential missingness of the labels. This description of semi-supervised learning as a missing data problem has already been done in multiple works (see e.g. \citealp{seeger2000learning, ahfock2019missing}). Moreover, the MCAR assumption is implicitly made in most of the SSL works to design the experiments, indeed, missing labels are drawn completely as random in datasets such as MNIST, CIFAR or SVHN \citep{tarvainen2017mean, miyato2018virtual, xie2019unsupervised, sohn2020fixmatch}. Moreover, the definition of safety in the introduction is not straightforward without the MCAR assumption as the complete case is not an unbiased estimator of the risk in non-MCAR settings (see e.g. \citealp[Section 3.1]{liu2020kernel}).

\subsection{Complete case: throwing the unlabelled data away}

In missing data theory, the complete case is the learning scheme that only uses fully observed instances, namely labelled data. The natural estimator of the risk is then simply the empirical risk computed on the labelled data. Fortunately, in the MCAR setting, the complete case risk estimate keeps the same good properties of the traditional supervised one: it is unbiased and converges point wisely to $\mathcal{R}(\theta)$. Therefore, traditional learning theory holds for the complete case under MCAR. While these observations are hardly new (see e.g. \citealp{liu2020kernel}), they can be seen as particular cases of the theory that we develop below. The risk to minimise is
\begin{equation}
\hat{\mathcal{R}}_{CC}(\theta) = \frac{1}{n_l}\sum_{i\in\mathcal{L}}L(\theta; x_i,y_i).
\end{equation}

\subsection{Incorporating unlabelled data}

A major drawback of the complete case framework is that a lot of data ends up not being exploited. A class of SSL approaches, mainly inductive methods with respect to the taxonomy of \citet{van2020survey}, generally aim to minimise a modified estimator of the risk by including unlabelled data.  Therefore, the optimisation problem generally becomes finding $\hat{\theta}$ that minimises the SSL risk, 
\begin{equation}
    \begin{split}
        \hat{\mathcal{R}}_{SSL}(\theta) &=\frac{1}{n_l}\sum_{i\in\mathcal{L}}L(\theta; x_i,y_i) \color{red}{+\frac{\lambda}{n_u}\sum_{i\in\mathcal{U}}H(\theta; x_i)}. \\
    \end{split}
\end{equation}

where $H$ is a term that does not depend on the labels and $\lambda$ is a scalar weight which balances the labelled and unlabelled terms. In the literature, $H$ can generally be seen as a surrogate of $L$. Indeed, it looks like the intuitive choices of $H$ are equal or equivalent to a form of expectation of $L$ on a distribution given by the model.

\subsection{Some examples of surrogates}
\label{section:surrogates}

\looseness=-1
 A recent overview of the recent SSL techniques has been proposed by \citet{van2020survey}. In this work, we focus on methods suited for a discriminative probabilistic model $p_\theta(y|x)$ that approximates the conditional $p(y|x)$. We categorised methods into two distinct sections, entropy and consistency-based. 

\paragraph{Entropy-based methods} Entropy-based methods aim to minimise a term of entropy of the predictions computed on unlabelled data. Thus, they encourage the model to be confident on unlabelled data, implicitly using the cluster assumption. Entropy-based methods can all be described as an expectation of $L$ under a distribution $\pi_x$ computed at the datapoint $x$:
\begin{equation}
    H(\theta;x) = \mathbb{E}_{\pi_x(\tilde{x},\tilde{y})}[L(\theta;\tilde{x},\tilde{y})].
    \label{eq_expectation}
\end{equation}
For instance, \citet{grandvalet2004semi} simply use the Shannon entropy as $H(\theta; x) $ which can be rewritten as equation \eqref{eq_expectation} with $\pi_x(\tilde{x},\tilde{y}) = \delta_{x}(\tilde{x})p_\theta(\tilde{y}|\tilde{x})$, where $\delta_x$ is the dirac distribution in $x$. Also, pseudo-label methods, which consist in picking the class with the maximum predicted probability as a pseudo-label for the unlabelled data \citep{scudder1965probability}, can also be described as Equation \ref{eq_expectation}. See Appendix \ref{section:surrogates_appendix} for complete description of the entropy-based literature \citep{berthelot2019mixmatch, berthelot2019remixmatch, xie2019unsupervised, sohn2020fixmatch, rizve2021defense}  and further details. 
\vspace{-5pt}
\paragraph{Consistency-based methods}
Another range of SSL methods minimises a consistency objective that encourages invariant prediction for perturbations either on the data or on the model in order to enforce stability on model predictions. These methods rely on the smoothness assumption. In this category, we cite $\Pi$-model \citep{sajjadi2016regularization}, temporal ensembling \citep{laine2016temporal}, Mean-teacher \citep{tarvainen2017mean},  virtual adversarial training (VAT, \citealp{miyato2018virtual})  and interpolation consistent training (ICT, \citealp{verma2019interpolation}).  These objectives $H$ are equivalent to an expectation of $L$ (see Appendix \ref{section:surrogates_appendix}).
The general form of the unsupervised objective can be written as 
\begin{equation}
    C_1\mathbb{E}_{\pi_x(\tilde{x},\tilde{y})}[L(\theta;\tilde{x},\tilde{y})] \leq
    H(\theta;x) = \textbf{Div}( f_{\hat{\theta}}(x,.), \textit{pert}(f_\theta(x,.)) \leq C_2\mathbb{E}_{\pi_x(\tilde{x},\tilde{y})}[L(\theta;\tilde{x},\tilde{y})],
\end{equation}
where $f_{\hat{\theta}}$ is the predictions of the model, the $\textbf{Div}$ is a non-negative function that measures the divergence between two distributions, $\hat{\theta}$ is a fixed copy of the current parameter $\theta$ (the gradient is not propagated through $\hat{\theta}$), \textit{pert} is a perturbation applied to the model or the data
and $0\leq C_1 \leq C_2$.

Previous works also remarked that $H$ is an expectation of $L$ for entropy-minimisation and pseudo-label \citep{zhu2022the, aminian2022information}. We describe a more general framework covering further methods and provide with our theory an intuition on the choice of $H$.
\looseness=-1

\subsection{Safe semi-supervised learning}

\label{section:theoretical_guarantees}
The main risk of SSL is the potential degradation caused by the introduction of unlabelled data when distributional assumptions are not satisfied \citep{singh2008unlabeled, scholkopf2012causal, li2014towards}, specifically in settings where the MCAR assumption does not hold anymore \citep{oliver2018realistic,guo2020safe}. Additionally, in \citet{zhu2022the}, the authors show disparate impacts of pseudo-labelling on the different sub-classes of the population. As remarked by \citet{oliver2018realistic}, SSL performances are enabled by leveraging large validation sets which is not suited for real-world applications. To mitigate these problems, previous works introduced the notion \textit{safe} semi-supervised learning for techniques which never reduce learning performance by introducing unlabelled data \citep{li2014towards, kawakita2014safe, li2016towards, gan2017dual, trapp2017safe, guo2020safe}.  While the spirit of the definition of safe given by these works is the same, there are different ways of formalising it. To be more precise, we listed in Appendix \ref{section:definition_safe} theoretical guarantees given by these different works. In our work, we propose to call \emph{safe} an SSL algorithm that has theoretical guarantees that are similar to or stronger than those of the complete case baseline.

Even though the methods presented in Section \ref{section:surrogates} produce good performances in a variety of SSL benchmarks, they generally do not benefit from theoretical guarantees, even elementary. Moreover, \citet{scholkopf2012causal} identify settings on the causal relationship between the features $x$ and the target $y$ where SSL may systematically fail, even if classic SSL assumptions hold. Our example of Figure \ref{fig:toy} also shows that classic SSL may fail to generalise in a very benign setting with a large number of labelled data. Presented methods minimise a biased version of the risk under the MCAR assumption and therefore classical learning theory cannot be applied anymore, as we argue more precisely in Appendix \ref{section:theoretical_guarantees_appendix}. Learning over a biased estimate of the risk is not necessarily unsafe but it is difficult to provide theoretical guarantees on such methods even if some works try to do so with strong assumptions on the data distribution (\citealt[Sections  4 and 5]{mey2022improved}). Additionally, the choice of $H$ can be confusing as seen in the literature. For instance, \citet{grandvalet2004semi} and \citet{Tommi03oninformation} perform respectively entropy and mutual information \textit{minimisation} whereas \citet{pereyra2017regularizing} and \citet{krause2010discriminative} perform \textit{maximisation} of the same quantities.
\looseness=-1

Some other SSL methods have theoretical guarantees, unfortunately, so far these methods come with either strong assumptions or important computational burdens. \citet{li2014towards} introduced a safe semi-supervised SVM and showed that the accuracy of their method is never worse than SVMs trained with only labelled data with the assumption that the true model is accessible. However, if the distributional assumptions are not satisfied, no improvement or degeneration is expected.  \citet{sakai2017semi} proposed an unbiased estimate of the risk for binary classification by including unlabelled data. The key idea is to use unlabelled data to better evaluate on the one hand the risk of positive class samples and on the other the risk of negative samples. They provided theoretical guarantees on its variance and a generalisation error bound. The method is designed only for binary classification and has not been tested in a deep-learning setting. It has been extended to ordinal regression in follow-up work \citep{tsuchiya2019semi}. In the context of kernel machines, \citet{liu2020kernel} used an unbiased estimate of risk, like ours, for a specific choice of $H$. \citet{guo2020safe} proposed $DS^3L$, a safe method that needs to approximately solve a bi-level optimisation problem. In particular, the method is designed for a different setting, not under the MCAR assumption, where there is a class mismatch between labelled and unlabelled data. The resolution of the optimisation problem provides a solution not worse than the complete case but comes with approximations. They provide a generalisation error bound. \citet{sokolovska2008asymptotics} proposed a method with asymptotic guarantees using strong assumptions such that the feature space is finite and the marginal probability distribution of $x$ is fully known. \citet{fox2014unbiased} proposed an unbiased estimator in the generative setting applicable to a large range of models and they prove that this estimator has a lower variance than the one of the complete case.

\section{DeSSL: Unbiased semi-supervised learning}

\looseness=-1
To overcome the issues introduced by the second term in the approximation of the risk for the semi-supervised learning approach, we propose DeSSL, an unbiased version of the SSL estimator using labelled data to annul the bias. The idea here is to retrieve the properties of classical learning theory. Fortunately, we will see that the proposed method can eventually have better properties than the complete case, in particular with regard to the variance of the estimate. The proposed DeSSL objective is 
\begin{equation}
        \hat{\mathcal{R}}_{DeSSL}(\theta) = \frac{1}{n_l}\sum_{i\in\mathcal{L}}L(\theta; x_i,y_i)  \color{red}{+ \frac{\lambda}{n_u}\sum_{i\in\mathcal{U}}H(\theta; x_i)}  \color{blue}{- \frac{\lambda}{n_l}\sum_{i\in\mathcal{L}}H(\theta; x_i)}.
\end{equation}
Under the MCAR assumption, this estimator is unbiased for any value of the parameter $\lambda$. For proof of this result see Appendix \ref{appendix:unbias}. We prove the optimality of debiasing with the labelled dataset in  Appendix \ref{whydebiasing?}. 
\looseness=-1
Intuitively, for entropy-based methods, $H$ should be applied only on unlabelled data to enforce the confidence of the model only on unlabelled datapoints. Whereas, for consistency-based methods, $H$ can be applied to any subset of data points. Our theory and proposed method remain the same whether $H$ is applied to all the available data or not (see Appendix \ref{section:alldata}).

\subsection{Does the DeSSL risk estimator make sense?}
\label{section:doesitmakesens?}
The most intuitive interpretation is that by debiasing the risk estimator, we get back to the basics of learning theory.
This way of debiasing is closely related to the method of control variates  \citep[Chapter 8]{mcbook} which is a common variance reduction technique. The idea is to add an additional term to a Monte-Carlo estimator with a null expectation in order to reduce the variance of the estimator without modifying the expectation. Here, DeSSL can also be interpreted as a control variate on the risk's gradient itself and should improve the optimisation scheme. This idea is close to the optimisation schemes introduced by \citet{johnson2013SVRG} and \citet{defazio2014saga} which reduce the variance of the gradients' estimate to improve optimisation performance. As a matter of fact, we study the gradient's variance reduction of DeSSL in Appendix \ref{theroem_validation}.

\looseness=-1
Another interesting way to interpret DeSSL is as a constrained optimisation problem. Indeed, minimising $\hat{\mathcal{R}}_{DeSSL}$ is equivalent to minimising the Lagrangian of the following optimisation problem:
\begin{equation}
    \begin{aligned}
        \min_{\theta} \quad & \hat{\mathcal{R}}_{CC}(\theta)\\
        \textrm{s.t.} \quad & \frac{1}{n_u}\sum_{i\in\mathcal{U}}H(\theta;x_i)=\frac{1}{n_l}\sum_{i\in\mathcal{L}} H(\theta;x_i).
    \end{aligned}
\end{equation}

The idea of this optimisation problem is to minimise the complete case risk estimator by assessing that some properties represented by $H$ are on average equal for the labelled data and the unlabelled data. For example, if we consider entropy-minimisation, this program encourages the model to have the same confidence on the unlabelled examples as on the labelled ones.

The debiasing term of our objective will penalise the confidence of the model on the labelled data. \citet{pereyra2017regularizing} show that penalising the entropy in a supervised context for supervised models improves on the state-of-the-art on common benchmarks. This comforts us in the idea of debiasing using labelled data in the case of entropy-minimisation. Moreover, the debiasing term in pseudo-label is similar to plausibility inference  \citep{barndorff1976plausibility}.
Intuitively we understand the benefits of debiasing the estimator with labelled data to penalise the confidence of the model on these datapoints. But the debiasing can be performed on any subset of the training data (labelled or unlabelled). However, in regards to the variance of the estimator, we can prove that debiasing with only the labelled data or the whole dataset are both optimal and equivalent (see  Appendix \ref{whydebiasing?}). 

\looseness=-1
Our objective also resembles doubly-robust risk estimates used for SSL in the context of kernel machines by \citet{liu2020kernel} and for deep learning by \citet{hu2022on}. In both cases, their focus is quite different, as they consider weaker conditions than MCAR, but very specific choices of $H$.

\subsection{Is \texorpdfstring{$\hat{\mathcal{R}}_{DeSSL}(\theta)$}{ro}  an accurate risk estimate?}

Because of the connections between our debiased estimate and variance reduction techniques, we have a natural interest in the variance of the estimate. Having a lower-variance estimate of the risk would mean estimating it more accurately, leading to better models. Similarly to traditional control variates \citep{mcbook}, the variance can be computed, and optimised in $\lambda$:

\begin{theorem}
\label{th:variance}

The function $ \lambda \mapsto \mathbb{V}(\hat{\mathcal{R}}_{DeSSL}(\theta)|r)$ reaches its minimum for:
    \begin{equation}
        \lambda_{opt} = \frac{n_u}{n}\frac{\mathrm{Cov}(L(\theta;x, y), H(\theta;x))}{\mathbb{V}( H(\theta;x))},
    \end{equation}
    and at $\lambda_{opt}$:
    \begin{equation}
        \begin{split}
            \small 
            \mathbb{V}(\hat{\mathcal{R}}_{DeSSL}(\theta) |r)_{\lambda_{opt}}  = \left(1-\frac{n_u}{n}\rho_{L,H}^2\right)\mathbb{V}(\hat{\mathcal{R}}_{CC}(\theta)|r) \leq \mathbb{V}(\hat{\mathcal{R}}_{CC}(\theta)|r),
        \end{split}
    \end{equation}
    
    where $\rho_{L,H} = \mathrm{Corr}(L(\theta;x, y), H(\theta;x))$.
   
\end{theorem}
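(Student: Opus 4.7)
The plan is to reduce the claim to a standard control-variate variance computation. Under the MCAR assumption, I would first note that the $n_l$ labelled points $(x_i, y_i)$ are i.i.d.\ from $p(x,y)$, the $n_u$ unlabelled points $x_j$ are i.i.d.\ from $p(x)$, and the two subsamples are mutually independent. Letting $A = \frac{1}{n_l}\sum_{i=1}^{n_l} L(\theta; x_i, y_i)$, $B_l = \frac{1}{n_l}\sum_{i=1}^{n_l} H(\theta; x_i)$ and $B_u = \frac{1}{n_u}\sum_{j=1}^{n_u} H(\theta; x_j)$, the estimator is $\hat{\mathcal{R}}_{DeSSL}(\theta) = A + \lambda B_u - \lambda B_l$, and since $B_u$ is independent of $(A, B_l)$ I would expand
\[\mathbb{V}(\hat{\mathcal{R}}_{DeSSL}(\theta)) = \mathbb{V}(A) - 2\lambda\,\mathrm{Cov}(A, B_l) + \lambda^2 \mathbb{V}(B_l) + \lambda^2 \mathbb{V}(B_u).\]

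Next, I would evaluate each term using i.i.d.\ averaging: $\mathbb{V}(A) = \mathbb{V}(L)/n_l$, $\mathbb{V}(B_l) = \mathbb{V}(H)/n_l$, $\mathbb{V}(B_u) = \mathbb{V}(H)/n_u$, and $\mathrm{Cov}(A, B_l) = \mathrm{Cov}(L,H)/n_l$, where all population-level variances and covariances are with respect to $p(x,y)$. Using $1/n_l + 1/n_u = n/(n_l n_u)$, this gives the convex quadratic in $\lambda$
\[\mathbb{V}(\hat{\mathcal{R}}_{DeSSL}(\theta)) = \frac{\mathbb{V}(L)}{n_l} - \frac{2\lambda}{n_l}\mathrm{Cov}(L,H) + \lambda^2 \mathbb{V}(H)\,\frac{n}{n_l n_u}.\]

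I would then differentiate with respect to $\lambda$ and set the derivative to zero, which immediately yields $\lambda_{opt} = \frac{n_u}{n}\cdot\frac{\mathrm{Cov}(L,H)}{\mathbb{V}(H)}$, matching the statement. Substituting $\lambda_{opt}$ back, the two $\lambda$-dependent terms collapse to $-\frac{n_u}{n_l n}\cdot\frac{\mathrm{Cov}(L,H)^2}{\mathbb{V}(H)}$, giving
\[\mathbb{V}(\hat{\mathcal{R}}_{DeSSL}(\theta))\big|_{\lambda_{opt}} = \frac{1}{n_l}\left(\mathbb{V}(L) - \frac{n_u}{n}\cdot\frac{\mathrm{Cov}(L,H)^2}{\mathbb{V}(H)}\right).\]
Identifying $\mathbb{V}(L)/n_l$ with $\mathbb{V}(\hat{\mathcal{R}}_{CC}(\theta))$ (under MCAR the complete-case estimator is an i.i.d.\ sample mean of $L$ over $n_l$ labelled points) and recognising $\mathrm{Cov}(L,H)^2/(\mathbb{V}(L)\mathbb{V}(H)) = \rho_{L,H}^2$ yields the stated equality; the inequality then follows from $\rho_{L,H}^2 \in [0,1]$ and $n_u/n \in [0,1]$.

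I do not expect a substantive obstacle here: the whole derivation is a classical control-variate variance calculation applied to the debiased estimator, and the optimal $\lambda$ mirrors the well-known optimal coefficient from Monte Carlo theory. The only careful point is the vanishing of the cross-covariance between $B_u$ and $(A, B_l)$, which I would justify using the independence of labelled and unlabelled subsamples implied by MCAR (either by conditioning on $n_l, n_u$, or by viewing the two pools as independently drawn from $p(x,y)$ and $p(x)$). Beyond that the argument is purely bookkeeping.
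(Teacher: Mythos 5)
Your proof is correct and follows essentially the same route as the paper's: both condition on the missingness pattern (fixed $n_l,n_u$), expand the variance of the debiased estimator into the same quadratic $\frac{\mathbb{V}(L)}{n_l} - \frac{2\lambda}{n_l}\mathrm{Cov}(L,H) + \lambda^2\frac{n}{n_l n_u}\mathbb{V}(H)$, and minimise in $\lambda$. The only cosmetic difference is that the paper writes the estimator as a single sum over all $n$ points with indicators $r_i$ and uses $r_i^2=r_i$, $r_i(1-r_i)=0$, whereas you group the terms into three sample means and invoke independence of the labelled and unlabelled pools; the bookkeeping is equivalent.
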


Additionally, we have a variance reduction regime $\mathbb{V}(\hat{\mathcal{R}}_{DeSSL}(\theta)|r )\leq \mathbb{V}(\hat{\mathcal{R}}_{CC}(\theta) |r)$ for all $\lambda$ between $0$ and $2\lambda_{opt}$. A proof of this theorem is available as Appendix \ref{section:variance}. This theorem provides a formal justification to the heuristic idea that $H$ should be a surrogate of $L$. Indeed, DeSSL is a more accurate risk estimate when $H$ is strongly positively correlated with $L$, which is likely to be the case when $H$ is equal or equivalent to an expectation of $L$. Then, choosing $\lambda$ positive is a coherent choice. We demonstrate in Appendix \ref{section:variance} that $L$ and $H$ are positively correlated when $L$ is the negative likelihood and $H$ is the entropy. Finally, we experimentally validate this theorem and the unbiasedness of our estimator in Appendix \ref{theroem_validation}. Other SSL methods have variance reduction guarantees, see \citet{fox2014unbiased} and \citet{sakai2017semi}. In a purely supervised context, \citet{chen2020group} show that the effectiveness of data augmentation techniques lies partially in the variance reduction of the risk estimate. A natural application of this theorem would be to tune $\lambda$ automatically by estimating $\lambda_{opt}$. In our case however, the estimation of $\mathrm{Cov}(L(\theta;x, y), H(\theta;x))$ with few labels led to unsatisfactory results. However, we estimate it more accurately using the test set (which is of course impossible in practice) on different datasets and methods to provide intuition on the order of $\lambda_{opt}$ and the range of the variance reduction regime in Appendix \ref{section:lambda_opt_experiments}.

\subsection{Calibration, consistency and asymptotic normality}

The calibration of a model is its capacity of predicting probability estimates that are representative of the true distribution. This property is determinant in real-world applications when we need reliable predictions. In the sense of scoring rules \citep{gneiting2007strictly} theory, we prove DeSSL is as well-calibrated as the complete case. See Theorem \ref{th:calibration} and proof in Appendix \ref{section:calibration}.

We say that $\hat{\theta}$ is consistent if $d(\hat{\theta},\theta^*)\xrightarrow{p} 0 $ when $n\xrightarrow{} \infty$, where $d$ is a distance on $\Theta$. The properties of $\hat{\theta}$ depend on the behaviours of the functions $L$ and $H$. We will thus use the following standard assumptions.

\begin{assumption}
    The minimum $\theta^*$ of $\mathcal{R}$ is well-separated: $        \inf_{\theta : d(\theta^*,\theta)\geq \epsilon} \mathcal{R}(\theta) > \mathcal{R}(\theta^*).    $
    \label{assumption1}
\end{assumption}

\begin{assumption}
    The uniform weak law of large numbers holds for both $L$ and $H$. 
    \label{assumption2}
\end{assumption}

\begin{theorem} 
\label{th:consistency}

Under the MCAR assumption, Assumption \ref{assumption1} and Assumption \ref{assumption2}, $\hat{\theta} = \argmin \hat{\mathcal{R}}_{DeSSL}$ is consistent.
\end{theorem}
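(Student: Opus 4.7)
The plan is to follow the standard M-estimator consistency argument (e.g., van der Vaart, \emph{Asymptotic Statistics}, Theorem~5.7): combine uniform convergence in probability of the objective with the well-separated minimum condition. Concretely, I would first establish that
\begin{equation*}
\sup_{\theta \in \Theta}\bigl|\hat{\mathcal{R}}_{DeSSL}(\theta) - \mathcal{R}(\theta)\bigr| \xrightarrow{p} 0,
\end{equation*}
and then deduce $d(\hat\theta,\theta^*)\xrightarrow{p}0$ from Assumption~\ref{assumption1} by the usual sandwich inequality.

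For the uniform convergence, I would decompose $\hat{\mathcal{R}}_{DeSSL}$ into its three pieces and analyse each separately. Under MCAR, conditional on $n_l$ the labelled points $(x_i,y_i)$ are i.i.d.\ draws from $p(x,y)$, while the unlabelled inputs are i.i.d.\ draws from $p(x)$, and the law of large numbers applied to $r$ ensures that $n_l,n_u\to\infty$ almost surely as $n\to\infty$ (assuming $p(r=1)\in(0,1)$). Then Assumption~\ref{assumption2} gives
\begin{equation*}
\sup_\theta\Bigl|\tfrac{1}{n_l}\textstyle\sum_{i=1}^{n_l}L(\theta;x_i,y_i) - \mathbb{E}[L(\theta;X,Y)]\Bigr|\xrightarrow{p}0,
\end{equation*}
and analogous uniform convergences for $\tfrac{1}{n_u}\sum H(\theta;x_i)$ on the unlabelled set and $\tfrac{1}{n_l}\sum H(\theta;x_i)$ on the labelled set, both to $\mathbb{E}[H(\theta;X)]$ (this is where MCAR is essential: the marginal of $x$ is the same in the labelled and unlabelled populations). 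A triangle inequality then combines the three pieces, the two $\lambda\mathbb{E}[H(\theta;X)]$ limits cancelling exactly, to yield uniform convergence of $\hat{\mathcal{R}}_{DeSSL}(\theta)$ to $\mathcal{R}(\theta)$.

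For the second step, fix $\epsilon>0$. By Assumption~\ref{assumption1} there exists $\eta>0$ with $\inf_{d(\theta,\theta^*)\geq\epsilon}\mathcal{R}(\theta)\geq\mathcal{R}(\theta^*)+\eta$. On the event $\{\sup_\theta|\hat{\mathcal{R}}_{DeSSL}(\theta)-\mathcal{R}(\theta)|<\eta/2\}$ (which has probability tending to $1$ by the previous step), the definition of $\hat\theta$ gives
\begin{equation*}
\mathcal{R}(\hat\theta) \leq \hat{\mathcal{R}}_{DeSSL}(\hat\theta) + \tfrac{\eta}{2} \leq \hat{\mathcal{R}}_{DeSSL}(\theta^*) + \tfrac{\eta}{2} \leq \mathcal{R}(\theta^*) + \eta,
\end{equation*}
so $d(\hat\theta,\theta^*)<\epsilon$. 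Hence $\mathbb{P}(d(\hat\theta,\theta^*)\geq\epsilon)\to 0$.

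The argument is essentially routine once unbiasedness (already proved in Appendix~\ref{appendix:unbias}) and the uniform law for both $L$ and $H$ are in hand; the only genuinely delicate point is the first step, where one must be careful that Assumption~\ref{assumption2} is interpreted correctly for the labelled subsample (whose size $n_l$ is itself random under MCAR). This is handled by conditioning on $(r_1,\dots,r_n)$ and using that $n_l\to\infty$ almost surely, which is the main non-automatic piece of the proof.
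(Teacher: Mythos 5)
Your proposal is correct and follows essentially the same route as the paper: decompose $\hat{\mathcal{R}}_{DeSSL}$ into its three empirical averages, apply the uniform weak law to each (using MCAR both to identify the common limit $\mathbb{E}[H(\theta;X)]$ for the labelled and unlabelled $H$-terms and to couple $n_l,n_u$ with $n$), combine by the triangle inequality so the two $\lambda\mathbb{E}[H]$ limits cancel, and conclude via van der Vaart's Theorem 5.7. The only cosmetic difference is that you spell out the well-separated-minimum sandwich argument explicitly, whereas the paper delegates that step to the cited theorem.
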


For proof of this theorem see Appendix \ref{section:calibration}. This theorem is a simple application of van der Vaart's \citeyearpar{van2000asymptotic} Theorem 5.7 proving the consistency of an M-estimator. 
The result holds for the complete case, with $\lambda = 0$ which proves that the complete case is a solid baseline under the MCAR assumption. 

\textbf{Coupling of $n_l$ and $n_u$ under the MCAR assumption}
Under the MCAR assumption, $n_l$ and $n_u$ are random variables. We have $r\sim \mathcal{B}(\pi)$ (i.e. any $x$ has the probability $\pi$ of being labelled). Then, with $n$ growing to infinity, we have $\frac{n_l}{n}=\frac{n_l}{n_l+n_u}\xrightarrow {}\pi$. Therefore, both $n_l$ and $n_u$ grow to infinity and $\frac{n_l}{n_u}\xrightarrow{}\frac{\pi-1}{\pi}$. This implies $n_u = \mathcal{O}(n_l)$  when $n$ goes to infinity.

Going further, we prove the asymptotic normality of $\hat{\theta}_{DeSSL}$ by simplifying a bit the learning objective. The idea is to replace $n_l$ with $\pi n$ to remove the dependence between samples. However, this modification is equivalent to our objective by modifying $\lambda$. We also show that the asymptotic variance can be optimised with respect to $\lambda$. We define the cross-covariance matrix between random vectors $\nabla L(\theta;x,y)$ and $\nabla H(\theta;x)$  as $K_{\theta}(i,j)=\mathbf{Cov}(\nabla L(\theta;x,y)_i, \nabla H(\theta;x)_j)$.

\begin{theorem}
\label{th:normality}
Suppose $L$ and $H$ are smooth functions in $\mathcal{C}^2(\Theta,\mathbb{R})$. Assume $\mathcal{R}(\theta)$ admits a second-order Taylor expansion at $\theta^*$ with a non-singular second order derivative $V_{\theta^*}$. Under the MCAR assumption, $\hat{\theta}_{DeSSL}$ is asymptotically normal and the trace of the covariance can be minimised. Indeed, $\mathbf{Tr}(\Sigma_{DeSSL})$ reaches its minimum at 
\begin{equation}
        \lambda_{opt} = (1-\pi)\frac{\mathbf{Tr}(V_{\theta^*}^{-1}K_{\theta^*}V_{\theta^*}^{-1})}{\mathbf{Tr}(V_{\theta^*}^{-1}\mathbb{E}\left[\nabla H(\theta^*;x)\nabla H(\theta^*;x)^T\right]V_{\theta^*}^{-1})},
    \end{equation}
and at $\lambda_{opt}$, we have $\mathbf{Tr}(\Sigma_{DeSSL}) -\mathbf{Tr}(\Sigma_{CC}) \leq 0.$
\end{theorem}
See Appendix \ref{section:asymptotic normality} for proof of this result. The Theorem shows there exists a range of $\lambda$ for which DeSSL is a better model's parameters than the complete case. In this context, we have a variance reduction regime on the risk estimate but even more importantly on the parameters estimate.

\subsection{Rademacher complexity and generalisation bounds}

In this section, we prove an upper bound for the generalisation error of DeSSL. To simplify, we use the same modification as for the asymptotic variance. The unbiasedness of $\hat{\mathcal{R}}_{DeSSL}$ can directly be used to derive generalisation bounds based on the Rademacher complexity \citep{bartlett2002rademacher}, defined in our case as
\begin{equation}
    \small
    R_n = \mathbb{E}_{(\varepsilon_i)_{i \leq n}} \left[ \sup_{\theta \in \Theta}  \left( \frac{1}{n\pi}\sum_{i\in\mathcal{L}} \varepsilon_i L(\theta; x_i,y_i)  - \frac{\lambda}{n\pi}\sum_{i\in\mathcal{L}} \varepsilon_i H(\theta; x_i)  + \frac{\lambda}{n(1-\pi)}\sum_{i\in\mathcal{U}} \varepsilon_i H(\theta; x_i) \right) \right],
\end{equation}
where $\varepsilon_i$ are i.i.d. Rademacher variables independent of the data.
In the particular case of $\lambda = 0$, we recover the standard Rademacher complexity of the complete case. 
\looseness=-1
\begin{theorem}
\label{th:rademacher}
    We assume that labels are MCAR and that both $L$ and $H$ are bounded. Then, there exists a constant $\kappa >0$, that depends on $\lambda$, $L$, $H$, and the ratio of observed labels, such that, with probability at least $1- \delta$, for all $\theta \in \Theta$,
    \begin{equation}
        \mathcal{R}(\theta) \leq  \hat{\mathcal{R}}_{DeSSL}(\theta) + 2 R_n + \kappa \sqrt{ \frac{\log (4/\delta)}{n}}.
    \end{equation}
\end{theorem}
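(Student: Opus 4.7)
The plan is to reduce the statement to the textbook Rademacher/McDiarmid argument, with one twist: the estimator $\hat{\mathcal{R}}_{DeSSL}$ is not a clean average of $n$ i.i.d.\ quantities because the normalisers $1/n_l$ and $1/n_u$ are themselves random. I would first rewrite the estimator as
$$\hat{\mathcal{R}}_{DeSSL}(\theta) = \frac{1}{n}\sum_{i=1}^{n} g(\theta; x_i, y_i, r_i), \quad g(\theta; x,y,r) = \frac{rn}{n_l}L(\theta;x,y) + \lambda n\Bigl(\frac{1-r}{n_u} - \frac{r}{n_l}\Bigr)H(\theta;x),$$
and then condition on the labelling pattern $(r_1,\ldots,r_n)$. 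Under MCAR this conditioning leaves the $n_l$ labelled triples i.i.d.\ from $p(x,y)$ and the $n_u$ unlabelled features i.i.d.\ from $p(x)$, independently; by the unbiasedness already established in Appendix~\ref{appendix:unbias} we still have $\mathbb{E}[\hat{\mathcal{R}}_{DeSSL}(\theta) \mid r_1,\ldots,r_n] = \mathcal{R}(\theta)$.

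Next I would apply McDiarmid's bounded-differences inequality conditionally on $(r_i)$ to $\Phi = \sup_{\theta \in \Theta}\bigl(\mathcal{R}(\theta) - \hat{\mathcal{R}}_{DeSSL}(\theta)\bigr)$. Because $L$ and $H$ are bounded, replacing a single labelled sample perturbs $\hat{\mathcal{R}}_{DeSSL}(\theta)$ by at most $c_l = (2\|L\|_\infty + 2|\lambda|\,\|H\|_\infty)/n_l$, and replacing a single unlabelled feature by at most $c_u = 2|\lambda|\,\|H\|_\infty/n_u$. The bounded-differences sum is then of order $1/n_l + 1/n_u$, yielding a deviation term of order $\sqrt{\log(2/\delta)/n}$. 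The conditional expectation $\mathbb{E}[\Phi \mid (r_i)]$ is controlled by the standard ghost-sample symmetrisation-plus-Jensen argument (insert Rademacher variables $\varepsilon_i$ in front of the per-sample $g$), giving $\mathbb{E}[\Phi \mid (r_i)] \leq 2 R_n$ with $R_n$ the Rademacher complexity displayed just above the theorem.

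Finally I would remove the conditioning on $(r_i)$. Writing $n_l = \sum_i r_i$ as a sum of i.i.d.\ Bernoulli$(\pi)$ variables, Hoeffding's inequality guarantees $n_l \geq n\pi/2$ and $n_u \geq n(1-\pi)/2$ with probability at least $1-\delta/2$; on this event the bounded-differences sum is $O(1/n)$ with a constant depending only on $\pi$, $\|L\|_\infty$, and $|\lambda|\,\|H\|_\infty$. A union bound combining this event with the McDiarmid event (each at level $\delta/2$) produces the claimed $\log(4/\delta)$ factor, and all numerical constants are absorbed into $\kappa$. Rearranging gives the stated bound.

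The main obstacle---really the only non-routine point---is precisely the entanglement of the samples through the random weights $1/n_l$, $1/n_u$. Conditioning on the labelling pattern is the key manoeuvre that restores independence and lets the standard bounded-differences and symmetrisation toolbox go through; everything else (the concentration of $n_l$ around $n\pi$, and the unbiasedness inherited from Appendix~\ref{appendix:unbias}) is auxiliary bookkeeping.
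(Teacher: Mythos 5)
Your proof is correct and rests on the same core decomposition as the paper's: rewrite $\hat{\mathcal{R}}_{DeSSL}$ as $\frac{1}{n}\sum_i g(\theta;x_i,y_i,r_i)$ with the reweighted per-sample loss, use the unbiasedness from Appendix~\ref{appendix:unbias}, and run the standard Rademacher generalisation machinery on $g$. The difference is in how the two arguments handle the random normalisers. The paper simply invokes \citet[Theorem 26.5]{shalev2014understanding} with $z=(x,y,r)$ and $m=n$, treating the terms $\ell(h,z_i)$ as i.i.d.\ and bounding $|\ell|$ by a constant $c$ involving $n/n_l$ and $n/n_u$; it does not address the fact that $\ell$ depends on the whole vector $(r_1,\dots,r_n)$ through $n_l$ and $n_u$, nor that $c$ is itself random (this is implicitly swept into the phrase ``depends on the ratio of observed labels'' in the statement). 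You instead condition on the labelling pattern, which genuinely restores independence, re-derive the bound via McDiarmid plus ghost-sample symmetrisation conditionally, and then decondition with a Hoeffding bound on $n_l$ and a union bound. Your route is therefore more rigorous on precisely the point the paper glosses over, at the cost of unpacking the packaged textbook theorem; the paper's route is shorter but leaves the coupling through $n_l,n_u$ unresolved. One small point to keep explicit in your symmetrisation step: for a labelled point the $L$ term and the $-\lambda H$ term must share the same Rademacher variable $\varepsilon_i$, which is indeed how $R_n$ is defined above the theorem.
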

The proof follows \citet[Chapter 26]{shalev2014understanding}, and is available in Appendix \ref{section:rademacher}.

\section{Experiments}
\looseness=-1
We evaluate the performance of DeSSL against different classic methods. In particular, we perform experiments  with varying $\lambda$ on different datasets MNIST, DermaMNIST, CIFAR-10, CIFAR-100 \citep{krizhevsky2009learning} and five small datasets of MedMNIST \citep{medmnistv1, medmnistv2} with a fixed $\lambda$. The results of these experiments are reported below. In our figures, the error bars represent the size of the 95\% confidence interval (CI). Finally, we modified the implementation of Fixmatch \citep{sohn2020fixmatch} and compare it with its debiased version on CIFAR-10, CIFAR-100 and on STL10 \citep{coates2011analysis}. Finally, we show how simple it is to debias an existing implementation, by demonstrating it on the consistency-based models benchmarked by \citep{oliver2018realistic}, namely VAT, $\Pi$-model and MeanTeacher on CIFAR-10 and SVHN \citep{netzer2011reading}. We observe similar performances between the debiased and biased versions for the different methods, both in terms of cross-entropy and accuracy. See Appendix \ref{section:oliver}.


\subsection{Pseudo Label}

We compare PseudoLabel and DePseudoLabel on CIFAR-10 and MNIST. We test the influence of the hyperparameters $\lambda$ and report the accuracy, the cross-entropy and the expected calibration error (ECE, \citealp{guo2017calibration}) at the epoch of best validation accuracy using 10\% of $n_l$ as the validation set.

\textbf{MNIST} is an advantageous dataset for SSL since classes are well-separated. We train a LeNet-like architecture using $n_l=1000$ labelled data on 10 different splits of the training dataset into a labelled and unlabelled set. Models are then evaluated using the standard $10,000$ test samples. Results are reported in Figure \ref{fig:MNIST} and Appendix \ref{section:MNIST}. 
In this example SSL and DeSSL have almost the same accuracy for all $\lambda$, however, DeSSL seems to be always better calibrated. 
\begin{figure}[b]
\begin{minipage}[c]{0.49\linewidth}
\includegraphics[width=0.49\linewidth]{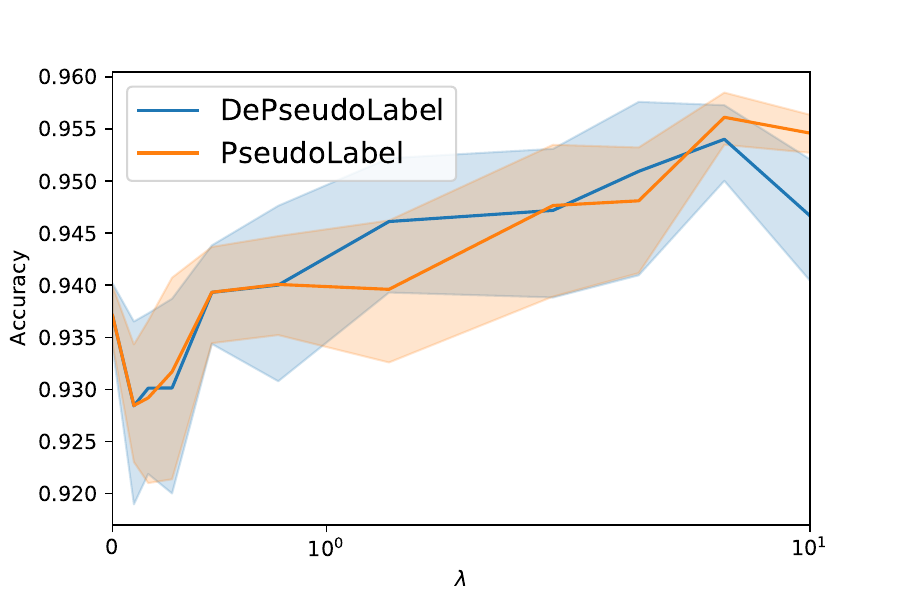}
\includegraphics[width=0.49\linewidth]{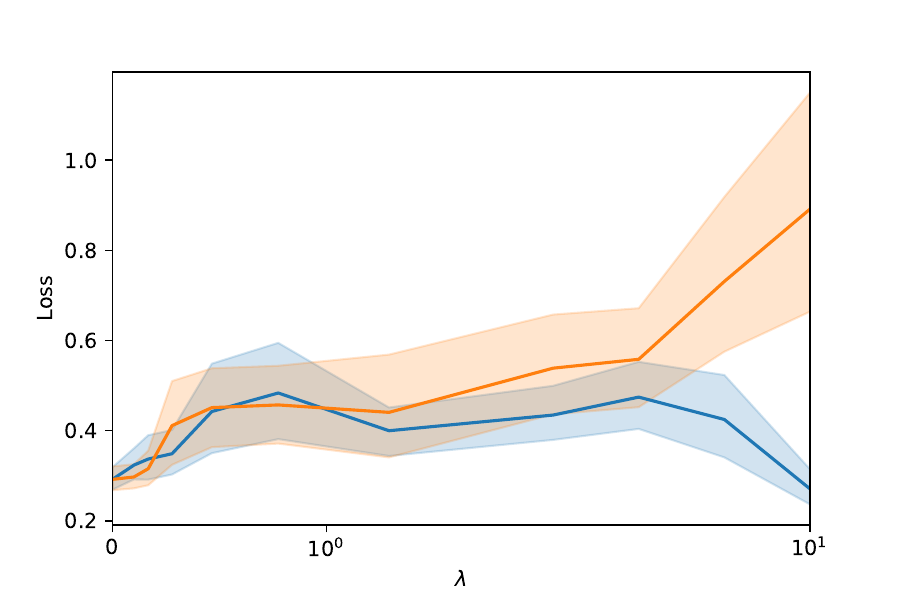}
\vskip -0.1in
\vspace{-5pt}
\caption{\small The influence of $\lambda$ on Pseudo-label and DePseudo-label for a Lenet trained on MNIST with $n_l= 1000$: (Left) Mean test accuracy; (Right) Mean test cross-entropy, with 95\% CI. }
\label{fig:MNIST}
\end{minipage}
\hfill
\begin{minipage}[c]{0.49\linewidth}
\includegraphics[width=0.49\linewidth]{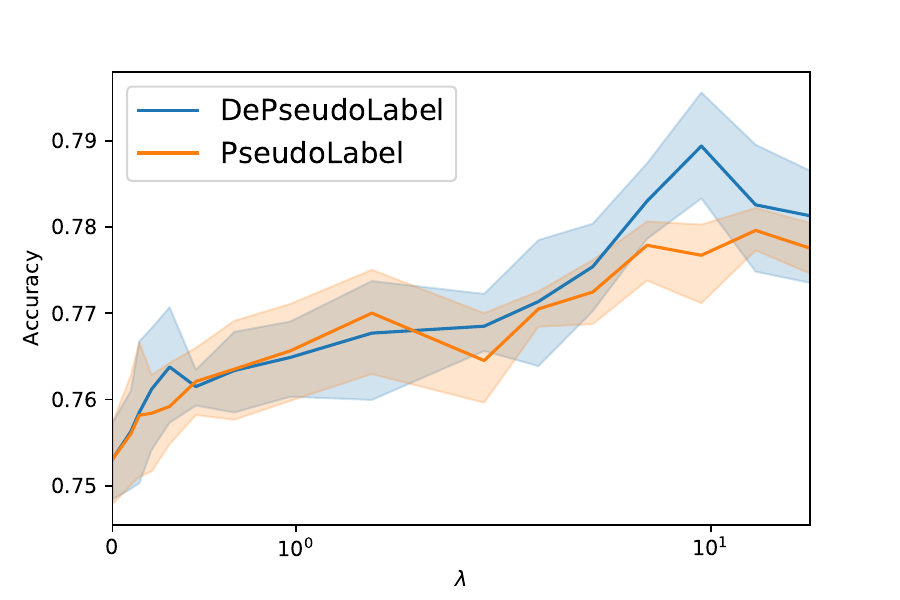}
\includegraphics[width=0.49\linewidth]{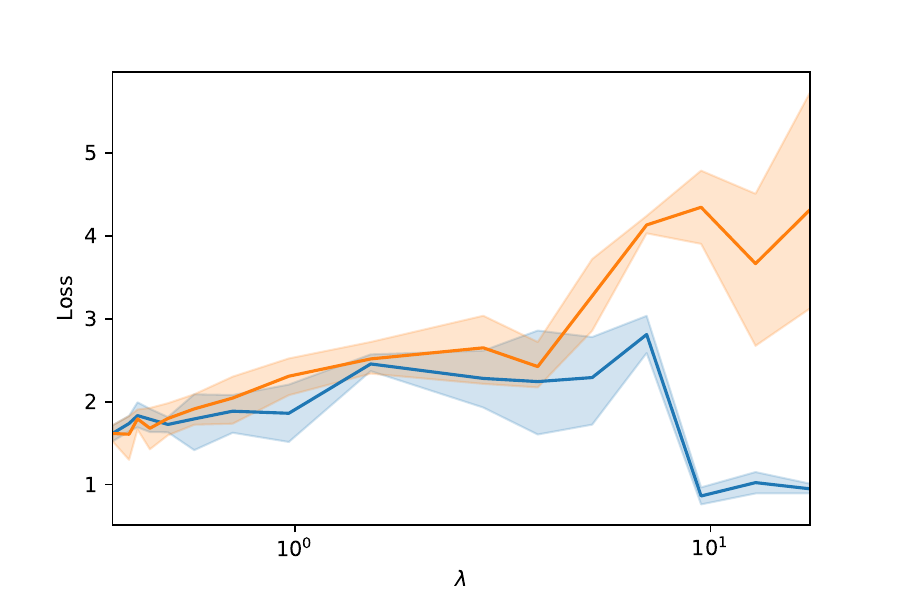}
\vskip -0.1in
\vspace{-5pt}
\caption{\small Influence of $\lambda$ on Pseudo-label and DePseudo-label for a CNN trained on CIFAR with $n_l= 4000$: (Left) Mean test accuracy; (Right) Mean test cross-entropy, with 95\% CI.}
\label{fig:CIFAR}
\end{minipage}%
\end{figure}
\vspace{-5pt}

\looseness=-1

\textbf{CIFAR-10} We train a CNN-13 from \citet{tarvainen2017mean} on 5 different splits. We use $n_l = 4000$ and use the rest of the dataset as unlabelled. 
Models are then evaluated using the standard 
test samples. Results are reported in Figure \ref{fig:CIFAR}. We report the ECE in Appendix  \ref{section:CIFAR}.
The performance of both methods on CIFAR-100 with $nl = 10000$ are reported in Appendix \ref{section:CIFAR}.
We observe DeSSL provides both a better cross-entropy and ECE with the same accuracy for small $\lambda$. For larger $\lambda$, DeSSL performs better in all the reported metrics. We performed a paired Student's t-test to ensure that our results are significant and reported the p-values in Appendix \ref{section:CIFAR}. The p-values indicate that for $\lambda$ close to 10, DeSSL is often significantly better in all the metrics. Moreover, DeSSL for large $\lambda$ provides a better cross-entropy and ECE than the complete case whereas SSL never does.

\textbf{MedMNIST} is a large-scale MNIST-like collection of biomedical images. We selected the five smallest 2D datasets of the collection, for these datasets it is likely that the cluster assumption no longer holds. Results are reported in Appendix \ref{section:MNIST}. We show DePseudoLabel competes with PseudoLabel in terms of accuracy and even success when PseudoLabel's accuracy is less than the complete case. Moreover, DePseudoLabel is always better in terms of cross-entropy, so calibration, whereas PseudoLabel is always worse than the complete case. We focus on the unbalanced dataset DermaMNIST with a majority class and we compare the accuracy per class of the complete case, PseudoLabel and DePseudoLabel, see Figure \ref{fig:Derma}. In this context, PseudoLabel fails to learn correctly the minority classes as the majority class is over-represented in the pseudo labels. DePseudoLabel outperforms both the Complete Case and PseudoLabel in term of balanced accuracy.
\vspace{-10pt}
\begin{figure}[t]
\centering
\includegraphics[width=0.3\linewidth]{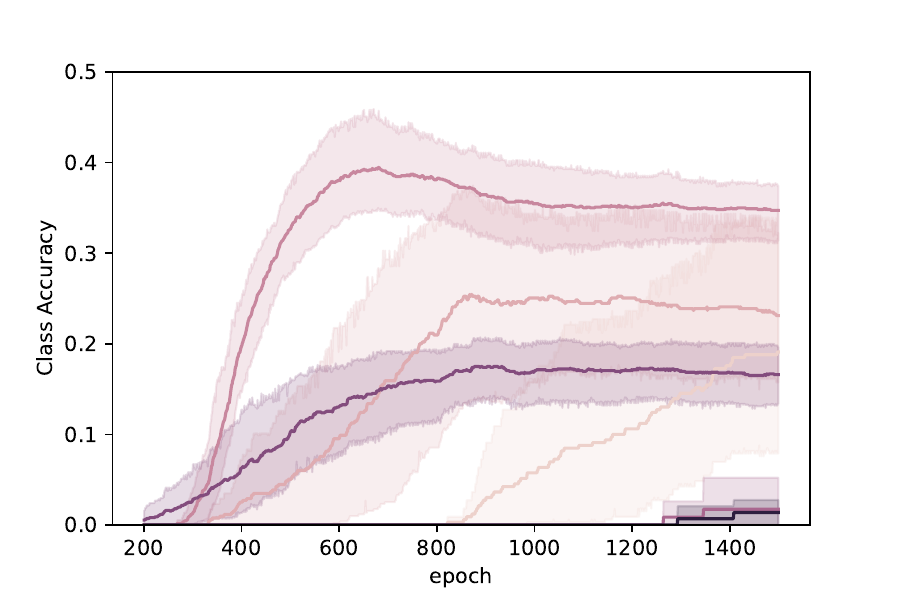}
\includegraphics[width=0.3\linewidth]{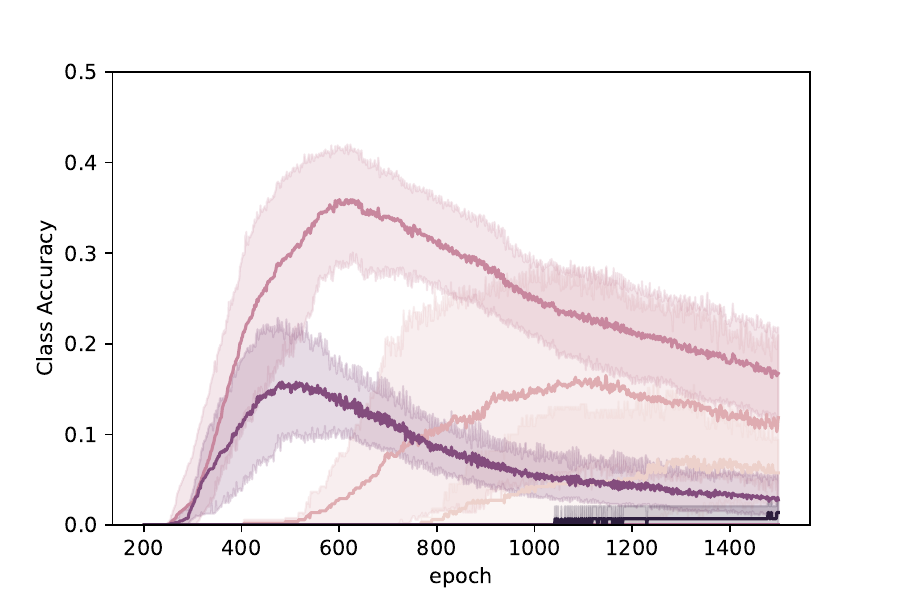}
\includegraphics[width=0.3\linewidth]{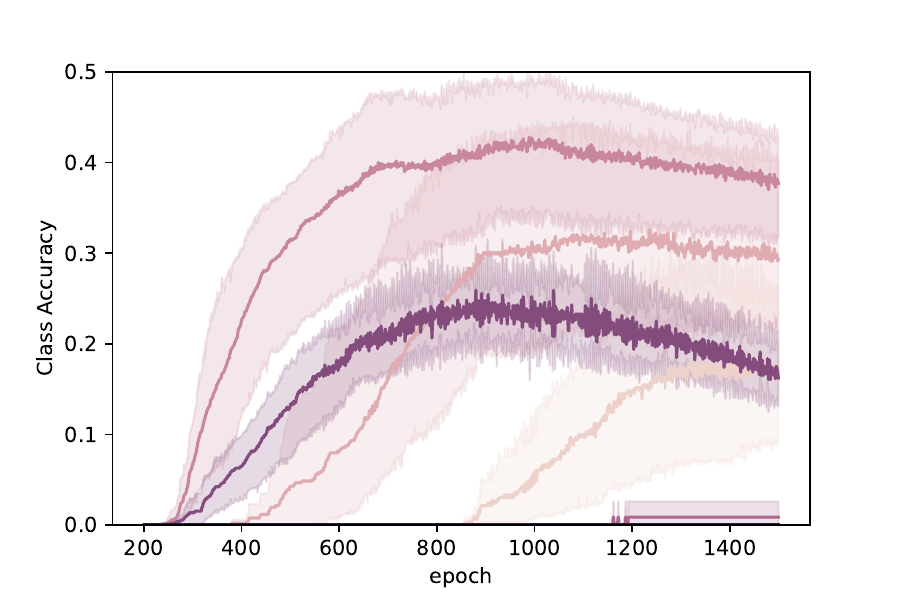}
\vskip -0.1in
\caption{\small Class accuracies (without the majority class) on DermaMNIST trained with $n_l=1000$ labelled data on five folds. (Left) CompleteCase (B-Acc: 26.88 $\pm$ 2.26\%); (Middle) PseudoLabel (B-Acc: 22.03 $\pm$ 1.45\%); (Right) DePseudoLabel  (B-Acc: \textbf{28.84 $\pm$ 1.02\%}), with 95\% CI. }
\label{fig:Derma}
\vspace{-15pt}
\end{figure}

\subsection{Fixmatch \texorpdfstring{\citep{sohn2020fixmatch}}{ro}}
\looseness=-1
The efficiency of the debiasing method lies in the correlation of the labelled objective and the unlabelled objective as shown in Theorem \ref{th:variance}. So, we debiased a version of Fixmatch that includes strong augmentations in the labelled objective. See Appendix \ref{section:Fixmatch} for further details. We also compare our results to the original Fixmatch (Fixmatch$^*$ in Table \ref{tab:comparison}). While the modified version is slightly worse than the original Fixmatch, debiasing it works. For this experiment, we use $n_l=4000$ on 5 different folds for CIFAR-10 and $n_l = 10000$ on one fold for CIFAR-100. First, we report that a strong complete case baseline using data augmentation reaches $87.27\%$ accuracy on CIFAR-10 (resp. $62.62\%$ on CIFAR-100). A paired Student's t-test ensures that our results are significantly better on CIFAR-10 (p-value in Appendix \ref{section:Fixmatch}). Then, we observe that the debiasing method improves both the accuracy and cross-entropy of this modified version of Fixmatch. Inspired by \citet{zhu2022the}, we analyse individual classes on CIFAR-10 and  show that our method improved performance on ``poor'' classes more equally than the biased version. Pseudo-label-based methods with a fixed selection threshold draw more pseudo-labels in the “easy” subpopulations. For instance, in the toy dataset (Figure \ref{fig:toyresults}), PseudoLabel will always draw samples of the class blue in the overlapping area. DeSSL prevents the method to overfit and be overconfident on the “easy” classes with the debiasing term. Indeed, DeFixmatch improves Fixmatch by $1.57\%$ (resp. $1.94\%$) overall but by $4.91\%$ on the worst class (resp. $8.00\%$). We report in Appendix \ref{section:acc_per_class} the accuracy per class and in Appendix \ref{section:stl10} the results on STL10.
\vskip -0.15in
\vspace{-5pt}
\begin{table*}[ht]
    \centering
    \footnotesize
    \caption{Test accuracy, worst class accuracy and cross-entropy of Complete Case, Fixmatch and DeFixmatch on 5 folds of CIFAR-10 and one fold of CIFAR-100. Fixmatch$^*$ is the original version of Fixmatch, that uses strong augmentations only on the unlabelled.}
    \vskip 0.1in
    \label{tab:comparison}
    \resizebox{\textwidth}{!}{%
    \begin{tabular}{lcccccccc}
    \toprule
        & \multicolumn{4}{c}{CIFAR-10 ($n_l=4000$)} & \multicolumn{4}{c}{CIFAR-100 ($n_l=10000$)} \\
        \cmidrule(l{3pt}r{3pt}){1-1} \cmidrule(l{3pt}r{3pt}){2-5}  \cmidrule(l{3pt}r{3pt}){6-9}
         & Complete Case  & Fixmatch & DeFixmatch & Fixmatch$^*$ & Complete Case  & Fixmatch & DeFixmatch & Fixmatch$^*$   \\
        \cmidrule(l{3pt}r{3pt}){1-1} \cmidrule(l{3pt}r{3pt}){2-5}  \cmidrule(l{3pt}r{3pt}){6-9}
        Accuracy & 87.27 $\pm$ 0.25 & 93.87 $\pm$ 0.13  &  95.44 $\pm$ 0.10 & \textbf{95.48$\pm$ 0.17} & 62.62 & 69.28 & 71.22 & \textbf{71.93} \\
        Worst class accuracy & 70.08 $\pm$ 0.93  & 82.25 $\pm$ 2.27 & \textbf{87.16 $\pm$ 0.46}&  86.88 $\pm$ 0.78 & 28.00 & 23.00 & \textbf{31.00} &  29.00 \\
        Cross entropy & 0.60 $\pm$ 0.01  & 0.27 $\pm$ 0.01  & \textbf{0.20 $\pm$ 0.01}& \textbf{0.20 $\pm$ 0.01} & 1.87 & 1.52 & \textbf{1.42} & 1.51 \\
        Brier score & 0.214 $\pm$ 0.005  & 0.101 $\pm$0.003 & 0.076 $\pm$ 0.001 & \textbf{0.074 $\pm$ 0.003} & 0.56 & 0.47 & \textbf{0.44}& \textbf{0.44} \\
        \bottomrule
    \end{tabular}
    }%
\end{table*}
\vspace{-15pt}
\section{Conclusion}
\label{section:conclusion}
\looseness=-1
Motivated by the remarks of \citet{van2020survey} and \citet{oliver2018realistic} on the missingness of theoretical guarantees in SSL, we proposed a simple modification of SSL frameworks. We consider frameworks based on the inclusion of unlabelled data in the computation of the risk estimator and debias them using labelled data. We show theoretically that this debiasing comes with several theoretical guarantees. We demonstrate these theoretical results experimentally on several common SSL datasets. DeSSL shows competitive performance in terms of accuracy compared to its biased version but improves significantly the calibration. There are several future directions open to us. Our experiments suggest that DeSSL will perform as well as SSL when classic SSL assumptions are true (such as the cluster assumption), for instance with MNIST in Figure \ref{fig:MNIST}. Still, for MNIST, DeSSL does not outperform in terms of accuracy but does in terms of cross-entropy and expected calibration error. However, when these assumptions do not hold anymore DeSSL will outperform its SSL counterparts. We showed that $\lambda_{opt}$ exists (Theorem \ref{th:variance}) and therefore our formula provides guidelines for the optimisation of $\lambda$. Finally, an interesting improvement would be to go beyond the MCAR assumption by considering settings with a distribution mismatch between labelled and unlabelled data \citep{guo2020safe, cao2021openworld, hu2022on}.

\section*{Acknowledgements}
\label{section:Acknowledgement}
\looseness=-1
This work has been supported by the French government, through the 3IA Côte d’Azur, Investment in the Future, project managed by the National Research Agency (ANR) with the reference number ANR-19-P3IA-0002.

We thank Jes Frellsen for suggesting the interpretation of DeSSL as a constrained optimisation problem.  We are also grateful to the OPAL infrastructure from Université Côte d'Azur for providing resources and support.

\bibliography{iclr2023_conference}
\bibliographystyle{iclr2023_conference}

\newpage

\appendix

\section{Toy example}
\label{section:toyexample}

We trained a 4-layer neural network (1/20/100/20/1) with ReLU activation function using $25,000$ labelled and $25,000$ unlabelled points drawn from two 1D uniform laws with an overlap. We used $\lambda = 1$ and a confidence threshold for Pseudo-label $\tau = 0.70$. We optimised the model's weights using a stochastic gradient descent (SGD) optimiser with a learning rate of $0.1$.  

\begin{figure}[h]
    \centering
    \includegraphics[width=0.2\columnwidth]{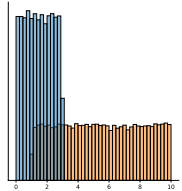}
    
    \caption{Data histogram
    }
    \label{fig:toy_distribution}
\end{figure}

\begin{figure}[h]
    \centering
    \includegraphics[width=0.4\columnwidth]{figures/Results_pl_toyexample.pdf}
    \includegraphics[width=0.4\columnwidth]{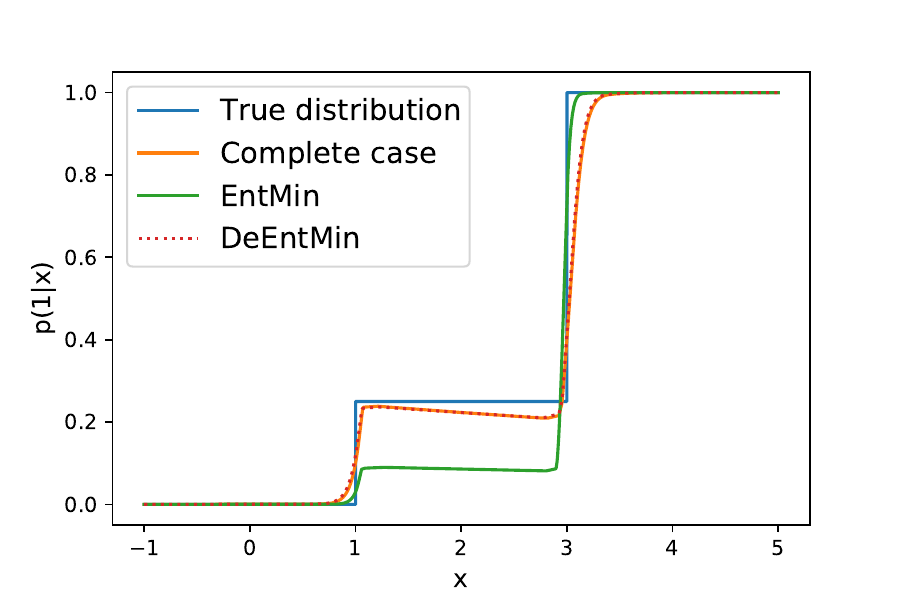}
    \includegraphics[width=0.4\columnwidth]{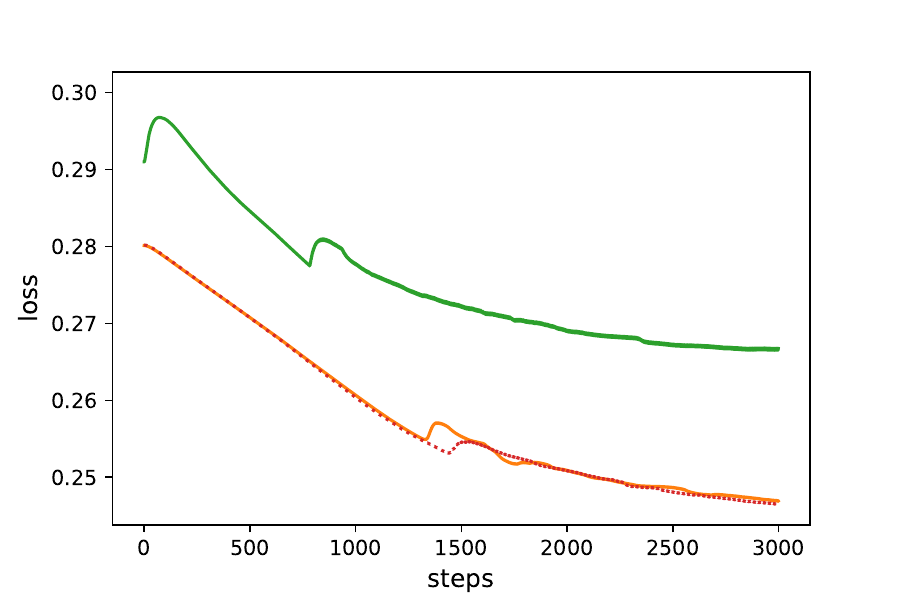}
    \includegraphics[width=0.4\columnwidth]{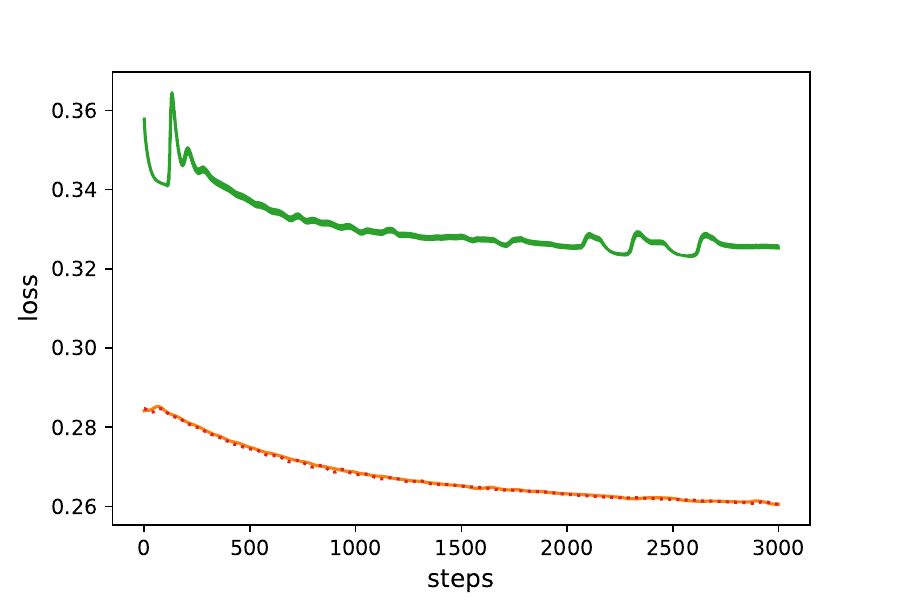}
    \caption{\small 4-layer neural net trained using SSL methods on a 1D dataset drawn from two uniform laws. (Top-left)  Posterior probabilities $p(1|x)$ of the same model trained following either complete case (only labelled data), Pseudo-label or our DePseudo-label. (Top-right) Same for EntMin and DeEntMin (Bottom-left) Training cross-entropy for Pseudo-label and DePseudo-label (Bottom-right) Training cross-entropy for EntMin and DeEntMin} 
    \label{fig:toyresults}
\end{figure}

\newpage

\section{Details on surrogates and more examples}
\label{section:surrogates_appendix}

We provide in this appendix further details on our classification of SSL methods between entropy-based and consistency-based (see Section \ref{section:surrogates}). We detail a general framework for both of these methods' classes. We also show how popular SSL methods are related to our framework.

\subsection{Entropy-based}

We class as entropy-based, methods that aim to minimise a term of entropy such as \citet{grandvalet2004semi} which minimises Shannon's entropy or pseudo-label which is a form of entropy, see Remark \ref{remark:entropy_pl}.
These methods encourage the model to be confident on unlabelled data, implicitly using the cluster assumption. We recall those entropy-based methods can all be described as an expectation of $L$ under a distribution $\pi_x$ computed at the datapoint $x$:
\begin{equation}
    H(\theta;x) = \mathbb{E}_{\pi_x(\tilde{x},\tilde{y})}[L(\theta;\tilde{x},\tilde{y})].
    \label{eq_expectation2}
\end{equation}

\paragraph{Pseudo-label:}
As presented in the core article, the unsupervised objective of pseudo-label can be written as an expectation of $L$ on the distribution $\pi_x(\tilde{x},\tilde{y}) = \delta_{x}(\tilde{x})p_\theta(\tilde{y}|\tilde{x})$. The Pseudo-label unsupervised term can be written as an entropy in term of R\'enyi entropy, $\mathbb{H}_{\infty}(x) = -\max_{y} \log p(y|x)$. For this reason, we class Pseudo-label methods as entropy-based methods. Recently, \citet{lee2013pseudo} encouraged the pseudo-labels method for deep semi-supervised learning. Then, \citet{rizve2021defense} recently improved the pseudo-label selection by introducing an uncertainty-aware mechanism on the confidence of the model concerning the predicted probabilities. \citet{pham2021meta} reaches state-of-the-art on the Imagenet challenge using pseudo-labels on a large dataset of additional images.

\subsection{Pseudo-label and data augmentation}

Recently, several methods based on data augmentation have been proposed and proven to perform well on a large spectrum of SSL tasks. The idea is to have a model resilient to strong data-augmentation of the input \citep{berthelot2019mixmatch, berthelot2019remixmatch, sohn2020fixmatch, xie2019unsupervised, zhang2021flexmatch}. These methods rely both on the cluster assumption and the smoothness assumption and are at the border between entropy-based and consistency-based methods. The idea is to have the same prediction for an input and an augmented version of it. For instance, in \citet{sohn2020fixmatch}, we first compute pseudo-labels predicted using a weakly-augmented version of $x$ (flip-and-shift data augmentation) and then minimise the likelihood with the predictions of the model on a strongly augmented version of $x$. In \citet{xie2019unsupervised}, the method is a little bit different as we minimise the cross entropy between the prediction of the model on  $x$ and the predictions of an augmented version. In both cases, the unsupervised part of the risk estimator can be reformulated as Equation \ref{eq_expectation2}.

\paragraph{Fixmatch:}
In Fixmatch, \citet{sohn2020fixmatch}, the unsupervised objective can be written as:

\begin{equation}
H(\theta; x) = \mathbb{1}[\max_yp_{\hat{\theta}}(y|x_1)>\tau] L(\theta;x_2, \argmax_yp_{\hat{\theta}}(y|x_1))
\end{equation}

where $\hat{\theta}$  is a fixed copy of the current parameters $\theta$ indicating that the gradient is not propagated through it, $x_1$ is a weakly-augmented version of $x$ and $x_2$ a strongly-augmented one.
Therefore, we write $H$ as an expectation of $L$ on the distribution $\pi_x(\tilde{x},\tilde{y}) = \delta_{x_2}(\tilde{x})\delta_{\argmax_yp_{\hat{\theta}}(y|x_1)}(\tilde{y})\mathbb{1}[\max_yp_{\hat{\theta}}(y|x_1)>\tau]$.

\paragraph{UDA:}

In UDA, \citet{xie2019unsupervised}, the unsupervised objective can be written as:

\begin{align}
H(\theta; x) = \sum_y p_{\hat{\theta}}(y|x) L(\theta;x_1,y)
\end{align}
where $\hat{\theta}$  is a fixed copy of the current parameters $\theta$ indicating that the gradient is not propagated through it and $x_1$ is an augmented version of $x$.
Therefore, we write $H$ as an expectation of $L$ on the distribution $\pi_x(\tilde{x},\tilde{y}) = \delta_{x_1}(\tilde{x})p_{\hat{\theta}}(\tilde{y}|\tilde{x})$.

\paragraph{Others:}

Recently, have been proposed in the literature \citet{zhang2021flexmatch} and \citet{rizve2021defense}. The former is an improved version of Fixmatch with a variable threshold $\tau$ with respect to the class and the training stage. The latter introduces a measurement of uncertainty in the pseudo-labelling step to improve the selection. They also introduce negative pseudo-labels to improve the single-label classification.

\subsection{Consistency-based}
Consistency-based methods aim to smooth the decision function of the models or have more stable predictions. These objectives $H$ are not directly a form of expectation of $L$ but are equivalent to an expectation of $L$. For all the following methods we can write the unsupervised objective $H$ such that:
\begin{equation}
    C_1\mathbb{E}_{\pi_x(\tilde{x},\tilde{y})}[L(\theta;\tilde{x},\tilde{y})] \leq
    H(\theta;x) \leq C_2\mathbb{E}_{\pi_x(\tilde{x},\tilde{y})}[L(\theta;\tilde{x},\tilde{y})],
    \label{sim_expectation}
\end{equation}
with $0\leq C_1 \leq C_2$. 

Indeed, consistency-based methods minimise an unsupervised objective that is a divergence between the model predictions and a modified version of the input (data augmentation) or a perturbation of the model. Using the fact that all norms are equivalent in a finite-dimensional space such as the space of the labels, we have the equivalence between a consistency-based $H$ and an expectation of $L$.

\paragraph{VAT}

The virtual adversarial training method proposed by \citep{miyato2018virtual} generates the most impactful perturbation $r_{\textit{adv}}$ to add to $x$. The objective is to train a model robust to input perturbations. This method is closely related to adversarial training introduced by \citet{goodfellow2014generative}.

\begin{align*}
H(\theta; x) = \textbf{Div}( f_{\hat{\theta}}(x,.), f_\theta(x+r_{\textit{adv}},.))
\end{align*}
where the $\textbf{Div}$ is a non-negative function that measures the divergence between two distributions, the cross-entropy or the KL divergence for instance.
If the divergence function is the cross-entropy, it is straightforward to write the unlabelled objective as Equation \ref{eq_expectation}. If the objective function is the KL divergence, we can write the objective as 

\begin{equation}
    H(\theta;x) = \mathbb{E}_{\pi_x(\tilde{x}+r,\tilde{y})}[L(\theta;\tilde{x},\tilde{y})]-\mathbb{E}_{\pi_x(\tilde{x},\tilde{y})}[L(\hat{\theta};\tilde{x},\tilde{y})]
\end{equation}
with $\pi_x(\tilde{x},\tilde{y}) = \delta_{x}(\tilde{x})p_{\hat{\theta}}(y|x)$. Therefore, variation of $H$ with respect to $\theta$ are the same as $\mathbb{E}_{\pi_x(\tilde{x}+r,\tilde{y})}[L(\theta;\tilde{x},\tilde{y})]$.
VAT is also a method between consistency-based and entropy-based methods as long as we use the KL-divergence or the cross-entropy as the measure of divergence. 

\paragraph{Mean-Teacher} 
A different form of pseudo-labelling is the Mean-Teacher approach proposed by \citep{tarvainen2017mean} where pseudo-labels are generated by a teacher model for a student model. The parameters of the student model are updated, while the teacher's are a moving average of the student's parameters from the previous training steps. The idea is to have a more stable pseudo-labelling using the teacher than in the classic Pseudo-label. Final predictions are made by the student model.
A generic form of the unsupervised part of the risk estimator is then 
\begin{align*}
H(\theta; x) = \sum_y (p_\theta(y|x) - p_{\hat{\theta}}(y|x))^2,
\end{align*}
where $\hat{\theta}$ are the fixed parameters of the teacher.


\paragraph{$\Pi$-Model} The $\Pi$-Models are intrinsically stochastic models (for example a model with dropout) encouraged to make consistent predictions through several passes of the same $x$ in the model. The SSL loss is using the stochastic behaviour of the model where the model $f_\theta$ and penalises different predictions for the same $x$ \citep{sajjadi2016regularization}.
Let's note $f_\theta(x,.)_1$ and $f_\theta(x,.)_2$ two passes of $x$ through the model $f_\theta$.
A generic form of the unsupervised part of the risk estimator is then 
\begin{align}
H(\theta; x) = \textbf{Div}( f_\theta(x,.)_1, f_{\hat{\theta}}(x,.)_2),
\end{align}
where \textbf{Div} is a measure of divergence between two distributions (often the Kullback-Leibler divergence).

\paragraph{Temporal ensembling} 
Temporal ensembling \citep{laine2016temporal} is a form of $\Pi$-Model where we compare the current prediction of the model on the input $x$ with an accumulation of the previous passes through the model.
Then, the training is faster as the network is evaluated only once per input on each epoch and the perturbation is expected to be less noisy than for $\Pi$-models.

\paragraph{ICT} 
Interpolation consistency training \citep{verma2019interpolation} is an SSL method based on the mixup operation \citep{zhang2017mixup}. The model trained is then consistent to predictions at interpolations. The unsupervised term of the objective is then computed on two terms:
\begin{align}
H(\theta; x_1, x_2) = \textbf{Div}\left( f_\theta(\alpha x_1 + (1-\alpha)x_2,.), \alpha f_{\hat{\theta}}(x_1,.) + (1-\alpha) f_{\hat{\theta}}(x_2,.)\right),
\end{align}
with $\alpha$ drawn with from a distribution $\mathcal{B}(a,a)$.
With the exact same transformation, we will be able to show that this objective is equivalent to a form of expectation of $L$.

\newpage
\section{Safe semi-supervised learning}
\subsection{On the definition of safe semi-supervised learning}
\label{section:definition_safe}

Previous works introduced the notion \textit{safe} semi-supervised learning for techniques which never reduce learning performance by introducing unlabelled data \citep{li2014towards, kawakita2014safe, li2016towards, trapp2017safe, guo2020safe}. While the spirit of the definition of safe given by these works is the same, there are different ways of formalising it. In the following, we list the theoretical guarantees given on the performance of SSL methods by each of these works:

\begin{itemize}
    \item \citet{li2014towards} prove that the accuracy of their method, namely S4VM, is never worse than the complete case in a transductive setting under the low-density assumption.
    \item \citet{kawakita2014safe} prove that the model learned is asymptotically normal and has a lower asymptotic variance without assumption on the data distribution but only when $n_l \leq n_u$.
    \item \citet{li2016towards} prove that their method builds a model which is better for a chosen measure of performance in the range of top-$k$ precision, $F_\beta$ score or AUC than the complete case under either the cluster assumption, the low-density assumption or the manifold assumption.
    \item \citet{trapp2017safe} prove that the Sum-Product network learned has a lower train loss than the one with the complete case.
    \item \citet{guo2020safe} prove that the empirical risk of the SSL model on the labelled train set is lower or equal than the one of the complete case with no assumption on the data distribution nor the missingness mechanism. Additionally, they prove generalisation error bounds.
\end{itemize}

Some other works proposed theoretical guarantees in the use of their SSL methods without calling the method safe. In regard to the definition and the safe SSL methods listed above, we can categorise the following method as safe. Among them, we cite the following:

\begin{itemize}
    \item \citet{sokolovska2008asymptotics} prove that the model learned is asymptotically normal and has a lower asymptotic variance with the assumptions that the feature space is finite and $n_u \xrightarrow{} \infty$.
    \item \citet{fox2014unbiased} propose an unbiased estimator of the likelihood in a generative setting and prove that the estimator is unbiased and has a lower variance than its complete case baseline with no assumption on the data distribution.
    \item \citet{loog2015contrastive} proposes a method for which the log-likelihood of the SSL model on the labelled train set is lower or equal to the one of the complete case with the gaussian distribution assumption of Linear Discriminant analysis.
    \item \citet{sakai2017semi} prove that their risk estimator is unbiased and has a lower variance than the complete case baseline. Additionally, they prove generalisation error bounds and the bounds decrease with respect to the number of unlabelled data without assumption on the data distribution but with strong assumption on the chosen loss function.
\end{itemize}

\subsection{On the semi-supervised bias}
\label{section:theoretical_guarantees_appendix}

We provide in this appendix a further explanation of the risk induced by the SSL bias as introduced in Section \ref{section:theoretical_guarantees}.
\vskip 0.2in
Presented methods minimise a biased version of the risk under the MCAR assumption and therefore classical learning theory does not apply anymore, 
\begin{equation}
\label{eq:biased_riskest}
\mathbb{E}[\hat{\mathcal{R}}_{SSL}(\theta)] = \mathbb{E}[L(\theta;x,y)] {\color{red}{+\lambda  \mathbb{E}[H(\theta;x,y)]}} \neq \mathcal{R}(\theta).
\end{equation}

Learning over a biased estimate of the risk is not necessarily unsafe but it is difficult to provide theoretical guarantees on such methods even if some works try to do so with strong assumptions on the data distribution (\citealt[Section  4 and 5]{mey2022improved}, \citealt{zhang2021unlabeled}). Previous works proposed generalisation error bounds of SSL methods under strong assumptions on the data distribution or the true model. We refer to the survey by \citet{mey2022improved}. More recently, \citet{wei2021theoretical} proves an upper bound for training deep models with the pseudo-label method under strong assumptions. Under soft assumptions, \citet{aminian2022information} provides an error bound showing that the choice of $H$ is crucial to provide good performances.

Indeed, the unbiased nature of the risk estimate is crucial in the development of learning theory. This bias on the risk estimate may look like one of a regularisation, such as a ridge regularisation. However, SSL and regularisation are intrinsically different for several reasons:
\begin{itemize}
    \item Regularisers have a vanishing impact in the limit of infinite data whereas SSL usually do not in the proposed methods, see Equation \ref{eq:biased_riskest}. A solution would be to choose $\lambda$ with respect of the number of data points and make it vanish when $n$ goes to infinity. However, in most works, the choice of $\lambda$ is independent of the number of $n$ or $n_l$ \citep{oliver2018realistic, sohn2020fixmatch}.
    \item One of the main advantages of regularisation is to turn the learning problem into a  ``more convex'' problem, see \citet[Chapter 13]{shalev2014understanding}. Indeed, ridge regularisation will often turn a convex problem into a strongly-convex problem. However, SSL faces the danger to turn the learning problem into non-convex as previously noted by \citet{sokolovska2008asymptotics}. 
    \item The objective of a regulariser is to bias the risk towards optimum with smooth decision functions whereas entropy-based SSL will lead to sharp decision functions.
    \item Regularisation usually does not depend on the data whereas $H$ does in the SSL framework. 
\end{itemize}
An entropy bias has been actually used by \citet{pereyra2017regularizing} as a regulariser but as entropy \textit{maximisation} which should have the exact opposite effect of the SSL method introduced by \citet{grandvalet2004semi}, the entropy minimisation.

\newpage
\section{Proof that \texorpdfstring{$\hat{\mathcal{R}}_{DeSSL}(\theta)$}{rdessl} is unbiased under MCAR}
\label{appendix:unbias}

\begin{theorem} 
\label{th:unbiasedness}
Under the MCAR assumption,  $\hat{\mathcal{R}}_{DeSSL}(\theta)$ is an unbiased estimator of $\mathcal{R}(\theta)$. 

\end{theorem}

As a consequence of the theorem, under the MCAR assumption,  $\hat{\mathcal{R}}_{CC}(\theta)$ is also unbiased as a special case of  $\hat{\mathcal{R}}_{DeSSL}(\theta)$ for $\lambda = 0$

\paragraph{Proof:}
We first recall that the DeSSL risk estimator $\hat{\mathcal{R}}_{DeSSL}(\theta)$ is defined for any $\lambda$ by
\begin{equation}
\begin{split}
        \hat{\mathcal{R}}_{DeSSL}(\theta) & = \frac{1}{n_l}\sum_{i\in\mathcal{L}}L(\theta; x_i,y_i) + \frac{\lambda}{n_u}\sum_{i\in\mathcal{U}}H(\theta; x_i)  - \frac{\lambda}{n_l}\sum_{i\in\mathcal{L}}H(\theta; x_i)\\
         & =  \sum_{i=1}^n \left(\frac{r_i}{n_l}L(\theta; x_i, y_i) + \lambda\left(\frac{1-r_i}{n_u}-\frac{r_i }{n_l}\right) H(\theta;x_i)\right).\\
\end{split}
\end{equation}
By the law of total expectation:
\begin{equation*}
\begin{split}
    \mathbb{E}[\hat{\mathcal{R}}_{DeSSL}(\theta)] &=\mathbb{E}_r\left[\mathbb{E}_{x,y}[\hat{\mathcal{R}}_{DeSSL}(\theta)|r]\right].  \\ 
\end{split}
\end{equation*}
    
As far as we are under the MCAR assumption, the data $(x,y)$ and the missingness variable $r$ are independent thus, $\mathbb{E}_r\left[\mathbb{E}_{x,y}[\hat{\mathcal{R}}_{DeSSL}(\theta)|r]\right] = \mathbb{E}_r\left[\mathbb{E}_{x,y}[\hat{\mathcal{R}}_{DeSSL}(\theta)]\right]$. 

We focus on $\mathbb{E}_{x,y}[\hat{\mathcal{R}}_{DeSSL}(\theta)]$. First, we replace $\hat{\mathcal{R}}_{DeSSL}(\theta)$ by its definition and then use the linearity of the expectation. Then,
 \begin{align*}
    \mathbb{E}_{x,y}[\hat{\mathcal{R}}_{DeSSL}(\theta)] &= \mathbb{E}\left[\frac{1}{n_l}\sum_{i\in\mathcal{L}}L(\theta; x_i,y_i) + \frac{\lambda}{n_u}\sum_{i\in\mathcal{U}}H(\theta; x_i)  - \frac{\lambda}{n_l}\sum_{i\in\mathcal{L}}H(\theta; x_i)\right]  && \text{by definition}\\ 
    &=\frac{1}{n_l}\sum_{i\in\mathcal{L}}\mathbb{E}\left[L(\theta; x_i,y_i)\right] +\frac{\lambda}{n_u}\sum_{i\in\mathcal{U}} \mathbb{E}\left[H(\theta; x_i)\right]   - \frac{\lambda}{n_l}\sum_{i\in\mathcal{L}}\mathbb{E}\left[ H(\theta; x_i)\right]  && \text{by linearity}\\ 
\end{align*}
The couples $(x_i,y_i)$ are i.i.d. samples following the same distribution. Then, we have
\begin{align*}
    \mathbb{E}_{x,y}[\hat{\mathcal{R}}_{DeSSL}(\theta)|r]& =\frac{1}{n_l}\sum_{i\in\mathcal{L}}\mathbb{E}\left[L(\theta; x,y)\right] +\frac{\lambda}{n_u}\sum_{i\in\mathcal{U}} \mathbb{E}\left[H(\theta; x)\right]   - \frac{\lambda}{n_l}\sum_{i\in\mathcal{L}}\mathbb{E}\left[ H(\theta; x)\right]  && \text{i.i.d samples}\\ 
     & =\mathbb{E}\left[L(\theta; x,y)\right] \\
     & =\mathcal{R}(\theta).
\end{align*}

Finally, we have the results that , $\hat{\mathcal{R}}_{DeSSL}(\theta)$ is unbiased as $\mathcal{R}(\theta)$ is a constant, 
\begin{equation}
    \mathbb{E}[\hat{\mathcal{R}}_{DeSSL}(\theta)] =\mathbb{E}\left[\mathbb{E}_{x,y}[\hat{\mathcal{R}}_{DeSSL}(\theta)|r]\right]  = \mathbb{E}_r\left[\mathcal{R}(\theta)\right] = \mathcal{R}(\theta).
\end{equation}

\newpage
\section{Proof and comments about Theorem  \ref{th:variance}}

\label{section:variance}

\paragraph{Theorem \ref{th:variance}}\emph{
The function $ \lambda \mapsto \mathbb{V}(\hat{\mathcal{R}}_{DeSSL}(\theta)|r)$ reaches its minimum for:
    \begin{equation}
        \lambda_{opt} = \frac{n_u}{n}\frac{\mathrm{Cov}(L(\theta;x, y), H(\theta;x))}{\mathbb{V}( H(\theta;x))}
    \end{equation}
    and 
    \begin{equation}
        \begin{split}
            \small 
            \mathbb{V}(\hat{\mathcal{R}}_{DeSSL}(\theta) |r)|_{\lambda_{opt}} & = \left(1-\frac{n_u}{n}\rho_{L,H}^2\right)\mathbb{V}(\hat{\mathcal{R}}_{CC}(\theta))\\
            & \leq \mathbb{V}(\hat{\mathcal{R}}_{CC}(\theta)),
        \end{split}
    \end{equation}
    where $\rho_{L,H} = \mathrm{Corr}(L(\theta;x, y), H(\theta;x))$.}

\paragraph{Proof:}

For any $\lambda \in \mathbb{R}$,we want to compute the variance:
\begin{equation*}
    \begin{split}
        \mathbb{V}(\hat{\mathcal{R}}_{DeSSL}(\theta)|r).
    \end{split}
\end{equation*}

Under the MCAR assumption, $x$ and $y$ are both jointly independent of $r$. Also, the couples $(x_i,y_i,r_i)$ are independent. Therefore, we have

\begin{align*}
        \mathbb{V}(\hat{\mathcal{R}}_{DeSSL}(\theta)|r) & = \sum_{i=1}^n \mathbb{V}_{(x_i,y_i)\sim p(x,y|r)}\left(\frac{r_i}{n_l}L(\theta, x_i, y_i) + \lambda \left(\frac{1-r_i}{n_u}-\frac{r_i}{n_l}\right) H(\theta,x_i)\right)&& \text{i.i.d samples}\\
        & = \sum_{i=1}^n \mathbb{V}_{(x_i,y_i)\sim p(x,y)}\left(\frac{r_i}{n_l}L(\theta, x_i, y_i) + \lambda \left(\frac{1-r_i}{n_u}-\frac{r_i}{n_l}\right) H(\theta,x_i)\right)&& \text{$(x,y)$ and $r$ independent}\\
\end{align*}
Using the fact that the couples $(x_i,y_i)$ are i.i.d.  samples following the same distribution, we have
\begin{align*}
        \mathbb{V}(\hat{\mathcal{R}}_{DeSSL}(\theta)|r) & = \sum_{i=1}^n \mathbb{V}_{(x,y)\sim p(x,y)}\left(\frac{r_i}{n_l}L(\theta, x, y) + \lambda \left(\frac{1-r_i}{n_u}-\frac{r_i}{n_l}\right) H(\theta,x)\right)\\
        & = \sum_{i=1}^n \frac{r_i^2}{n_l^2}\mathbb{V}(L(\theta, x, y)) + \lambda^2 \left(\frac{1-r_i}{n_u}-\frac{r_i}{n_l}\right)^2 \mathbb{V}(H(\theta,x))&& \text{using covariance}\\
        & \qquad \qquad \qquad + 2\lambda\frac{r_i}{n_l} \left(\frac{1-r_i}{n_u}-\frac{r_i}{n_l}\right)\mathrm{Cov}(L(\theta, x, y), H(\theta,x))\\
\end{align*}
Now, we remark that the variable $r$ is binary and therefore $r^2=r$, $(1-r)^2=1-r$ and $r(1-r)=0$. Using that and simplifying, we have
\begin{align*}
        \mathbb{V}(\hat{\mathcal{R}}_{DeSSL}(\theta)|r)& = \sum_{i=1}^n \frac{r_i}{n_l^2}\mathbb{V}(L(\theta, x, y)) + \lambda^2 \frac{(1-r_i)n_l^2+r_in_u^2}{n_l^2n_u^2} \mathbb{V}(H(\theta,x))\\
        & \qquad \qquad \qquad - 2\lambda\frac{r_i}{n_l^2}\mathrm{Cov}(L(\theta, x, y), H(\theta,x))\\    
\end{align*}
Finally, by summing and simplifying the expression (note that $n_l+n_u=n$), we compute the expression variance,
\begin{align*}
        \mathbb{V}(\hat{\mathcal{R}}_{DeSSL}(\theta)|r)&  = \frac{1}{n_l}\mathbb{V}(L(\theta, x, y)) + \lambda^2 \frac{n}{n_ln_u} \mathbb{V}(H(\theta,x))-  \frac{2\lambda}{n_l}\mathrm{Cov}(L(\theta, x, y), H(\theta,x))\\ 
\end{align*}
So $ \mathbb{V}(\hat{\mathcal{R}}_{DeSSL}(\theta)|r)$ is a quadratic function in $\lambda$ and reaches its minimum for $\lambda_{opt}$ such that:
\begin{equation*}
    \lambda_{opt} = \frac{n_u}{n}\frac{\mathrm{Cov}(L(\theta, x, y), H(\theta,x))}{\mathbb{V}(H(\theta,x))}
\end{equation*}

And, at  $\lambda_{opt}$, the variance of $\hat{\mathcal{R}}_{DeSSL}(\theta)|r)$ becomes 
\begin{align*}
        \mathbb{V}(\hat{\mathcal{R}}_{DeSSL}(\theta)|r) &= \frac{1}{n_l}\mathbb{V}(L(\theta, x, y))\left(1 -  \frac{n_u}{n} \frac{\mathrm{Cov}(L(\theta, x, y), H(\theta,x))^2}{\mathbb{V}(H(\theta,x))\mathbb{V}(L(\theta;x,y))}\right) \\ 
        & = \frac{1}{n_l}\mathbb{V}(L(\theta, x, y))\left(1 -  \frac{n_u}{n}\mathrm{Corr}(L(\theta, x, y), H(\theta,x))^2\right)\\
        & = \left(1 -  \frac{n_u}{n}\rho_{L,H}^2\right)\frac{1}{n_l}\mathbb{V}(L(\theta, x, y))
\end{align*}

\begin{remark}
If $H$ is perfectly correlated with $L$ ($\rho_{L,H}=1$), then the variance of the DeSSL estimator is equal to the variance of the estimator with no missing labels. 
\end{remark}

\begin{remark}\textbf{Is it possible to estimate $\lambda_{opt}$ in practice ?}
The data distribution $p(x,y)$ being unknown, the computation of $\lambda_{opt}$ is not possible directly. Therefore, we need to use an estimator of the covariance $\mathrm{Cov}(L(\theta;x, y), H(\theta;x))$ and the variance $\mathbb{V}( H(\theta;x))$ (See Equation \ref{eq:estimator_lmbdopt}). Also, we have to be careful not to introduce a new bias with the computation of $\lambda_{opt}$, indeed, if we compute it using the training set, $\lambda_{opt}$ becomes dependent on $x$ and $y$ and therefore $\hat{\mathcal{R}}_{DeSSL}(\theta)|r)$ becomes biased. A solution would be to use a validation dataset for its computation. Another approach is to compute it using the splitting method \citep{avramidis1993splitting}. Moreover, the computation of $\lambda_{opt}$ is tiresome and time-consuming in practice as it has to be updated for every different value of $\theta$, so at each gradient step.

\begin{equation}\label{eq:estimator_lmbdopt}
\hat{\lambda}_{opt} = \frac{\frac{1}{n_l}\sum_{i\in\mathcal{L}}(L(\theta;x_i ,y_i)-\bar{L}(\theta))(H(\theta; x_i)-\bar{H}(\theta))}{\frac{1}{n}\sum_{i=1}^n (H(\theta; x_i)-\bar{H}(\theta))^2}
\end{equation}
where $\bar{H}(\theta) = \frac{1}{n}\sum_{i=1}^n H(\theta; x_i)$ and $\bar{L}(\theta) = \frac{1}{n_l}\sum_{i\in\mathcal{L}}L(\theta;x_i ,y_i)$
\end{remark}

\begin{remark}\textbf{About the sign of $\lambda$}
The theorem still has a \emph{quantitative} merit when it comes to choosing $\lambda$, by telling that the sign of $\lambda$ is positive when $H$ and $L$ are positively correlated which will generally be the case with the examples mentioned in the article. For instance, concerning the entropy minimisation technique, the following proposition proves that the log-likelihood is negatively correlated with its entropy and therefore it justifies the choice of $\lambda>0$ in the entropy minimisation. 

\end{remark}

\begin{proposition}
	The log-likelihood of the true distribution $\log p(y|x)$ is negatively correlated with its entropy $ \mathbb{H}_{\tilde{y}} (p(\tilde{y}|x)) = -\mathbb{E}_{\tilde{y}\sim p(.|x)}[\log p(\tilde{y}|x)] $ .
    \begin{equation}
       \mathrm{Cov}(\log p(y|x),\mathbb{H}_{\tilde{y}} (p(\tilde{y}|x)))<0
    \end{equation}
\end{proposition}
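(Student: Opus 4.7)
The plan is to compute the covariance directly by conditioning on $x$ and using the tower property, which makes the role of the entropy transparent. Let me write $h(x) := \mathbb{H}_{\tilde y}(p(\tilde y|x)) = -\mathbb{E}_{\tilde y \sim p(\cdot|x)}[\log p(\tilde y|x)]$, so that $h$ is a deterministic function of $x$ alone. Then the second argument of the covariance depends on $x$ only, while the first argument depends on both $x$ and $y$, and the key identity we will repeatedly use is the definitional one
\begin{equation*}
\mathbb{E}_{y \sim p(\cdot|x)}[\log p(y|x)] = -h(x).
\end{equation*}

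First I would expand
\begin{equation*}
\mathrm{Cov}\bigl(\log p(y|x),\, h(x)\bigr) = \mathbb{E}_{(x,y)}[\log p(y|x)\, h(x)] - \mathbb{E}_{(x,y)}[\log p(y|x)]\,\mathbb{E}_x[h(x)].
\end{equation*}
For the cross term, I apply the tower property and pull $h(x)$ outside the inner expectation, obtaining $\mathbb{E}_x[h(x)\,\mathbb{E}_{y|x}[\log p(y|x)]] = -\mathbb{E}_x[h(x)^2]$. For the second term, the same tower argument gives $\mathbb{E}[\log p(y|x)] = -\mathbb{E}_x[h(x)]$.

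Substituting both identities yields
\begin{equation*}
\mathrm{Cov}\bigl(\log p(y|x),\, h(x)\bigr) = -\mathbb{E}_x[h(x)^2] + \mathbb{E}_x[h(x)]^2 = -\mathrm{Var}_x(h(x)),
\end{equation*}
which is nonpositive. Strict negativity then follows as soon as the posterior entropy $h(x)$ is not almost surely constant in $x$, which is the mild nondegeneracy that should be explicitly noted. The main (really the only) subtlety is this strict-versus-weak inequality: the statement as written asserts strict negativity, so I would add the assumption that $p(y|x)$ is genuinely $x$-dependent in its entropy (equivalently, $h$ is not $p(x)$-almost-surely constant); without such an assumption one can only guarantee $\leq 0$, and the degenerate equality case corresponds to a conditional distribution whose uncertainty is the same at every $x$.
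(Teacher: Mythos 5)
Your proof is correct and is essentially the paper's own argument: both expand the covariance, apply the tower property to reduce everything to conditional expectations of $\log p(y|x)$, and conclude via the inequality $\mathbb{E}_x[h(x)^2] \geq \mathbb{E}_x[h(x)]^2$ (which the paper phrases as Jensen's inequality applied to $\mathbb{E}_{y|x}[\log p(y|x)]^2$). Your version is marginally sharper in that it identifies the covariance exactly as $-\mathrm{Var}_x(h(x))$ rather than only bounding it, and you are right to flag the strict-versus-weak issue: the paper's proof likewise establishes only $\mathrm{Cov} \leq 0$ despite the proposition asserting strict negativity, so the nondegeneracy condition you name (that the conditional entropy is not $p(x)$-almost-surely constant) is genuinely needed and is missing from the paper as well.
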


\begin{proof}

\begin{align*}
       \mathrm{Cov}(\log p(y|x),\mathbb{H}_{\tilde{y}} (p(\tilde{y}|x))) & =  \mathbb{E}_{x,y}[\log p(y|x)\mathbb{H}_{\tilde{y}} (p(\tilde{y}|x))] -  \mathbb{E}_{x,y}[\log p(y|x)]\mathbb{E}_{x}[\mathbb{H}_{\tilde{y}} (p(\tilde{y}|x))] \\
       & =  -\mathbb{E}_{x,y}[\log p(y|x)\mathbb{E}_{\tilde{y}|x}[\log p(\tilde{y}|x)]] + \\
       & \qquad\qquad\qquad\qquad\qquad\mathbb{E}_{x,y}[\log p(y|x)]\mathbb{E}_{x}[\mathbb{E}_{\tilde{y}|x}[\log p(\tilde{y}|x)]]\\
\end{align*}
By the law of total expectation, we have that $\mathbb{E}_{x}[\mathbb{E}_{\tilde{y}|x}[\log p(\tilde{y}|x)]] =\mathbb{E}_{x,\tilde{y}}[\log p(\tilde{y}|x)] $, then 
\begin{align*}
         \mathrm{Cov}(\log p(y|x),\mathbb{H}_{\tilde{y}} (p(\tilde{y}|x)) & =  -\mathbb{E}_{x,y}[\log p(y|x)\mathbb{E}_{\tilde{y}|x}[\log p(\tilde{y}|x)]] +  \mathbb{E}_{x,y}[\log p(y|x)]^2\\
       & = \mathbb{E}_{x,y}[\log p(y|x)]^2-\mathbb{E}_{x,y}[\log p(y|x)\mathbb{E}_{\tilde{y}|x}[\log p(\tilde{y}|x)]]\\
\end{align*}
On the other hand, also with the law of total expectation, $\mathbb{E}_{x,y}[\log p(y|x)\mathbb{E}_{\tilde{y}|x}[\log p(\tilde{y}|x)]]  = \mathbb{E}_{x}[\mathbb{E}_{y|x}[\log p(y|x)]\mathbb{E}_{\tilde{y}|x}[\log p(\tilde{y}|x)]]$, so

\begin{align*}
\mathbb{E}_{x,y}[\log p(y|x)\mathbb{E}_{\tilde{y}|x}[\log p(\tilde{y}|x)]] & = \mathbb{E}_{x}[\mathbb{E}_{y|x}[\log p(y|x)]^2]\\
& \geq  \mathbb{E}_{x}[\mathbb{E}_{y|x}[\log p(y|x)]]^2  && \text{Jensen's inequality}\\
& \geq \mathbb{E}_{x,y}[\log p(y|x)]^2 && \text{total expectation law}\\
\end{align*}

Finally, we have the results,
\begin{equation*}
\begin{split}
       \mathrm{Cov}(\log p(y|x),\mathbb{H}_{\tilde{y}} (p(\tilde{y}|x))) &\leq \mathbb{E}_{x,y}[\log p(y|x)]^2 - \mathbb{E}_{x,y}[\log p(y|x)]^2\\
       &\leq 0
\end{split}
\end{equation*}
\end{proof}

\begin{remark}
\label{remark:entropy_pl}
We can also see the Pseudo-label as a form of entropy. Indeed, modulo the confidence selection on the predicted probability, the Pseudo-label objective is the inverse of the Rényi min-entropy:
\begin{equation*}
    \mathbb{H}_{\infty}(x) = -\max_{y} \log p(y|x)
\end{equation*}
\end{remark}

\subsection{On risk estimation quality}
\label{theroem_validation}

We train CNN-13 on CIFAR-10 using only 4,000 labelled data and then split the test dataset into labelled and unlabelled data to estimate the PseudoLabel (PL) and DePseudoLabel (DePL) risks that we compared to the oracle risk estimate using all the test set. For this experiment, we split 100 times the test set into 40 labelled data and 9960 unlabelled data in order to estimate the variance of the risk estimator. We compute $\lambda_{opt}$ using the entire test set.
This experiment illustrates that DePL is unbiased for any value of $\lambda$ (Figure \ref{fig:validation_variance_theorem} Left) and its variance can be optimised in $\lambda$ and at $\lambda_{opt}$, we have $\mathbb{V}(\hat{\mathcal{R}}_{DePL}(\theta)|r)\leq\mathbb{V}(\hat{\mathcal{R}}_{CC}(\theta)|r)$. (Figure \ref{fig:validation_variance_theorem} Right)

\begin{figure}[H]
    \centering
    \includegraphics[width=0.49\columnwidth]{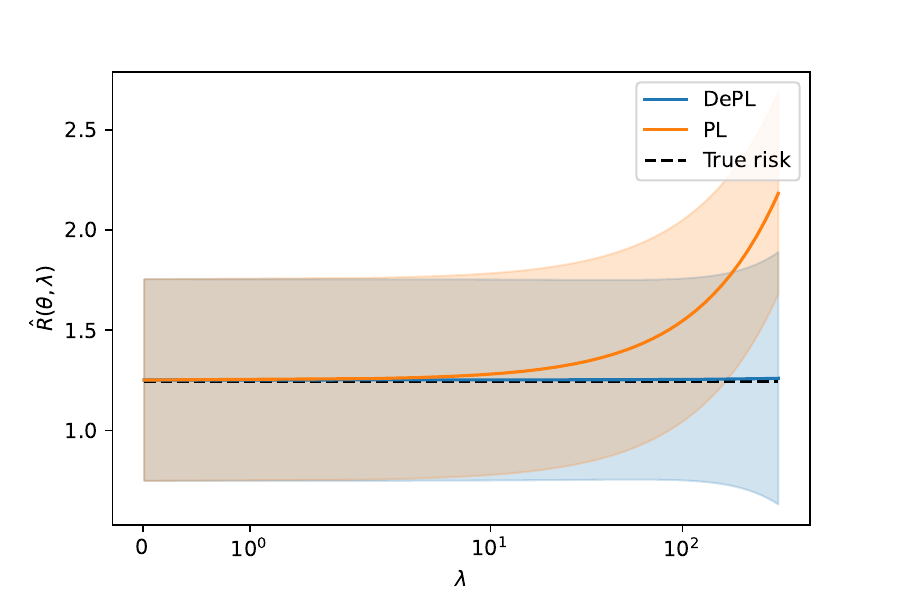}
    \includegraphics[width=0.49\columnwidth]{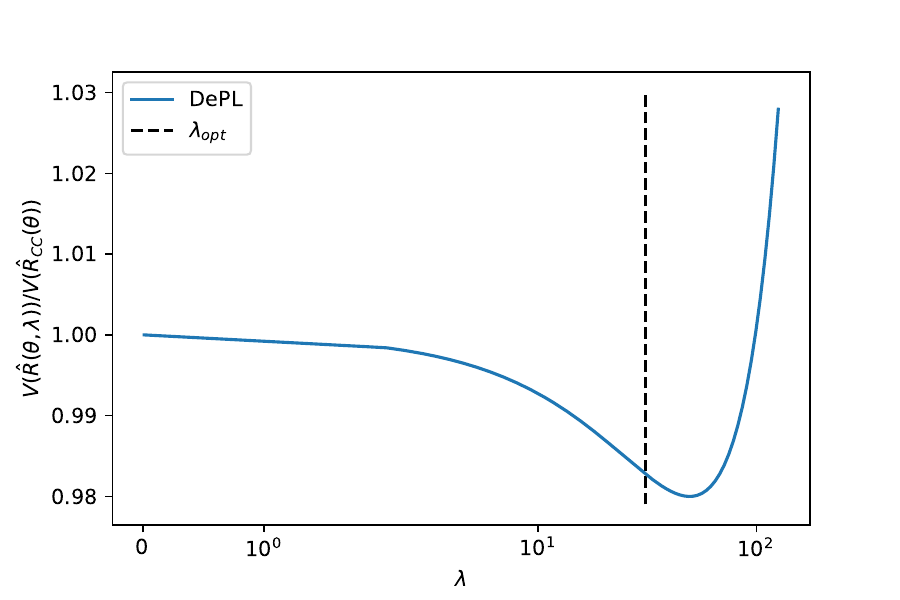}
    \caption{(Left) Risk estimate value for PseudoLabel (PL) and DePseudoLabel (DePL) compared to the true value of the risk. (Right) The influence of $\lambda$ on the ratio $\mathbb{V}(\hat{\mathcal{R}}_{DePL}(\theta)|r)/\mathbb{V}(\hat{\mathcal{R}}_{CC}(\theta)|r)$}
    \label{fig:validation_variance_theorem}
\end{figure}

The gradient of DeSSL is also an unbiased estimator of the risk's gradient, as the Complete Case. We compare the quality of these two estimators under the same setting. We compare the variance of the gradient of DePL risk to the complete case and show that DePL reduces the variance considerably (see Figure \ref{fig:validation_variance_theorem_grad}). Variance reduction techniques applied on the gradients have shown promising results in the amelioration of optimisation algorithms \citep{defazio2014saga, johnson2013SVRG}. 
\begin{figure}[H]
    \centering
    \includegraphics[width=0.5\columnwidth]{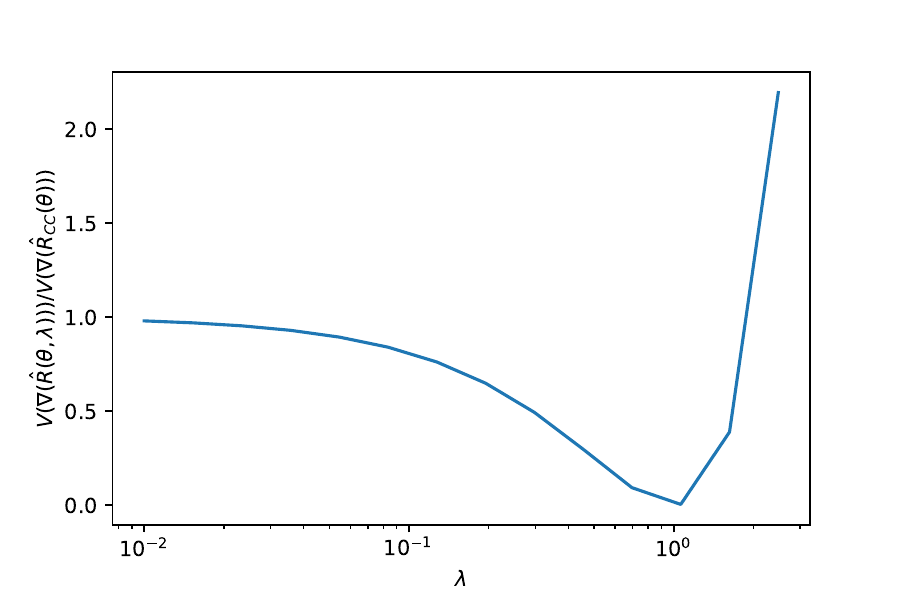}

    \caption{ The influence of $\lambda$ on the ratio $\mathbb{V}(\nabla\hat{\mathcal{R}}_{DePL}(\theta)|r)/\mathbb{V}(\nabla\hat{\mathcal{R}}_{CC}(\theta)|r)$}
    \label{fig:validation_variance_theorem_grad}
\end{figure}

\newpage

\section{Why debiasing with the labelled dataset?}

\label{whydebiasing?}
We remark that the debiasing can be performed with any subset of the training data, labelled and unlabelled. The choice of debiasing only with the labelled data can be explained both intuitively and computationally in regard to the Theorem \ref{th:variance}.
Intuitively, the debiasing term penalises the confidence on the labelled datapoints and then prevents the overfitting on the train dataset. As remarked in section \ref{section:doesitmakesens?}, \citet{pereyra2017regularizing} showed that penalising low entropy models acts as a strong regulariser in supervised settings. This comforts the idea of penalising low entropy on the labelled dataset, i.e. debiaising the entropy minimisation with the labelled dataset. Considering Pseudo-Label-based methods, the objective for the labelled data is to predict the correct labels with moderate confidence. This is also similar to the concept of plausibility inference described by \citet{barndorff1976plausibility}.

In regard to Theorem \ref{th:variance}, we show that the optimum choice of subset for debiaising is either only the labelled data or the whole dataset and both are equivalent.

We consider a subset $\mathcal{A}$ of the training set. We defined $a$ as follow:
$$a_i = \left\{ \begin{array}{ll}
 1/|\mathcal{A}| & \text{if } x_i \in \mathcal{A} \\
 0 & \text{otherwise}
\end{array} \right. .$$

The unbiased estimator is then:
\begin{equation}
        \hat{\mathcal{R}}_{DeSSL, \mathcal{A}}(\theta) = \frac{1}{n_l}\sum_{i\in\mathcal{L}}L(\theta; x_i,y_i)  \color{red}{+ \frac{\lambda}{n_u}\sum_{i\in\mathcal{U}}H(\theta; x_i)}  \color{blue}{- \lambda\sum_{i=1}^{n}a_i H(\theta; x_i)}.
\end{equation}

We compute the variance of this quantity as in the proof of Theorem 3.1 and show that:

\begin{equation}
    \label{equation:variance_A}
    \begin{split}
        \mathbb{V}(\hat{\mathcal{R}}_{DeSSL,\mathcal{A}}(\theta)|r) &= \sum_{i=1}^n \frac{r_i}{n_l^2}\mathbb{V}(L(\theta, x, y)) + \lambda^2\left( \frac{1-r_i}{n_u} - a_i\right)^2 \mathbb{V}(H(\theta,x)) \\&\qquad\qquad\qquad- 2\lambda\frac{r_ia_i}{n_l}\mathrm{Cov}(L(\theta, x, y), H(\theta,x))
    \end{split}
\end{equation}

Suppose that no labelled datapoints are in $\mathcal{A}$. Then, the last term of the variance is null. Hence, having no labelled datapoints in $\mathcal{A}$ leads to a variance increase.
We also remark that debiasing with the entire dataset is equivalent that debiasing with the labelled datapoints. Indeed
\begin{align*}
    \hat{\mathcal{R}}_{DeSSL}(\theta) &= \frac{1}{n_l}\sum_{i\in\mathcal{L}}L(\theta; x_i,y_i) + \frac{\lambda}{n_u}\sum_{i\in\mathcal{U}}H(\theta; x_i)- \frac{1}{n}\sum_{i=1}^{n} H(\theta; x_i)\\
    & = \frac{1}{n_l}\sum_{i\in\mathcal{L}}L(\theta; x_i,y_i) + \frac{\lambda}{n_u}\sum_{i\in\mathcal{U}}H(\theta; x_i)- \frac{\lambda}{n}\sum_{i\in\mathcal{L}} H(\theta; x_i)- \frac{\lambda}{n}\sum_{i\in\mathcal{U}} H(\theta; x_i) \\
    & = \frac{1}{n_l}\sum_{i\in\mathcal{L}}L(\theta; x_i,y_i) + \frac{\lambda n_l}{nn_u}\sum_{i\in\mathcal{U}}H(\theta; x_i)- \frac{\lambda n_l}{nn_l}\sum_{i\in\mathcal{L}} H(\theta; x_i),
\end{align*}
which is equivalent to debiasing with only the labelled dataset by replacing $\lambda$ by $\lambda\frac{n_l}{n}$.

At this point, we can still sample a random subset composed of $l$ labelled datapoints and $u$ unlabelled datapoints. Therefore $a_i=1/(l+u)\mathbf{1}\{x_i\in \mathcal{A}\}$, we show in the following that the optimum choice of the couples $(l,u)$ are $(n_l,0)$ and $(n_l,n_u)$, so only the labelled or the whole dataset.

We sample $l$ labelled and $u$ unlabelled datapoints to debiased the estimator, by simplifying the term in the sum of Equation \ref{equation:variance_A} as follow:
$$\left( \frac{1-r_i}{n_u} - a_i\right)^2 = \left\{ \begin{array}{ll}
 \left(\frac{1}{n_u}-\frac{1}{l+u}\right)^2 & \text{if } x_i \in \mathcal{A} \text{ and } r_i=0 \\
 \frac{1}{(l+u)^2} & \text{if } x_i \in \mathcal{A} \text{ and } r_i=1 \\
 \frac{1}{n_u^2} & \text{if } x_i \notin \mathcal{A} \text{ and } r_i=0 \\
 0 & \text{if } x_i \notin \mathcal{A} \text{ and } r_i=1
\end{array} \right. .$$
Then, by summing the term and simplifying, we get:
\begin{align*}
        \mathbb{V}(\hat{\mathcal{R}}_{DeSSL}(\theta)|r)& =  \frac{1}{n_l}\mathbb{V}(L(\theta, x, y)) + \lambda^2\left[ u \left(\frac{1}{n_u}-\frac{1}{l+u}\right) + \frac{l}{(l+u)^2}+\frac{n_u-u}{n_u^2} \right] \mathbb{V}(H(\theta,x))\\
        & \qquad \qquad \qquad - 2\lambda\frac{l}{n_l(l+u)}\mathrm{Cov}(L(\theta, x, y), H(\theta,x))\\
        & =  \frac{1}{n_l}\mathbb{V}(L(\theta, x, y)) + \lambda^2\frac{n_l}{n_u}\frac{n_u-u+l}{l+u} \mathbb{V}(H(\theta,x))\\
        & \qquad \qquad \qquad  - 2\lambda\frac{l}{n_l(l+u)}\mathrm{Cov}(L(\theta, x, y), H(\theta,x))\\
\end{align*}
We want to minimise $\mathbb{V}(\hat{\mathcal{R}}_{DeSSL}(\theta)|r)$ with respect to $(\lambda, l, u)$. $\mathbb{V}(\hat{\mathcal{R}}_{DeSSL}(\theta)|r)$ reaches is minimum in $\lambda$ at 
$$\lambda_{opt} = \frac{n_u}{n_l}\frac{l}{n_u-u+l}\frac{\mathrm{Cov}(L(\theta, x, y), H(\theta,x))}{\mathbb{V}(H(\theta,x))}.$$
Then,
\begin{equation*}
        \mathbb{V}(\hat{\mathcal{R}}_{DeSSL}(\theta)|r) =  \frac{1}{n_l}\mathbb{V}(L(\theta, x, y))  - \frac{n_u}{n_l}\frac{l^2}{(n_u-u+l)(l+u)}\frac{\mathrm{Cov}(L(\theta, x, y), H(\theta,x))^2}{\mathbb{V}(H(\theta,x))}.\\
\end{equation*}
We now want to minimise with respect to $0\leq u \leq n_u$ and $1 \leq l \leq n_l$. We can easily show that the $(n_u-u+l)(l+u)$ reaches its minimum for  $u=0$ or $u=n_u$ and for both value:
\begin{equation*}
        \mathbb{V}(\hat{\mathcal{R}}_{DeSSL}(\theta)|r) =  \frac{1}{n_l}\mathbb{V}(L(\theta, x, y))  - \frac{n_u}{n_l}\frac{l}{n_u+l}\frac{\mathrm{Cov}(L(\theta, x, y), H(\theta,x))^2}{\mathbb{V}(H(\theta,x))}.\\
\end{equation*}
Then $l/(n_u+l)$ is an increasing function, then reaches its maximum a $l=n_l$. So finally, the optimal choices for the couple $(n_l,0)$ and $(n_l,n_u)$. We showed that these couples are equivalent.

\newpage

\section{Proof of Theorem \ref{th:calibration}}

The calibration of a model is its capacity of predicting probability estimates that are representative of the true distribution. This property is determinant in real-world applications when we need reliable predictions. 
A scoring rule $\mathcal{S}$ is a function assigning a score to the predictive distribution $p_{\theta}(y|x)$ relative to the event $y|x \sim p(y|x)$, $\mathcal{S}(p_{\theta},(x,y))$, where $p(x,y)$ is the true distribution (see e.g. \citealp{gneiting2007strictly}).
A scoring rule measures both the accuracy and the quality of predictive uncertainty, meaning that better calibration is rewarded. The expected scoring rule is defined as $\mathcal{S}(p_{\theta},p) = \mathbb{E}_p[\mathcal{S}(p_{\theta},(x,y))]$.
A proper scoring rule is defined as a scoring rule such that $\mathcal{S}(p_{\theta},p) \leq \mathcal{S}(p,p)$
\citep{gneiting2007strictly}. The motivation behind having proper scoring rules comes from the following: suppose that the true data distribution $p$ is accessible by our set of models. Then, the scoring rule encourages to predict $p_{\theta}=p$.
The opposite of a proper scoring rule can then be used to train a model to encourage the calibration of predictive uncertainty: $L(\theta;x,y)=-\mathcal{S}(p_{\theta},(x,y))$. The most common losses used to train models are proper scorings rules such as log-likelihood.

\begin{theorem}
\label{th:calibration}

If $\mathcal{S}(p_\theta,(x,y))= -L(\theta;x,y)$ is a proper scoring rule, then $\mathcal{S}'(p_\theta,(x,y,r))= - (\frac{rn}{n_l}L(\theta;x,y)+\lambda n (\frac{1-r}{n_u}-\frac{ r}{n_l})H(\theta;x))$ is also a proper scoring rule.
\end{theorem}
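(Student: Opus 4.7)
The plan is to translate the properness requirement for $\mathcal{S}'$ into a statement about the expected score under the true joint distribution $p(x,y,r)$, exploit the MCAR factorisation $p(x,y,r)=p(x,y)p(r)$ to make $r$ independent of $(x,y)$, and then reduce everything to the properness of $\mathcal{S}$ that we already have.

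Concretely, I would first fix the interpretation of the weights: under MCAR $n_l$ and $n_u$ are viewed as the expected counts of labelled/unlabelled points so that $p(r=1)=n_l/n$ and $p(r=0)=n_u/n$ (equivalently, $\mathbb{E}[r]=n_l/n$ and $\mathbb{E}[1-r]=n_u/n$). With this in hand I would simply expand
\begin{equation*}
\mathcal{S}'(p_\theta,p) \;=\; \mathbb{E}_{p(x,y,r)}\!\left[-\tfrac{rn}{n_l}L(\theta;x,y) \;-\; \lambda n\Bigl(\tfrac{1-r}{n_u}-\tfrac{r}{n_l}\Bigr)H(\theta;x)\right].
\end{equation*}
Using MCAR independence I would separate expectations: $\mathbb{E}[rL(\theta;x,y)]=\mathbb{E}[r]\,\mathbb{E}[L(\theta;x,y)]=(n_l/n)\mathbb{E}[L]$, so the first term contributes exactly $-\mathbb{E}[L(\theta;x,y)]=\mathcal{S}(p_\theta,p)$. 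For the $H$ term, the same factorisation gives $\mathbb{E}\!\left[\tfrac{1-r}{n_u}-\tfrac{r}{n_l}\right]=\tfrac{n_u/n}{n_u}-\tfrac{n_l/n}{n_l}=0$, so the whole bracket vanishes regardless of $\theta$ and $H$. Hence $\mathcal{S}'(p_\theta,p)=\mathcal{S}(p_\theta,p)$ for every $p_\theta$.

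The conclusion is then immediate: for any predictive distribution $p_\theta$, properness of $\mathcal{S}$ gives $\mathcal{S}(p_\theta,p)\le \mathcal{S}(p,p)$, and by the identity just established this is the same as $\mathcal{S}'(p_\theta,p)\le \mathcal{S}'(p,p)$, which is exactly what ``$\mathcal{S}'$ is a proper scoring rule'' means. If strict properness of $\mathcal{S}$ is available, the same argument transfers strictness to $\mathcal{S}'$.

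The only genuinely delicate point, and the one I would be careful to state cleanly in the write-up, is the interpretation of $n_l/n$ and $n_u/n$ as the Bernoulli parameter of $r$ under MCAR (or, equivalently, conditioning the argument on the observed counts and applying the tower property). Everything else is a one-line expectation calculation; there is no functional-analytic or convexity work to do because properness is inherited ``for free'' once we show the two expected scores coincide.
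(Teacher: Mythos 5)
Your proof is correct and follows essentially the same route as the paper's: both expand $\mathcal{S}'(p_\theta,p)$ under the MCAR factorisation $p(x,y,r)=p(x,y)p(r)$, observe that $\mathbb{E}\bigl[\tfrac{rn}{n_l}\bigr]=1$ and $\mathbb{E}\bigl[\tfrac{(1-r)n}{n_u}-\tfrac{rn}{n_l}\bigr]=0$, and conclude $\mathcal{S}'(p_\theta,p)=\mathcal{S}(p_\theta,p)$ so that properness is inherited directly. Your explicit remark that this hinges on reading $n_l/n$ as $p(r=1)$ is a point the paper leaves implicit, and is worth stating.
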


The proof follows directly from unbiasedness and the MCAR assumption. The main interpretation of this theorem is that we can expect DeSSL to be as well-calibrated as the complete case.
\label{section:calibration}

\begin{proof}
The scoring rule considered in our SSL framework is:
$$\mathcal{S}'(p_\theta,(x,y,r))= - \left(\frac{rn}{n_l}L(\theta;x,y)+\lambda n (\frac{1-r}{n_u}-\frac{ r}{n_l})H(\theta;x)\right).$$ The proper scoring rule of the fully supervised problem is $$\mathcal{S}(p_\theta,(x,y,r))= - L(\theta;x,y).$$
Let p be the true distribution of the data $(x,y,r)$. Under MCAR, $r$ is independent of $x$ and $y$, then $p(x,y,r)=p(r)p(x,y)$.

\begin{align*}
        \mathcal{S}'(p_\theta,p)  & = \int p(x,y,r)\mathcal{S}'(p_\theta,(x,y,r)) \,dx\,dy\,dr \\
        & = \int p(x,y)p(r)\mathcal{S}'(p_\theta,(x,y,r)) \,dx\,dy\,dr \\
        & = - \int p(x,y)p(r) \frac{rn}{n_l}L(\theta;x,y)+\lambda n (\frac{1-r}{n_u}-\frac{ r}{n_l})H(\theta;x)\,dx\,dy\,dr \\
        & = - \int_{x,y} p(x,y)\underbrace{\left( \int_r p(r) \frac{rn}{n_l}\,dr\right)}_{=1}L(\theta;x,y) \,dx\,dy \\ 
        & \quad\quad\quad-\lambda n\int_{x,y} p(x,y)\underbrace{\left(\int_rp(r)\left(\frac{1-r}{n_u}-\frac{ r}{n_l})\right)\,dr \right)}_{=0}H(\theta;x)\,dx\,dy \\
        & = - \int_{x,y} p(x,y)L(\theta;x,y) \,dx\,dy\\
        & = \mathcal{S}(p_\theta,p)
\end{align*}
Therefore, if  $\mathcal{S}(p_\theta,(x,y))= -L(\theta;x,y)$ is a proper scoring rule, then $\mathcal{S}'(p_\theta,(x,y,r))= - (\frac{rn}{n_l}L(\theta;x,y)+\lambda n (\frac{1-r}{n_u}-\frac{ r}{n_l})H(\theta;x))$ is also a proper scoring rule.

\end{proof}

\newpage

\section{Proof of Theorem \ref{th:consistency}}
Assumption \ref{assumption1}: the minimum $\theta^*$ of $\mathcal{R}$ is well-separated.
\begin{equation}
    \inf_{\theta : d(\theta^*,\theta)\geq \epsilon} \mathcal{R}(\theta) > \mathcal{R}(\theta^*)
\end{equation}

Assumption \ref{assumption2}: uniform weak law of large numbers holds for a function $L$ if:
\begin{equation}
    \sup_{\theta \in \Theta}\left|\frac{1}{n}\sum_{i=1}^n L(\theta, x_i,y_i) - \mathbb{E}[L(\theta, x,y)]\right| \xrightarrow{p} 0 
\end{equation}

\paragraph{Theorem \ref{th:consistency}.}\emph{
Under assumption A and assumption B for both $L$ and $H$, $\hat{\theta} = \argmin \hat{\mathcal{R}}_{DeSSL}$ is asymptotically consistent with respect to $n$. }

This result is a direct application of Theorem 5.7 from \citet[Chapter 5]{van2000asymptotic} that states that under assumption A and B for $L$, $\hat{\theta} = \argmin \hat{\mathcal{R}}$ is asymptotically consistent with respect to $n$. Assumption A remains unchanged as we have M-estimators of the same $\mathcal{R}$. We now aim to prove that under assumption B for both $L$ and $H$, we have the assumption B on  $\theta \longrightarrow \frac{rn}{n_l}L(\theta;x,y)+\lambda(1-\frac{ r n }{n_l})H(\theta;x)$.

\begin{lemma}
If the uniform law of large number holds for both $L$ and $H$, then it holds for  $\theta \longrightarrow \frac{rn}{n_l}L(\theta;x,y)+\lambda(1-\frac{ r n }{n_l})H(\theta;x)$.
\end{lemma}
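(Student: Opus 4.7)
The plan is to split $\hat{\mathcal{R}}_{DeSSL}(\theta)$ into three empirical averages, each of which is amenable to a standard uniform law of large numbers argument, and then recombine with a triangle inequality.

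First, I would rewrite
\begin{equation*}
\hat{\mathcal{R}}_{DeSSL}(\theta) = \underbrace{\frac{1}{n_l}\sum_{i:r_i=1} L(\theta;x_i,y_i)}_{T_1(\theta)} + \lambda\underbrace{\left[\frac{1}{n_u}\sum_{i:r_i=0} H(\theta;x_i) - \mathbb{E}[H(\theta;x)]\right]}_{T_2(\theta)} - \lambda\underbrace{\left[\frac{1}{n_l}\sum_{i:r_i=1} H(\theta;x_i) - \mathbb{E}[H(\theta;x)]\right]}_{T_3(\theta)},
\end{equation*}
so that $\hat{\mathcal{R}}_{DeSSL}(\theta) - \mathcal{R}(\theta) = (T_1(\theta) - \mathcal{R}(\theta)) + \lambda T_2(\theta) - \lambda T_3(\theta)$, and by the triangle inequality
\begin{equation*}
\sup_{\theta\in\Theta}|\hat{\mathcal{R}}_{DeSSL}(\theta) - \mathcal{R}(\theta)| \leq \sup_{\theta}|T_1(\theta) - \mathcal{R}(\theta)| + |\lambda|\sup_{\theta}|T_2(\theta)| + |\lambda|\sup_{\theta}|T_3(\theta)|.
\end{equation*}
It therefore suffices to show that each of the three suprema converges to $0$ in probability.

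Next, I would argue that under MCAR each of $T_1$, the $H$-average in $T_2$, and the $H$-average in $T_3$ is itself a standard empirical average over an i.i.d.\ sample. Conditional on $r=(r_1,\dots,r_n)$, the labelled points $\{(x_i,y_i):r_i=1\}$ are i.i.d.\ from $p(x,y)$ and the unlabelled points $\{x_i:r_i=0\}$ are i.i.d.\ from $p(x)$, because under MCAR $r \perp (x,y)$. Moreover, by the coupling paragraph preceding the theorem, $n_l\to\infty$ and $n_u\to\infty$ almost surely as $n\to\infty$. Thus the hypothesised ULLN for $L$ (applied to the labelled subsample as its size grows) yields $\sup_\theta|T_1(\theta)-\mathcal{R}(\theta)|\xrightarrow{p}0$, and the hypothesised ULLN for $H$ (applied to the labelled and unlabelled subsamples respectively) yields $\sup_\theta|T_2(\theta)|\xrightarrow{p}0$ and $\sup_\theta|T_3(\theta)|\xrightarrow{p}0$. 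Combining with the triangle inequality proves the lemma.

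The main obstacle I would expect is the slightly awkward point that $n_l$ and $n_u$ are themselves random, so the ULLN must be applied to subsamples of random size. I would address this by working conditionally on $r$: on the event $\{n_l \geq k\}$ the first $n_l$ labelled draws form an i.i.d.\ sample of size $n_l$ from $p(x,y)$, and since $n_l/n\to\pi>0$ almost surely, the ULLN's convergence (stated for deterministic sample size tending to infinity) transfers to a convergence in probability along the random subsequence $n_l$ by a standard subsequence/coupling argument. Exactly the same reasoning handles the unlabelled subsample for $H$. Once this measurability detail is in place, the three uniform convergences follow immediately from Assumption~\ref{assumption2}, and the lemma, together with the cited Theorem 5.7 of \citet{van2000asymptotic}, yields the consistency of $\hat{\theta}$.
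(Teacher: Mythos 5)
Your proposal is correct and follows essentially the same route as the paper's proof: the same three-term decomposition (labelled average of $L$, unlabelled average of $H$, labelled average of $H$), a triangle inequality, and the uniform law of large numbers applied to each subsample using the coupling of $n_l$ and $n_u$ with $n$. Your explicit handling of the random subsample sizes via conditioning on $r$ is a welcome refinement of a step the paper treats more informally, but it is not a different argument.
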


\begin{proof}
Suppose assumption B for $L$, then the same result holds if we replace $n$ with $n_l$ as $n$ and $n_l$ are coupled by the law of $r$. Indeed, when $n$ grows to infinity, $n_l$ too and inversely. Therefore, 
\begin{equation*}
    \begin{split}
        \sup_{\theta \in \Theta}\left|\frac{1}{n_l}\sum_{i\in\mathcal{L}} L(\theta; x_i,y_i) - \mathbb{E}[L(\theta;x,y)]\right| \xrightarrow[n]{p} 0 \\
    \end{split}
\end{equation*}

Now, suppose we have assumption B for $H$, then we can make the same remark than for $L$. Now, we have to show that:

\begin{equation*}
    \begin{split}
        \sup_{\theta \in \Theta}\left|\frac{1}{n}\sum_{i=1}^{n} \frac{rn}{n_l}L(\theta;x,y)+\lambda n\left(\frac{1-r}{n_u}-\frac{ r}{n_l}\right)H(\theta;x) - \mathbb{E}[L(\theta;x,y)]\right| \xrightarrow[n]{p} 0 \\
    \end{split}
\end{equation*}
We first split the absolute value and the sup operator as

\begin{equation*}
    \begin{split}
        \sup_{\theta \in \Theta}&\left|\frac{1}{n}\sum_{i=1}^{n} \frac{rn}{n_l}L(\theta;x,y)+\lambda n\left(\frac{1-r}{n_u}-\frac{ r}{n_l}\right)H(\theta;x) - \mathbb{E}[L(\theta;x,y)]\right| \\
        &\leq\sup_{\theta \in \Theta}\left|\frac{1}{n_l}\sum_{i=1}^{n} \frac{rn}{n_l}L(\theta;x,y)- \mathbb{E}[L(\theta;x,y)] \right|+\left|\frac{1}{n}\sum_{i=1}^{n}\lambda n\left(\frac{1-r}{n_u}-\frac{ r}{n_l}\right)H(\theta;x) \right|\\
        &\leq\underbrace{\sup_{\theta \in \Theta}\left|\frac{1}{n_l}\sum_{i\in\mathcal{L}} L(\theta;x,y)- \mathbb{E}[L(\theta;x,y)] \right|}_{\xrightarrow[n]{p} 0}+\sup_{\theta \in \Theta}\left|\frac{1}{n}\sum_{i=1}^{n}\lambda n\left(\frac{1-r}{n_u}-\frac{ r}{n_l}\right)H(\theta;x) \right|.
    \end{split}
\end{equation*}
So we now have to prove that the second term is also converging to $0$ in probability. Again by splitting the absolute value and the sup, we have
\begin{equation*}
    \begin{split}
        \sup_{\theta \in \Theta}\left|\frac{1}{n}\sum_{i=1}^{n}\lambda n\left(\frac{1-r}{n_u}-\frac{ r}{n_l}\right)H(\theta;x) \right| & = \sup_{\theta \in \Theta}\left|\frac{\lambda}{n}\sum_{i=1}^{n}\frac{(1-r)n}{n_u}H(\theta;x)
        - \frac{\lambda}{n}\sum_{i=1}^{n}\frac{ r n }{n_l}H(\theta;x) \right|\\
    \end{split}
\end{equation*}
Then we have that,
\begin{align*}
        \sup_{\theta \in \Theta}&\left|\frac{\lambda}{n_u}\sum_{i=1}^{n}(1-r)H(\theta;x)\right.
         \left.- \frac{\lambda}{n_l}\sum_{i=1}^{n}rH(\theta;x) \right| \\
        &= \sup_{\theta \in \Theta}\left|\frac{\lambda}{n_u}\sum_{i=1}^{n}(1-r)H(\theta;x) -\mathbb{E}[H(\theta;x,y)] - \left(\frac{\lambda}{n_l}\sum_{i=1}^{n}rH(\theta;x) - \mathbb{E}[H(\theta;x,y)]\right) \right|\\
         &= \sup_{\theta \in \Theta}\left|\frac{\lambda}{n_u}\sum_{i=n_l+1}^{n_l+n_u}H(\theta;x) -\mathbb{E}[H(\theta;x,y)] - \left(\frac{\lambda}{n_l}\sum_{i\in\mathcal{L}}H(\theta;x) - \mathbb{E}[H(\theta;x,y)]\right) \right|\\
         &\leq \underbrace{\sup_{\theta \in \Theta}\left|\frac{\lambda}{n_u}\sum_{i=n
        -l+1}^{n_l+n_u}H(\theta;x) -\mathbb{E}[H(\theta;x,y)]\right|}_{\xrightarrow[n]{p}0}
        + \underbrace{\sup_{\theta \in \Theta}\left|\left(\frac{\lambda}{n_l}\sum_{i\in\mathcal{L}}H(\theta;x) - \mathbb{E}[H(\theta;x,y)]\right)\right|}_{\xrightarrow[n]{p}0}.\\
\end{align*}
Thus,
\begin{equation*}
    \begin{split}
        \sup_{\theta \in \Theta}\left|\frac{1}{n}\sum_{i=1}^{n} \frac{rn}{n_l}L(\theta;x,y)+\lambda n \left(\frac{1-r}{n_u}1-\frac{ r  }{n_l}\right)H(\theta;x) - \mathbb{E}[L(\theta;x,y)]\right| \xrightarrow[n]{p} 0 \\
    \end{split}
\end{equation*}
And we now just have to apply the results of \citet[Theorem 5.7]{van2000asymptotic} to have the asymptotic consistent of $\hat{\theta} = \argmin \hat{\mathcal{R}}_{DeSSL}$.

\end{proof}

\begin{remark}
A sufficient condition on the function $H$ to verify assumption B, the uniform weak law of large numbers, is to be bounded \citep[Lemma 2.4]{newey1994large}. For instance, the entropy $H=-\sum_y p_\theta(y|x)\log(p_\theta(y|x))$ is bounded and therefore, the entropy minimisation is asymptotically consistent.
\end{remark}

\newpage
\section{Asymptotic normality of DeSSL}

\label{section:asymptotic normality}

In the following, we study a modified version of the objective to simplify the proof. Let us consider the following DeSSL objective $L'(\theta;x,y,r) = \frac{r}{\pi}L(\theta;x,y)+\lambda\left(\frac{1-r}{1-\pi}-\frac{r}{\pi}\right)H(\theta;x)$ which has the same properties than the original one (unbiasedness, variance reduction property, consistency and benefit from generalisation error bounds). The idea is to replace $n_l$ with $\pi n$ to simplify the expression. The value $n_l$ converges to $\pi n$ then the following Theorem should hold with the true DeSSL objective.

We define the cross-covariance matrice between random vectors $\nabla L(\theta;x,y)$ and $\nabla H(\theta;x)$  as $K_{\theta}(i,j)=\mathbf{Cov}(\nabla L(\theta;x,y)_i, \nabla H(\theta;x)_j)$.

\begin{theorem}
Suppose $L$ and $H$ are smooth functions in $\mathcal{C}^2(\Theta,\mathbb{R})$. Assume $\mathcal{R}(\theta)$ admit a second-order Taylor expansion at $\theta^*$ with a non-singular second order derivative $V_{\theta^*}$. Under the MCAR assumption, we have that $\hat{\theta}_{DeSSL}$ is asymptotically normal with covariance:

\begin{align*}
    \Sigma_{DeSSL} &=\frac{1}{\pi} V_{\theta^*}^{-1}\mathbb{E}\left[\nabla L(\theta^*;x,y)\nabla L(\theta^*;x,y)^T\right]V_{\theta^*}^{-1}\\
    & \qquad \qquad \qquad +\frac{\lambda^2}{\pi(1-\pi)} V_{\theta^*}^{-1}\mathbb{E}\left[\nabla H(\theta^*;x,y)\nabla H(\theta^*;x,y)^T\right]V_{\theta^*}^{-1} \\
    & \qquad \qquad \qquad \qquad \qquad - \frac{\lambda}{\pi} V_{\theta^*}^{-1}K_{\theta^*}V_{\theta^*}^{-1}.\\
\end{align*}
As a consequence,  we can minimise the trace of the covariance. Indeed, $\mathbf{Tr}(\Sigma_{DeSSL})$ reaches its minimum at 
\begin{equation}
        \lambda_{opt} = (1-\pi)\frac{\mathbf{Tr}(V_{\theta^*}^{-1}K_{\theta^*}V_{\theta^*}^{-1})}{\mathbf{Tr}(V_{\theta^*}^{-1}\mathbb{E}\left[\nabla H(\theta^*;x)\nabla H(\theta^*;x)^T\right]V_{\theta^*}^{-1})},
    \end{equation}
and at $\lambda_{opt}$ :
\begin{equation}
\begin{split}
    \mathbf{Tr}(\Sigma_{DeSSL}) -\mathbf{Tr}(\Sigma_{CC}) & =  - \frac{1-\pi}{\pi}\frac{\mathbf{Tr}(V_{\theta^*}^{-1}K_{\theta^*}V_{\theta^*}^{-1})^2}{\mathbf{Tr}(V_{\theta^*}^{-1}\mathbb{E}\left[\nabla H(\theta^*;x)\nabla H(\theta^*;x)^T\right]V_{\theta^*}^{-1})}\leq 0.
\end{split}
\end{equation} 
\end{theorem}

The complete case is the special case of DeSSL with $\lambda =0$. Then, the Theorem holds for the complete case.

\begin{proof}
We define  $L'(\theta;x,y,r) = \frac{r}{\pi}L(\theta;x,y)+\lambda\left(\frac{1-r}{1-\pi}-\frac{r}{\pi}\right)H(\theta;x) $
The assumptions of the theorem are sufficient assumptions to apply Theorem 5.23 of Van der Vaart 1998 to the couple ($\hat{\theta}_{DeSSL},L'$). Hence, we obtain the following representation for representation $\hat{\theta}_{DeSSL}$:
\begin{equation}
\begin{split}
    \label{bahadur_Dessl}
    \sqrt{n}(\hat{\theta}_{DeSSL} - \theta^*) &=
    \frac{1}{\sqrt{n}}V_{\theta^*}^{-1}\sum_{i=1}^n \frac{r_i}{\pi}\nabla L(\theta^*; x_i,y_i) + \lambda  \left(\frac{1-r_i}{1-\pi}-\frac{r_i}{\pi}\right)\nabla H(\theta^*;x_i)+ o_p(1).\\
\end{split}
\end{equation}
\begin{equation*}
   \sqrt{n}(\hat{\theta}_{DeSSL} - \theta^*) \xrightarrow[]{\mathcal{L}}\mathcal{N}(0,\Sigma_{DeSSL}),
\end{equation*}
The asymptotic normality follows with variance:
\begin{equation*}
    \Sigma_{DeSSL}= V_{\theta^*}^{-1}\mathbb{E}\left[\nabla L'(\theta^*;x,y)\nabla L'(\theta^*;x,y)^T\right]V_{\theta^*}^{-1}.
\end{equation*}
Using the MCAR assumption, we simplify the expression of $\Sigma_{DeSSL}$:

\begin{align*}
    \Sigma_{DeSSL}&= V_{\theta^*}^{-1}\mathbb{E}\left[\nabla L'(\theta^*;x,y)\nabla L'(\theta^*;x,y)^T\right]V_{\theta^*}^{-1}\\
    &=\frac{1}{\pi^2} V_{\theta^*}^{-1}\mathbb{E}\left[r\nabla L(\theta^*;x,y)\nabla L(\theta^*;x,y)^T\right]V_{\theta^*}^{-1} \\
    & \qquad \qquad \qquad +\lambda^2 V_{\theta^*}^{-1}\mathbb{E}\left[\left(\frac{1-r}{(1-\pi)^2}+\frac{r}{\pi^2}\right)\nabla H(\theta^*;x,y)\nabla H(\theta^*;x,y)^T\right]V_{\theta^*}^{-1} \\
    & \qquad \qquad \qquad \qquad \qquad - \frac{\lambda}{\pi^2} V_{\theta^*}^{-1}\mathbb{E}\left[r\nabla L(\theta^*;x,y)\nabla H(\theta^*;x,y)^T\right]V_{\theta^*}^{-1}\\
    &=\frac{1}{\pi} V_{\theta^*}^{-1}\mathbf{Cov}( L(\theta^*;x,y))V_{\theta^*}^{-1} \\&\qquad\qquad\qquad+\frac{\lambda^2}{\pi(1-\pi)} V_{\theta^*}^{-1}\mathbb{E}\left[\nabla H(\theta^*;x,y)\nabla H(\theta^*;x,y)^T\right]V_{\theta^*}^{-1}- \frac{\lambda}{\pi} V_{\theta^*}^{-1}K_{\theta^*}V_{\theta^*}^{-1}.
\end{align*}

We remark that the complete case is the particular case of DeSSL with $\lambda=0$. Then, 

\begin{align*}
\label{equation:sigma_dessl}
    \Sigma_{DeSSL} &=\Sigma_{CC} +\frac{\lambda^2}{\pi(1-\pi)} V_{\theta^*}^{-1}\mathbb{E}\left[\nabla H(\theta^*;x,y)\nabla H(\theta^*;x,y)^T\right]V_{\theta^*}^{-1} \\
    & \qquad \qquad \qquad - \frac{\lambda}{\pi} V_{\theta^*}^{-1}K_{\theta^*}V_{\theta^*}^{-1}.\\
\end{align*}

The asymptotic relative efficiency of consequence, the asymptotic relative efficiency $\hat{\theta}_{DeSSL}$ compared to $\hat{\theta}_{CC}$ is defined as the quotient $\frac{\mathbf{Tr}(\Sigma_{DeSSL})}{\mathbf{Tr}(\Sigma_{CC})}$.
This quotient can be minimised with respect to $\lambda$:
\begin{equation}
        \lambda_{opt} = (1-\pi)\frac{\mathbf{Tr}(V_{\theta^*}^{-1}K_{\theta^*}V_{\theta^*}^{-1})}{\mathbf{Tr}(V_{\theta^*}^{-1}\mathbb{E}\left[\nabla H(\theta^*;x)\nabla H(\theta^*;x)^T\right]V_{\theta^*}^{-1})},
    \end{equation}
and at $\lambda_{opt}$ :
\begin{equation}
\begin{split}
    \frac{\mathbf{Tr}(\Sigma_{DeSSL})}{\mathbf{Tr}(\Sigma_{CC})} & = 1 - \frac{1-\pi}{\pi}\frac{\mathbf{Tr}(V_{\theta^*}^{-1}K_{\theta^*}V_{\theta^*}^{-1})^2}{\mathbf{Tr}(V_{\theta^*}^{-1}\mathbb{E}\left[\nabla H(\theta^*;x)\nabla H(\theta^*;x)^T\right]V_{\theta^*}^{-1})\mathbf{Tr}(\Sigma_{CC})} \leq 1. 
\end{split}
\end{equation} 
\end{proof}

\begin{remark}
\textbf{On the sign of $\lambda$.} It is easy to show that a sufficient condition to have $\lambda_{opt}>0$ is to have $K_{\theta^*}$ positive semi-definite. Indeed, using that $V_{\theta^*}$ is positive definite and Proposition 6.1 of \citet{serre2010}, we show that  $\mathbf{Tr}(V_{\theta^*}^{-1}K_{\theta^*}V_{\theta^*}^{-1})>0$ and then  $\lambda_{opt}>0$.
\end{remark}

\begin{remark}
\textbf{Why minimising the trace of $\Sigma_{DeSSL}$?} Minimising the trace of $\Sigma_{DeSSL}$ leads to an estimator with a smaller asymptotic MSE, see \citet{chen2020group}.
\end{remark}

\begin{remark} \textbf{Fully supervised setting.}
We also remark that our theorem matches the theorem for the supervised setting. Indeed, observing all the labelled corresponds to the case $\pi=1$ and we obtain:
$$\Sigma_{DeSSL}=\Sigma_{CC}=\Sigma_{\text{Fully supervised}}.$$
\end{remark}

\newpage

\section{Proof of Theorem \ref{th:rademacher}}

\label{section:rademacher}

Our proof will be based on the following result from \citet[Theorem 26.5]{shalev2014understanding}.

\begin{theorem}
\label{th:shalev}
Let $\mathcal{H}$ be a set of parameters, $z \sim \mathcal{D}$ a random variable living in a space $\mathcal{Z}$, $c>0$, and $\ell: \mathcal{H} \times \mathcal{Z} \longrightarrow [-c,c]$. We denote
\begin{equation}
\label{eq:shalev1}
    L_\mathcal{D} (h) = \mathbb{E}_z [\ell(h,z)], \; and \; L_\mathcal{S} (h) = \frac{1}{m}\sum_{i=1}^m \ell(h,z_i),
\end{equation}
where $z_1,...,z_m$ are i.i.d. samples from $\mathcal{D}$. For any $\delta >0$, with probability at least $1- \delta$, we have
\begin{equation}
    L_\mathcal{D} (h)  \leq  L_\mathcal{S} (h) + 2 \mathbb{E}_{(\varepsilon_i)_{i \leq m}} \left[ \sup_{h \in \mathcal{H}}  \left( \frac{1}{m}\sum_{i=1}^{m} \varepsilon_i \ell(h,z_i)  \right) \right] + 4c \sqrt{ \frac{2 \log (4/\delta)}{m}},
\end{equation}
where $\varepsilon_1,...,\varepsilon_m$ are i.i.d. Rademacher variables independent from $z_1,...,z_m$.
\end{theorem}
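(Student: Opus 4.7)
The plan is to apply Theorem~\ref{th:shalev} from Shalev-Shwartz and Ben-David directly, by recasting DeSSL as an empirical mean over a single augmented loss. Let $z_i = (x_i, y_i, r_i)$ for $i = 1, \ldots, n$, and introduce the per-sample loss
\begin{equation*}
\ell(\theta, z_i) = \frac{r_i n}{n_l} L(\theta; x_i, y_i) + \lambda n \left(\frac{1-r_i}{n_u} - \frac{r_i}{n_l}\right) H(\theta; x_i),
\end{equation*}
so that $\hat{\mathcal{R}}_{DeSSL}(\theta) = \frac{1}{n}\sum_{i=1}^n \ell(\theta, z_i)$. The first step is to verify that, under MCAR, the expectation of this empirical mean equals $\mathcal{R}(\theta)$; this is precisely the unbiasedness calculation carried out in Appendix~\ref{appendix:unbias}.

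The second step is to condition on the missingness pattern $r = (r_1, \ldots, r_n)$. Conditional on $r$, the random variables $(x_i, y_i)$ remain i.i.d.~from $p(x,y)$ by MCAR, so we are in the setting of Theorem~\ref{th:shalev} modulo the fact that the per-sample loss now depends on the deterministic index $r_i$. Inspecting the proof of Theorem 26.5 (via McDiarmid plus symmetrization) shows it only requires independence of the $z_i$'s, not identical distribution, so it applies. To get a uniform constant $c$ in the hypothesis ``$\ell \in [-c, c]$'', I use the boundedness hypothesis on $L$ and $H$ to write, for some $C_L, C_H$,
\begin{equation*}
|\ell(\theta, z_i)| \leq \frac{n}{n_l} C_L + \lambda n \max\!\left(\tfrac{1}{n_l}, \tfrac{1}{n_u}\right) C_H.
\end{equation*}
This yields a bound depending on $n_l/n$ and $n_u/n$, which under MCAR concentrate around the labelling probability $\pi$ (as discussed after Theorem~\ref{th:consistency}); absorbing these ratios together with $\lambda$, $C_L$, $C_H$ into a single constant produces the constant $\kappa$ announced in the statement.

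Applying Theorem~\ref{th:shalev} with $m = n$ then gives, with probability at least $1-\delta$, simultaneously for all $\theta \in \Theta$,
\begin{equation*}
\mathcal{R}(\theta) \leq \hat{\mathcal{R}}_{DeSSL}(\theta) + 2\, \mathbb{E}_{\varepsilon}\!\left[\sup_{\theta \in \Theta} \frac{1}{n}\sum_{i=1}^n \varepsilon_i \ell(\theta, z_i)\right] + \kappa \sqrt{\frac{\log(4/\delta)}{n}}.
\end{equation*}
The Rademacher term here is, by construction, precisely the quantity $R_n$ defined for DeSSL in the preceding paragraph of the paper, which establishes the claim. Finally, since the inequality holds conditionally on every realisation of $r$, it holds unconditionally with the same probability.

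The main obstacle is the conceptual one of justifying the reduction to Theorem~\ref{th:shalev} despite $n_l$ and $n_u$ being random quantities appearing inside the loss itself: the conditioning argument handles the independence concern, and the MCAR concentration of $n_l/n$ around $\pi$ justifies absorbing these ratios into $\kappa$. A minor secondary check is that the symmetrization step still produces exactly the Rademacher complexity $R_n$ as defined in the paper, which follows because $\varepsilon_i$ is independent of $r_i$, so sign flips commute with the weighting factors in $\ell$.
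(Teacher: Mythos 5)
Your proposal does not actually prove the statement in question. The quoted theorem is the generic uniform-convergence bound (Theorem 26.5 of Shalev-Shwartz and Ben-David), which the paper itself imports as a black box rather than proving; a genuine proof of it would need the standard two ingredients---McDiarmid's bounded-differences inequality applied to $\sup_{h\in\mathcal{H}}\left(L_\mathcal{D}(h)-L_\mathcal{S}(h)\right)$ and to the empirical Rademacher complexity, plus a symmetrization argument with a ghost sample to introduce the variables $\varepsilon_i$. None of this appears in your write-up: you begin by \emph{invoking} Theorem~\ref{th:shalev} as a known result, so as a proof of that very statement your argument is circular, and the gap is total.

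What you have written is instead a proof of the paper's Theorem~\ref{th:rademacher}, and on that front it coincides with the paper's own derivation essentially line for line: the same augmented per-sample loss $\ell(\theta,(x_i,y_i,r_i))$, the same appeal to the unbiasedness result of Appendix~\ref{appendix:unbias} to identify $L_\mathcal{D}$ with $\mathcal{R}(\theta)$ and $L_\mathcal{S}$ with $\hat{\mathcal{R}}_{DeSSL}(\theta)$, the same envelope $c=\frac{n}{n_l}M+\lambda\max\left\{\frac{n}{n_u},\frac{n}{n_l}\right\}M$, and the same constant $\kappa=4c\sqrt{2}$. Indeed, your treatment is more careful than the paper's on one point: since $n_l$ and $n_u$ are functions of the whole vector $(r_1,\dots,r_n)$, the per-sample loss is not a function of $z_i$ alone, and the samples are not identically distributed once $r$ is fixed; your conditioning on $r$ (under MCAR the $(x_i,y_i)$ remain i.i.d.) together with the remark that the underlying argument needs independence but not identical distribution addresses a subtlety the paper passes over silently. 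Note also that your detour through concentration of $n_l/n$ around $\pi$ is unnecessary: the paper's $\kappa$ is explicitly allowed to depend on the realised ratio of observed labels, so the bound can simply be read conditionally on $r$. But none of this discharges the actual task: the statement you were given remains unproved.
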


We can now restate and prove our generalisation bound.

\paragraph{Theorem \ref{th:rademacher}.}
\emph{We assume that both $L$ and $H$ are bounded and that the labels are MCAR. Then, there exists a constant $\kappa >0$, that depends on $\lambda$, $L$, $H$, and the ratio of observed labels, such that, with probability at least $1- \delta$, for all $\theta \in \Theta$,
    \begin{equation}
    \label{eq:rademacher_app}
        \mathcal{R}(\theta) \leq  \hat{\mathcal{R}}_{DeSSL}(\theta) + 2 R_n + \kappa  \sqrt{ \frac{\log (4/\delta)}{n}},
    \end{equation}
            where $R_n$ is the Rademacher complexity
    \begin{equation}
    \small
    R_n = \mathbb{E}_{(\varepsilon_i)_{i \leq n}} \left[ \sup_{\theta \in \Theta}  \left( \frac{1}{n\pi}\sum_{i\in\mathcal{L}} \varepsilon_i L(\theta; x_i,y_i)  - \frac{\lambda}{n\pi}\sum_{i\in\mathcal{L}} \varepsilon_i H(\theta; x_i)  + \frac{\lambda}{n(1-\pi)}\sum_{i\in\mathcal{U}} \varepsilon_i H(\theta; x_i) \right) \right],
\end{equation}
    with $\varepsilon_1,...,\varepsilon_m$ i.i.d. Rademacher variables independent from the data.
    }
    
    \begin{proof}
        We use Theorem \ref{th:shalev} with $z = (x,y,r)$, $\mathcal{H} = \Theta$, $m = n$, and
        \begin{equation}
            \ell(h,z) = \frac{1}{\pi}L(\theta; x_i, y_i) + \lambda\left(\frac{1-r_i}{1-\pi}-\frac{r_i }{\pi}\right) H(\theta;x_i).
        \end{equation}
        The unbiasedness of our estimate under the MCAR assumption, proven in Appendix \ref{appendix:unbias}, ensures that the condition of Equation \eqref{eq:shalev1} is satisfied with $L_\mathcal{D} (h) = \mathcal{R}(\theta)$ and $L_S (h)  = \hat{\mathcal{R}}_{DeSSL}(\theta)$. Now, since $L$ and $H$ are bounded, there exists $M>0$ such that $|L|<M$ and $|H|<M$. We can then bound $\ell$:
        \begin{equation}
            |\ell(h,z)| \leq \frac{M}{\pi}  + \lambda \max \left\{ \frac{1}{1-\pi}, \frac{1}{\pi}\right\} M = c.
        \end{equation}
         Now that we have chosen a $c$ that bounds $\ell$, we can use Theorem \ref{th:shalev} and finally get Equation \eqref{eq:rademacher_app} with $\kappa = 4 c \sqrt{2}$.
    \end{proof}

\newpage

\section{DeSSL with \texorpdfstring{$H$}{ro} applied on all available data}

\label{section:alldata}
For consistency-based SSL methods it is common to use all the available data for the consistency term:
\begin{equation}
    \begin{split}
        \hat{\mathcal{R}}_{SSL}(\theta) &=\frac{1}{n_l}\sum_{i\in\mathcal{L}}L(\theta; x_i,y_i) \color{red}{+\frac{\lambda}{n}\sum_{i=1}^{n}H(\theta; x_i)}. \\
    \end{split}
\end{equation}

With the same idea, we debias the risk estimate with the labelled data:
\begin{equation}
    \begin{split}
        \hat{\mathcal{R}}_{DeSSL}(\theta) = \frac{1}{n_l}\sum_{i\in\mathcal{L}}L(\theta; x_i,y_i) & \color{red}{+ \frac{\lambda}{n}\sum_{i=1}^{n}H(\theta; x_i)} \\ & \color{blue}{- \frac{\lambda}{n_l}\sum_{i\in\mathcal{L}}H(\theta; x_i).}
    \end{split}
\end{equation}

Under MCAR, this risk estimate is unbiased and the main theorem of the article hold with minor modifications. In Theorem \ref{th:variance}, $\lambda_{opt}$ is slightly different and the expression of the variance at $\lambda_{opt}$ remains the same. The scoring rule in Theorem \ref{th:calibration} is different but the theorem remains the same. Both Theorem \ref{th:consistency}, \ref{th:normality} and \ref{th:rademacher} remain the same with very similar proofs.

\begin{theorem}
The function $ \lambda \mapsto \mathbb{V}(\hat{\mathcal{R}}_{DeSSL}(\theta))$ reaches its minimum for:
    \begin{equation}
        \lambda_{opt} = \frac{\mathrm{Cov}(L(\theta;x, y), H(\theta;x))}{\mathbb{V}( H(\theta;x))}
    \end{equation}
    and 
    \begin{equation}
        \begin{split}
            \small 
            \mathbb{V}(\hat{\mathcal{R}}_{DeSSL}(\theta) )|_{\lambda_{opt}} & = (1-\frac{n_u}{n}\rho_{L,H}^2)\mathbb{V}(\hat{\mathcal{R}}_{CC}(\theta))\\
            & \leq \mathbb{V}(\hat{\mathcal{R}}_{CC}(\theta))
        \end{split}
    \end{equation}
    
    where $\rho_{L,H} = \mathrm{Corr}(L(\theta;x, y), H(\theta;x))$.
   
\end{theorem}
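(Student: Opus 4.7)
The plan is to follow essentially the same structure as the proof of the original Theorem \ref{th:variance} in Appendix \ref{section:variance}, tracking how the replacement of $\frac{1}{n_u}\sum_{i=1}^{n_u}H(\theta; x_i)$ by $\frac{1}{n}\sum_{i=1}^{n}H(\theta; x_i)$ changes the coefficients. First I would rewrite the new estimator in the unified form
\begin{equation*}
\hat{\mathcal{R}}_{DeSSL}(\theta) = \sum_{i=1}^n \left(\frac{r_i}{n_l} L(\theta; x_i, y_i) + \lambda\left(\frac{1}{n} - \frac{r_i}{n_l}\right) H(\theta; x_i)\right),
\end{equation*}
which will be the analogue of the expression used in the original derivation, and verify unbiasedness directly (the $H$ part has zero expectation term-by-term under MCAR, since $\mathbb{E}[r_i]/n_l = 1/n$).

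Next I would compute the conditional variance given $r$. Under MCAR the couples $(x_i,y_i)$ are independent of $r$ and i.i.d., so the variance splits as a sum of $n$ identical single-term variances, each expanding into three pieces involving $\mathbb{V}(L)$, $\mathbb{V}(H)$, and $\mathrm{Cov}(L,H)$. The key simplification is that $r_i \in \{0,1\}$ implies $r_i^2 = r_i$, which reduces the three sums to:
\begin{equation*}
\sum_{i=1}^n \frac{r_i^2}{n_l^2} = \frac{1}{n_l}, \qquad \sum_{i=1}^n \left(\frac{1}{n} - \frac{r_i}{n_l}\right)^2 = \frac{n_u}{n\, n_l}, \qquad \sum_{i=1}^n \frac{r_i}{n_l}\left(\frac{1}{n} - \frac{r_i}{n_l}\right) = -\frac{n_u}{n\,n_l}.
\end{equation*}
This yields
\begin{equation*}
\mathbb{V}(\hat{\mathcal{R}}_{DeSSL}(\theta)|r) = \frac{1}{n_l}\mathbb{V}(L) + \lambda^2 \frac{n_u}{n\,n_l}\mathbb{V}(H) - 2\lambda \frac{n_u}{n\,n_l}\mathrm{Cov}(L,H).
\end{equation*}

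Finally, this is a quadratic in $\lambda$ whose minimum is attained at $\lambda_{opt} = \mathrm{Cov}(L,H)/\mathbb{V}(H)$, matching the statement (the factor $n_u/n$ that appeared in the original theorem is cancelled here because of the $1/n$ weighting on $H$). Substituting back and factoring $\frac{1}{n_l}\mathbb{V}(L) = \mathbb{V}(\hat{\mathcal{R}}_{CC}(\theta))$ gives the claimed formula
\begin{equation*}
\mathbb{V}(\hat{\mathcal{R}}_{DeSSL}(\theta))|_{\lambda_{opt}} = \left(1 - \frac{n_u}{n}\rho_{L,H}^2\right)\mathbb{V}(\hat{\mathcal{R}}_{CC}(\theta)).
\end{equation*}
I expect no serious obstacle here; the only thing to be careful about is bookkeeping the three combinatorial sums over $r_i$ correctly, since the asymmetry between the $1/n$ and $1/n_l$ weights is exactly what changes $\lambda_{opt}$ relative to Theorem \ref{th:variance}, while preserving the variance-at-optimum identity.
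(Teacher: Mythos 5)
Your proposal is correct and follows essentially the same route as the paper, which simply adapts the conditional-variance computation of Theorem~\ref{th:variance} to the new weights $\frac{1}{n}-\frac{r_i}{n_l}$; your three combinatorial sums ($\frac{1}{n_l}$, $\frac{n_u}{n n_l}$, $-\frac{n_u}{n n_l}$) check out and yield exactly the stated $\lambda_{opt}$ and variance-at-optimum. The only nitpick is the unbiasedness aside: rather than invoking $\mathbb{E}[r_i]/n_l = 1/n$ (delicate since $n_l=\sum_j r_j$ is random), it is cleaner to note that conditionally on $r$ the coefficients of $H$ sum to $\sum_{i=1}^n\bigl(\frac{1}{n}-\frac{r_i}{n_l}\bigr)=0$, after which the MCAR independence gives a zero conditional expectation for that term.
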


When $H$ is applied on all labelled and unlabelled data, the scoring rule used in the learning process is then $\mathcal{S}'(p_\theta,(x,y,r))= - (\frac{rn}{n_l}L(\theta;x,y)+\lambda (1-\frac{ rn}{n_l})H(\theta;x))$ and we have $\mathcal{S}'$ is a proper scoring rule. 

\newpage
\section{MNIST and MedMNIST}

\label{section:MNIST}

\subsection{MNIST}
\begin{figure}[H]
    \centering
    \includegraphics[width=0.33\columnwidth]{figures/MNIST_acc.pdf}
    \includegraphics[width=0.33\columnwidth]{figures/MNIST_loss.pdf}
    \includegraphics[width=0.33\columnwidth]{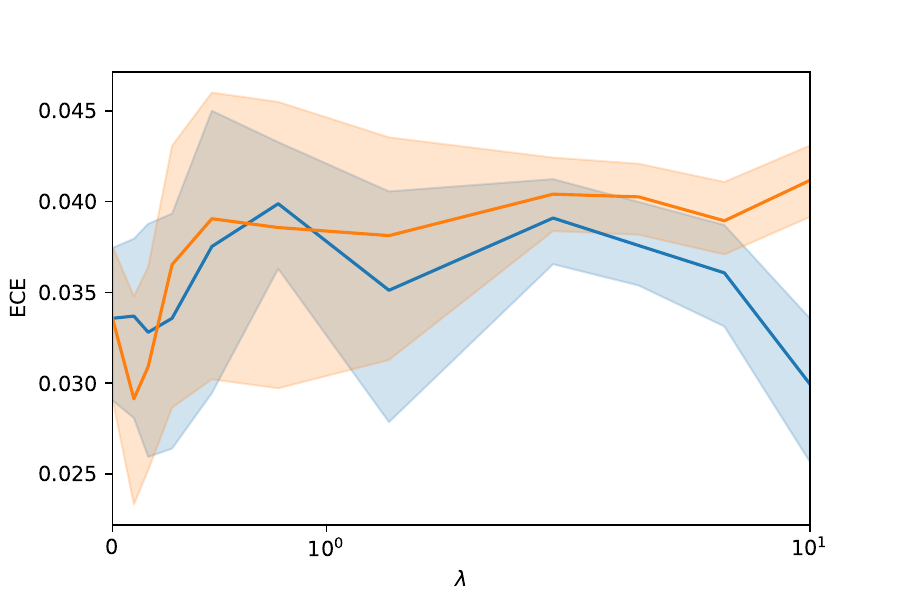}
    \caption{\small The influence of $\lambda$ on Pseudo-label and DePseudo-label for a Lenet trained on MNIST with $n_l= 1000$: (Left) Test accuracy; (Middle) Mean test cross-entropy; (Right) Mean test ECE, with 95\% CI}
    \label{fig:MNIST_annex}
\end{figure}

\subsection{MNIST label noise}
\begin{figure}[H]
    \centering
    \includegraphics[width=0.33\columnwidth]{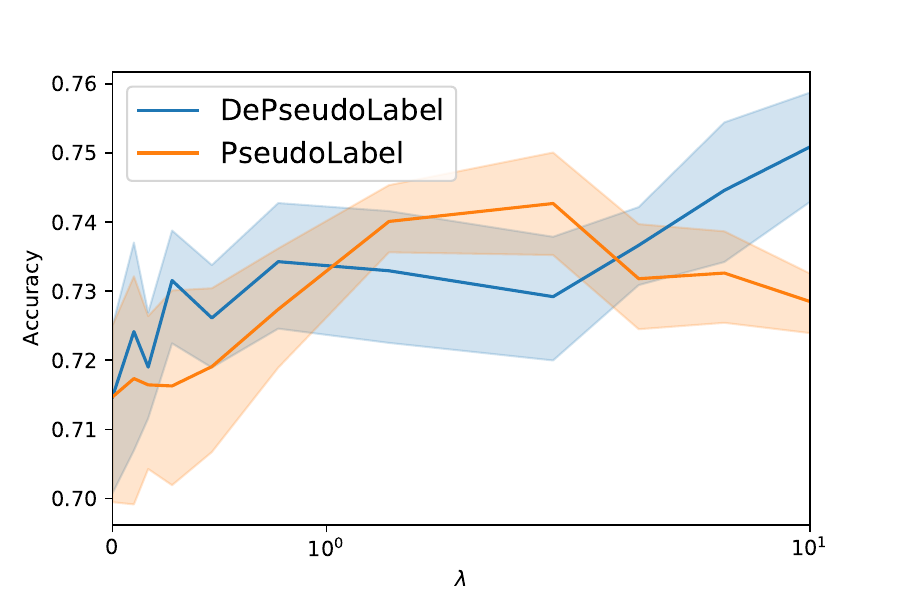}
    \includegraphics[width=0.33\columnwidth]{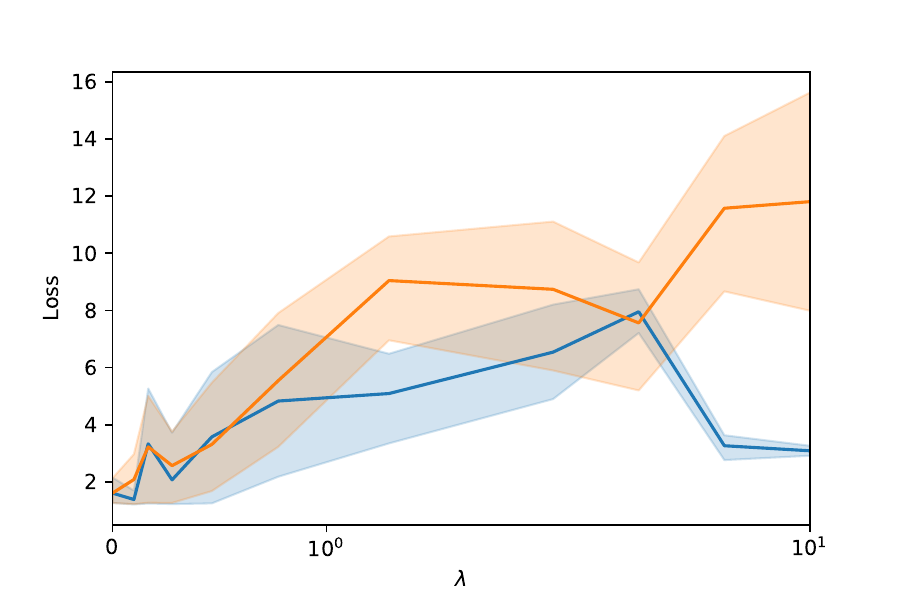}
    \includegraphics[width=0.33\columnwidth]{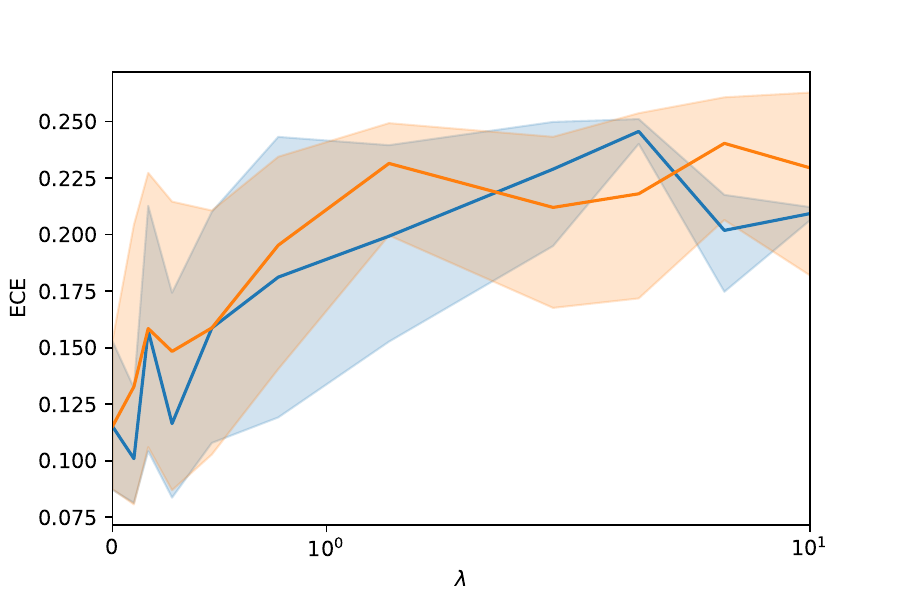}
    \caption{\small The influence of $\lambda$ on Pseudo-label and DePseudo-label for a Lenet trained on MNIST with label noise with $n_l= 1000$: (Left) Mean test accuracy; (Middle) Mean test cross-entropy; (Right) Test ECE, with 95\% CI.}
    \label{fig:MNIST_LN_annex}
\end{figure}
\subsection{MedMNIST}

We trained a 5-layer CNN with a fixed $\lambda =1$ and $n_l$ at 10\% of the training data. We report in Table \ref{tab:MEDMNIST} the mean accuracy and cross-entropy on 5 different splits of the labelled and unlabelled data and the number of labelled data used. We report the AUC in Appendix \ref{section:MNIST}. DePseudoLabel competes with PseudoLabel in terms of accuracy and even success when PseudoLabel's accuracy is less than the complete case. Moreover, DePseudoLabel is always better in terms of cross-entropy, so calibration, whereas PseudoLabel is always worse than the complete case.

\begin{table}[H]
\centering
\caption{Test accuracy and cross-entropy of Complete Case (CC), PseudoLabel (PL) and DePseudoLabel (DePL) on five datasets of MedMNIST.}
\label{tab:MEDMNIST}
\vskip -0.1in
\vspace{-5pt}
\begin{center}
\begin{scriptsize}
\begin{sc}
\begin{tabular}{llclclcl} 
\toprule
Dataset & $n_l$  & \multicolumn{2}{c}{CC}      & \multicolumn{2}{c}{PL} & \multicolumn{2}{c}{DePL}     \\ 
\midrule
 & & Cross-entropy & Accuracy & Cross-entropy & Accuracy & Cross-entropy & Accuracy\\
\midrule
Derma   & 1000  &   1.95 $\pm$ 0.09        & 68.99$\pm$ 1.20 & 2.51 $\pm$ 0.20 &   68.88$\pm$ 1.03   & \textbf{1.88 $\pm$ 0.12} &  \textbf{69.30$\pm$ 0.85} \\
Pneumonia & 585 &   1.47 $\pm$ 0.04        &  83.94$\pm$ 2.40 & 2.04 $\pm$ 0.04 &  \textbf{85.83$\pm$ 2.13}    & \textbf{1.40 $\pm$ 0.06} &  84.36 $\pm$ 3.79 \\
Retina  & 160  &   1.68 $\pm$ 0.03        & 48.30$\pm$ 3.06 & 1.80 $\pm$ 0.18 &     47.75$\pm$ 2.50  & \textbf{1.67 $\pm$ 0.06} & \textbf{49.40 $\pm$ 2.62}  \\
Breast  & 78  &   0.80 $\pm$ 0.04        & 76.15$\pm$ 0.75 & 1.00 $\pm$ 0.26 &    74.74$\pm$ 1.04  & \textbf{0.70 $\pm$ 0.03} & \textbf{76.67 $\pm$ 1.32}  \\
Blood   & 1700  &   \textbf{6.11 $\pm$ 0.17}& \textbf{84.13$\pm$ 0.83} & 6.61 $\pm$ 0.22 &   84.09$\pm$ 1.17   & 6.53 $\pm$ 0.30          & 83.68 $\pm$ 0.59  \\
\bottomrule
\end{tabular}
\end{sc}
\end{scriptsize}
\end{center}
\vskip -0.1in
\end{table}

\begin{table}[H]
\caption{Test AUC of Complete Case , PseudoLabel and DePseudoLabel  on five datasets of MedMNIST.}
\vskip 0.15in
\begin{center}

\begin{sc}
\begin{tabular}{lcccr}
\toprule
Dataset &  Complete Case & PseudoLabel  & DePseudoLabel \\
\midrule
Derma  & 84.26 $\pm$ 0.50& 82.64 $\pm$ 1.19 &83.82 $\pm$ 0.95   \\
Pneumonia & 94.28 $\pm$ 0.46& 94.34 $\pm$ 0.91 &94.15 $\pm$ 0.33  \\
Retina    & 70.70 $\pm$ 0.74& 70.12 $\pm$ 1.01 &69.97 $\pm$ 1.44   \\
Breast   &74.67 $\pm$ 3.68& 74.86 $\pm$ 3.18 &75.33 $\pm$ 3.05  \\
Blood    & 97.83 $\pm$ 0.23& 97.83 $\pm$ 0.23 &97.72 $\pm$ 0.15   \\
\bottomrule
\end{tabular}
\end{sc}
\end{center}
\label{tab:MEDMNIST_auc}
\vskip -0.1in
\end{table}

\newpage
\section{PseudoLabel and DePseudoLabel on CIFAR: p-values}
\label{section:CIFAR}

\subsection{CIFAR-10}
 \begin{figure}[H]
    \centering
    \includegraphics[width=0.33\columnwidth]{figures/CIFAR_acc.pdf}
    \includegraphics[width=0.33\columnwidth]{figures/CIFAR_loss.pdf}
    \includegraphics[width=0.33\columnwidth]{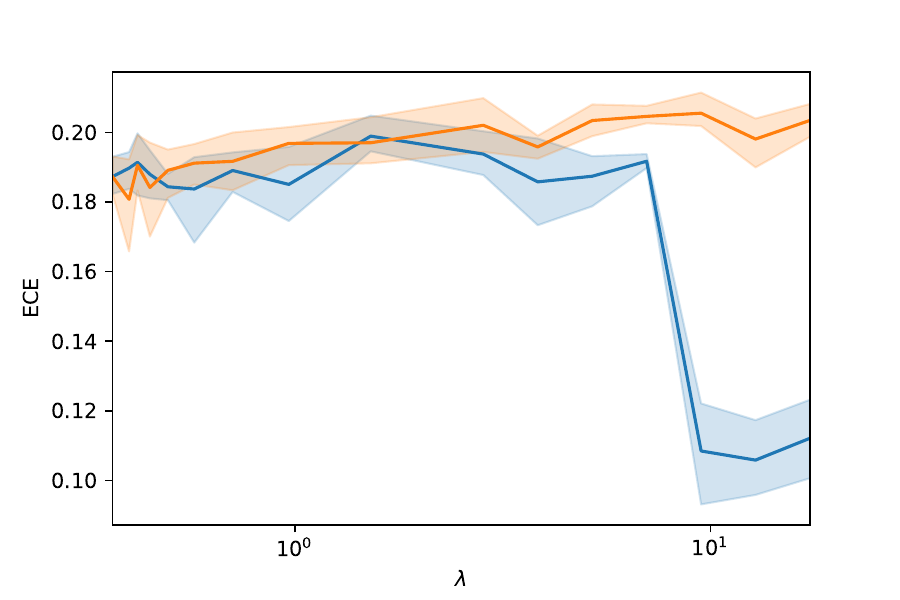}
    \caption{The influence of $\lambda$ on Pseudo-label and DePseudo-label on CIFAR-10 with nl= 4000: (Left) Mean test accuracy; (Middle) Mean test cross-entropy; (Right) Test ECE, with 95\% CI.}
    \label{fig:cifarloss}
\end{figure}

 \begin{figure}[H]
    \centering
    \includegraphics[width=0.4\columnwidth]{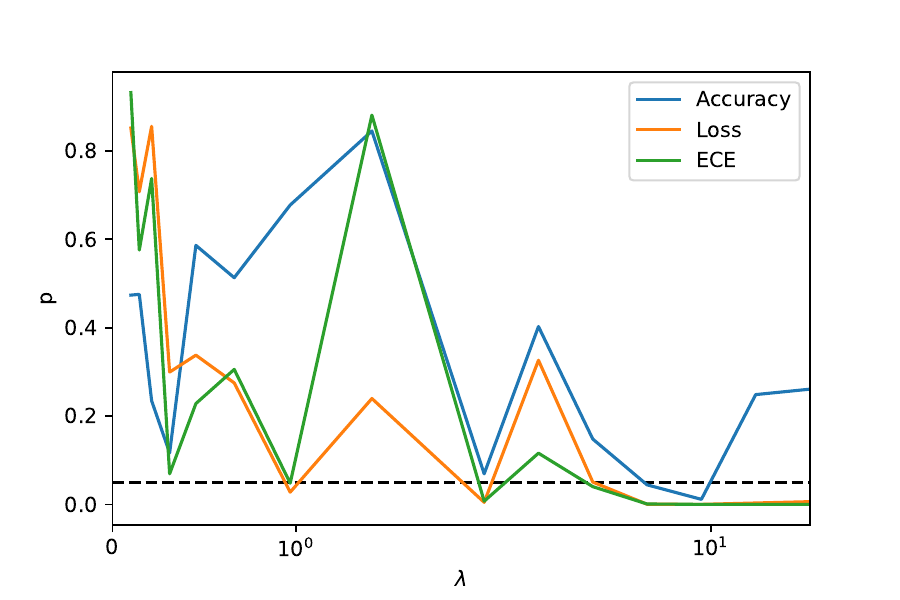}
    \includegraphics[width=0.4\columnwidth]{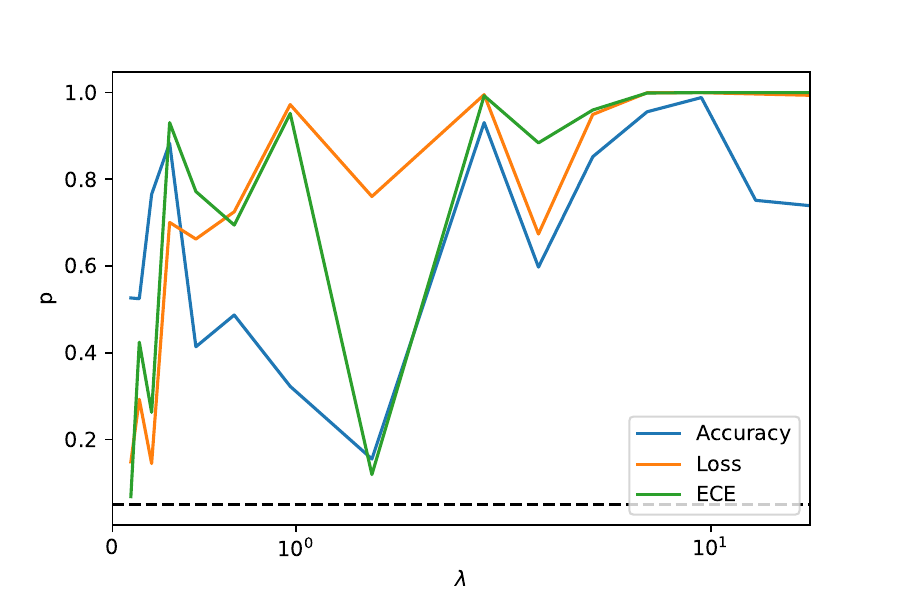}
    \caption{p-values of a paired student test between PseudoLabel and DePseudoLabel (Right) DePseudoLabel is better than PseudoLabel; (Left) DePseudoLabel is worst than PseudoLabel.}
    \label{fig:cifar_test_less}
\end{figure}
\subsection{Computation of \texorpdfstring{$\lambda_{opt}$}{ro} on the test set.}

\label{section:lambda_opt_experiments}
As explained in the main text, the estimation of $\mathrm{Cov}(L(\theta;x, y), H(\theta;x))$ with few labels led to extremely unstable unsatisfactory results. However, we test the formula on CIFAR-10 and different methods to provide intuition on the order of $\lambda_{opt}$ and the range of the variance reduction regime (between 0 and $2\lambda_{opt}$). To do so, we estimate $\lambda_{opt}$ on the test set for CIFAR-10 by training a CNN13 using only $4,000$ labelled data on 200 epochs. The value of $\lambda_{opt}$ is 1.67, 31.16 and 0.66 for entropy minimisation, pseudo label and Fixmatch. Therefore, the reduced variance regime covers the intuitive choices of $\lambda$ in the SSL literature. Unfortunately, computing $\lambda_{opt}$ on the test set is not applicable in practice.

\subsection{CIFAR-100}

 \begin{figure}[H]
    \centering
    \includegraphics[width=0.33\columnwidth]{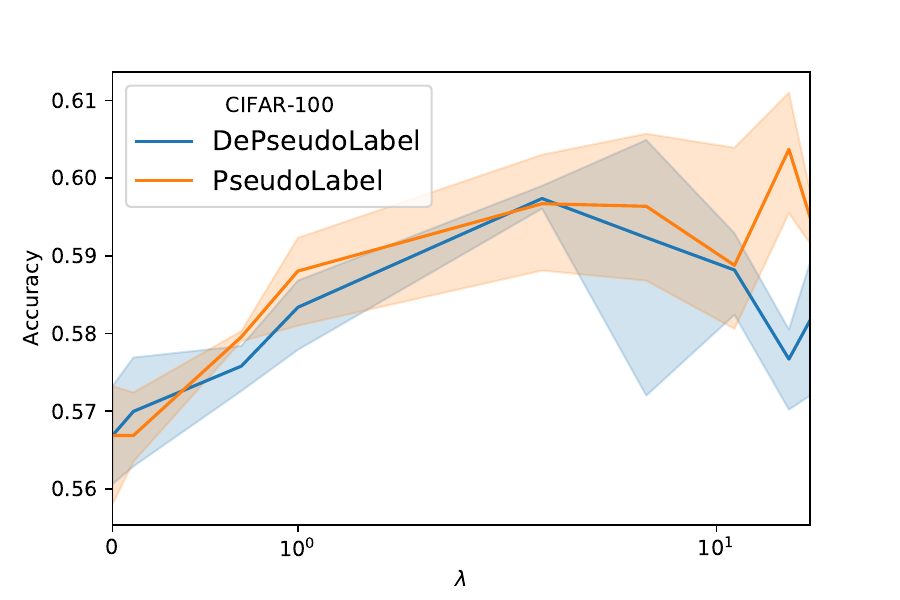}
    \includegraphics[width=0.33\columnwidth]{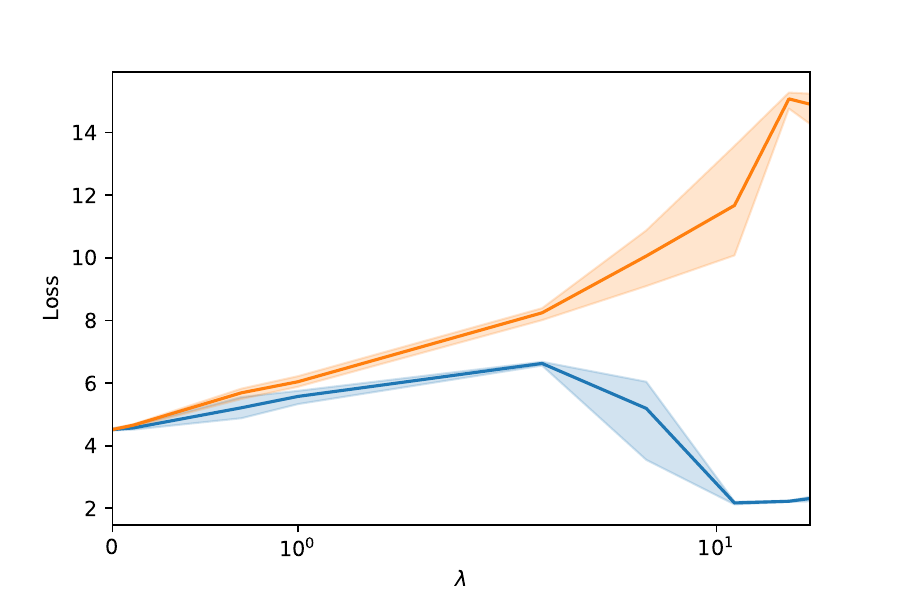}
    \includegraphics[width=0.33\columnwidth]{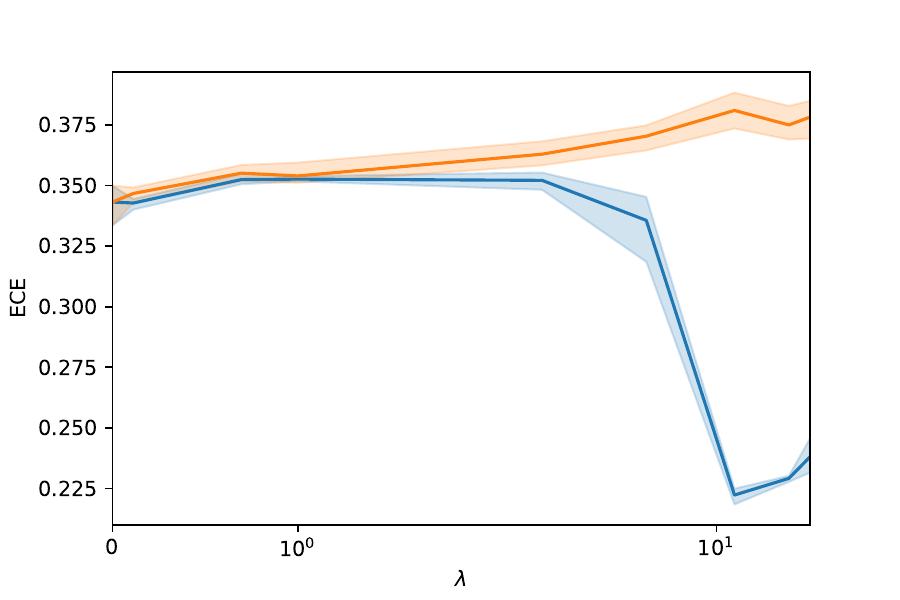}
    \caption{The influence of $\lambda$ on Pseudo-label and DePseudo-label on CIFAR-100 with nl= 4000: (Left) Mean test accuracy; (Middle) Mean test cross-entropy; (Right) Test ECE, with 95\% CI.}
    \label{fig:cifar100loss}
\end{figure}

\newpage
\section{Fixmatch \texorpdfstring{\citep{sohn2020fixmatch}}{ro}}
\label{section:Fixmatch}

We performed a paired Student t-test to ensure that our performances are significantly better on CIFAR-10 on both accuracy, cross-entropy and Brier score. The results show that DeFixmatch is significantly better with a p-value of $6.5\text{x}10^{-5}$ in accuracy, $3.3\text{x} 10^{-5}$ in cross-entropy and $7.6\text{x}10^{-5}$ in Brier score.

\subsection{Per class accuracy}
\label{section:acc_per_class}

In recent work, \citet{zhu2022the} exposed the disparate effect of SSL on different classes. Indeed, classes with a high complete case accuracy benefit more from SSL than classes with a low baseline accuracy. They introduced a metric called the benefit ratio ($\mathcal{BR}$) that quantifies the impact of SSL on a class $C$:

\begin{equation}
    \mathcal{BR}(C) = \frac{\textit{acc}_{SSL}(C)-\textit{acc}_{CC}(C)}{\textit{acc}_{S}(C)-\textit{acc}_{CC}(C)},
\end{equation}
where $\textit{acc}_{SSL}(C)$, $\textit{acc}_{CC}(C)$ and $\textit{acc}_{S}(C)$ are respectively the accuracy of the class with an SSL trained model, a complete-case model and a fully supervised model (a model that has access to all labels).
Inspired by this work, we report the per-class accuracy and the benefit ratio in Table \ref{tab:Fixmatch_pca}. We see that the ``poor'' classes such as bird, cat and dog tend to benefit from DeFixmatch much more than from Fixmatch. We compute $\textit{acc}_{S}(C)$ using a pre-trained model with the same architecture$^1$. 
\citet{zhu2022the} also promote the idea that a fair SSL algorithm should benefit different sub-classes equally, then having $\mathcal{BR}(C)=\mathcal{BR}(C')$ for all $C$, $C'$. While perfect equality seems unachievable in practice, we propose to look at the standard deviation of the $\mathcal{BR}$ through the different classes. While the standard deviation of Fixmatch is 0.12, the one of DeFixmatch is 0.06. Therefore, DeFixmatch improves the sub-populations' accuracies more equally.

\begin{table}[H]
\begin{center}

\caption{Mean accuracy per class and mean benefit ratio ($\mathcal{BR}$) on 5 folds for Fixmatch, DeFixmatch and the Complete Case. Bold: ``poor'' complete case accuracy classes.}
\vskip 0.15in

\begin{sc}
\begin{tabular}{llllll}
\toprule
\multicolumn{1}{r}{} & \multicolumn{1}{c}{Complete Case} & \multicolumn{2}{c}{Fixmatch ~}                        & \multicolumn{2}{c}{DeFixmatch}                 \\
\midrule
 & Accuracy      & Accuracy & $\mathcal{BR}$ & Accuracy & $\mathcal{BR}$  \\
\midrule
airplane             & 86.94                             & 95.94                        & 0.88               & 96.62                & 0.94                \\
automobile           & 95.26                             & 97.54                        & 0.68               & 98.22                & 0.89                \\
\textbf{bird}                 & \textbf{80.46}                             & \textbf{90.80}                        & \textbf{0.68}               & \textbf{92.64}                & \textbf{0.80}                \\
\textbf{cat}                  & \textbf{70.08}                             & \textbf{82.50}                        & \textbf{0.56}               & \textbf{87.16}                & \textbf{0.78}                \\
deer                 & 88.88                             & 95.86                        & 0.78               & 97.26                & 0.94                \\
\textbf{dog}                  & \textbf{79.66}                             & \textbf{87.16}                        & \textbf{0.53}               & \textbf{90.98}                & \textbf{0.81}                \\
frog                 & 93.12                             & 97.84                        & 0.80               & 98.62                & 0.94                \\
horse                & 90.96                             & 96.94                        & 0.83               & 97.64                & 0.92                \\
ship                 & 94.12                             & 97.26                        & 0.67               & 98.06                & 0.84                \\
truck                & 93.18                             & 96.82                        & 0.84               & 97.20                & 0.93   \\            
\bottomrule
\end{tabular}
\end{sc}
\end{center}
\label{tab:Fixmatch_pca}
\end{table}

\subsection{STL10}
\label{section:stl10}
DeFixmatch also performs better than Fixmatch on STL-10 using 4200 labelled data for training and 800 for validation.
\begin{table*}[ht]
    \centering
    \footnotesize
    \caption{Test accuracy, worst class accuracy and cross-entropy of Complete Case, Fixmatch and DeFixmatch on 5 folds of CIFAR-10, one fold of CIFAR-100 and one fold of STL10.}
    \vskip 0.1in
    \label{tab:fixmatch_total_stl10}
    \resizebox{\textwidth}{!}{%
    \begin{tabular}{lccccccccc}
    \toprule
        & \multicolumn{3}{c}{CIFAR-10 ($n_l=4000$)} & \multicolumn{3}{c}{CIFAR-100 ($n_l=10000$)} & \multicolumn{3}{c}{STL10 ($n_l=5000$)}  \\
        \cmidrule(l{3pt}r{3pt}){1-1} \cmidrule(l{3pt}r{3pt}){2-4}  \cmidrule(l{3pt}r{3pt}){5-7}
        \cmidrule(l{3pt}r{3pt}){8-10}
         & Complete Case  & Fixmatch & DeFixmatch & Complete Case  & Fixmatch & DeFixmatch & Complete Case  & Fixmatch & DeFixmatch  \\
        \cmidrule(l{3pt}r{3pt}){1-1} \cmidrule(l{3pt}r{3pt}){2-4}  \cmidrule(l{3pt}r{3pt}){5-7} \cmidrule(l{3pt}r{3pt}){8-10}
        Accuracy & 87.27 $\pm$ 0.25 & 93.87 $\pm$ 0.13  &  \textbf{95.44 $\pm$ 0.10} & 62.62 & 69.28 & \textbf{71.22} & 88.69& 92.64 & \textbf{93.23}  \\
        Worst class accuracy & 70.08 $\pm$ 0.93  & 82.25 $\pm$ 2.27 & \textbf{87.16 $\pm$ 0.46}  & 28.00 & 23.00 & \textbf{31.00} & 70.75& 85.00 & \textbf{85.25}  \\
        Cross entropy & 0.60 $\pm$ 0.01  & 0.27 $\pm$ 0.01  & \textbf{0.20 $\pm$ 0.01}  & 1.87 & 1.52 & \textbf{1.42} & 0.42 & 0.30 & \textbf{0.28}\\
        Brier score & 0.214 $\pm$ 0.005  & 0.101 $\pm$0.003 &\textbf{0.076 $\pm$ 0.001} & 0.56 & 0.47 & \textbf{0.44}& 0.171 & 0.117 & \textbf{0.109}\\
        \bottomrule
    \end{tabular}
    }%
\end{table*}

\subsection{Fixmatch details}

As first detailed in Appendix \ref{section:surrogates_appendix}, Fixmatch is a pseudo-label-based method with data augmentation. Indeed, Fixmatch uses weak augmentations of $x$ (flip-and-shift) for the pseudo-labels selection and then minimises the likelihood with the prediction of the model on a strongly augmented version of $x$. Weak augmentations are also used for the supervised part of the loss. In this context, $$L(\theta;x,y) = \mathbb{E}_{x_1\sim\textit{weak}(x)}[-\log(p_{\theta}(y|x_1))] $$
and
$$ H(\theta; x) =\mathbb{E}_{x_1\sim\textit{weak}(x)}\left[ \mathbb{1}[\max_yp_{\hat{\theta}}(y|x_1)>\tau]
\mathbb{E}_{x_2\sim\textit{strong}(x)}[-\log(p_{\theta}(\argmax_yp_{\hat{\theta}}(y|x_1)|x_2))]\right]$$ where $x_1$ is a weak augmentation of $x$ and $x_2$ is a strong augmentation.
We tried to debias an implementation of Fixmatch \footnote{\url{https://https://github.com/LeeDoYup/FixMatch-pytorch}} however training was very unstable and led to model that were much worst than the complete case. We believed that this behaviour is because the supervised part of the loss does not include strong augmentation. Indeed, our theoretical results encourage to have a strong correlation between $L$ and $H$, therefore including strong augmentations in the supervised term. Moreover, a solid baseline for CIFAR-10 using only labelled data integrated strong augmentations \citep{cubuk2020randaugment}. We modify the implementation, see Code \footnote{\url{https://github.com/HugoSchmutz/DeFixmatch.git}}. Therefore, the supervised loss term can be written as:

\begin{equation}
     L(\theta;x,y) = \frac{1}{2}\left(\mathbb{E}_{x_1\sim\textit{weak}(x)}[-\log(p_{\theta}(y|x_1))] + \mathbb{E}_{x_2\sim\textit{strong}(x)}[-\log(p_{\theta}(y|x_2))]\right),
\end{equation}
where $x_1$ is a weak augmentation of $x$ and $x_2$ is a strong augmentation. This modification encourages us to choose $\lambda=\frac{1}{2}$ as the original Fixmatch implementation used $\lambda =1$. We also remark that this modification degrades the performance of Fixmatch (less than 2\%) reported in the work of \citet{sohn2020fixmatch}. However, including strong augmentations in the supervised part greatly improves the performance of the Complete Case. 

\newpage

\section{CIFAR and SVHN: \texorpdfstring{\citet{oliver2018realistic}}{ro} implementation of consistency-based model.}

\label{section:oliver}
In this section, we present the results on CIFAR and SVHN by debiasing the implementation of \citep{oliver2018realistic} of $\Pi$-Model, Mean-Teacher and VAT \footnote{\url{https://github.com/brain-research/realistic-ssl-evaluation}}. We mimic the experiments of \citet[figure-4]{oliver2018realistic} with the same configuration and the exact same hyperparameters \citep[Appendix B and C]{oliver2018realistic}. We perform an early stopping independently on both cross-entropy and accuracy.  As reported below, we reach almost the same results as the biased methods.

\subsection{CIFAR-10}
\begin{figure}[H]
    \centering
    \includegraphics[width=0.33\columnwidth]{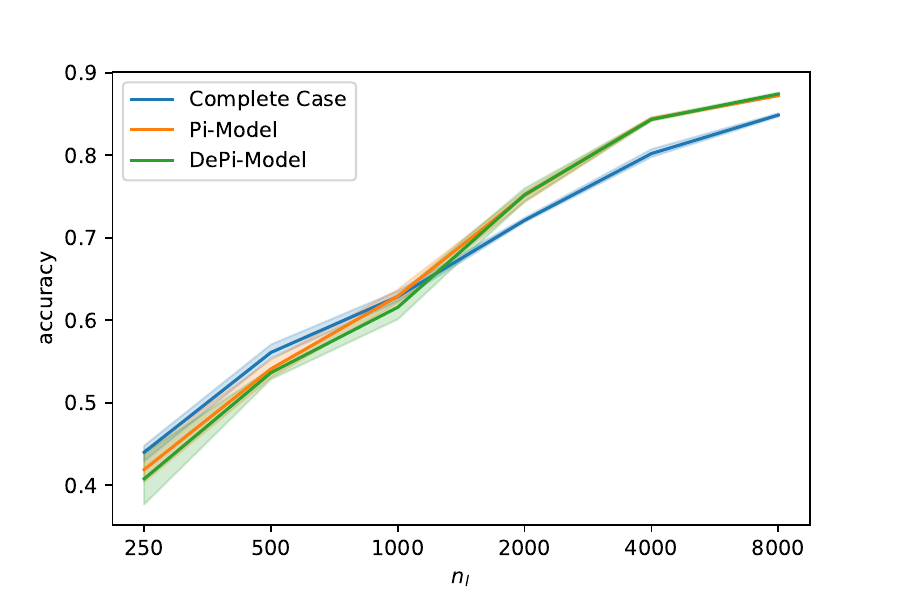}
    \includegraphics[width=0.33\columnwidth]{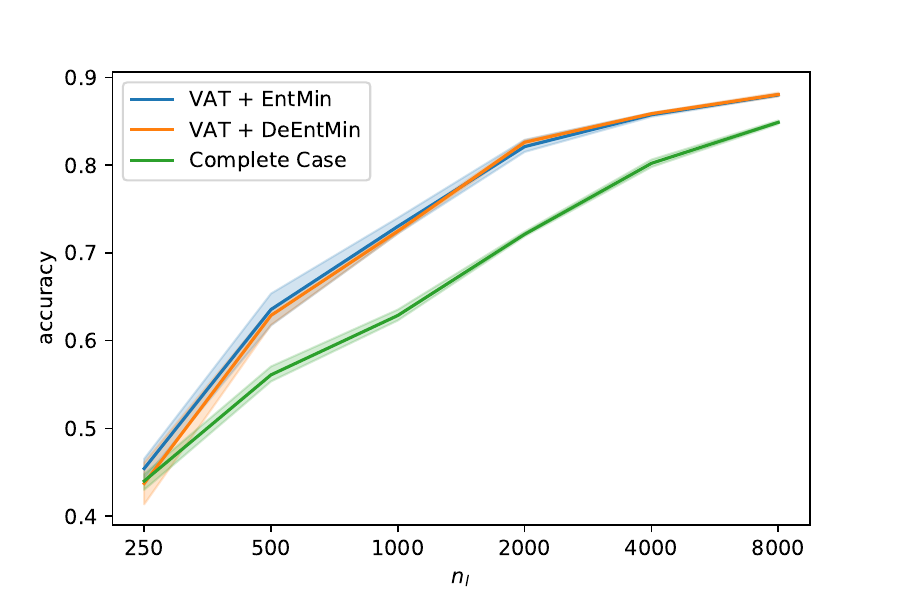}
    \includegraphics[width=0.33\columnwidth]{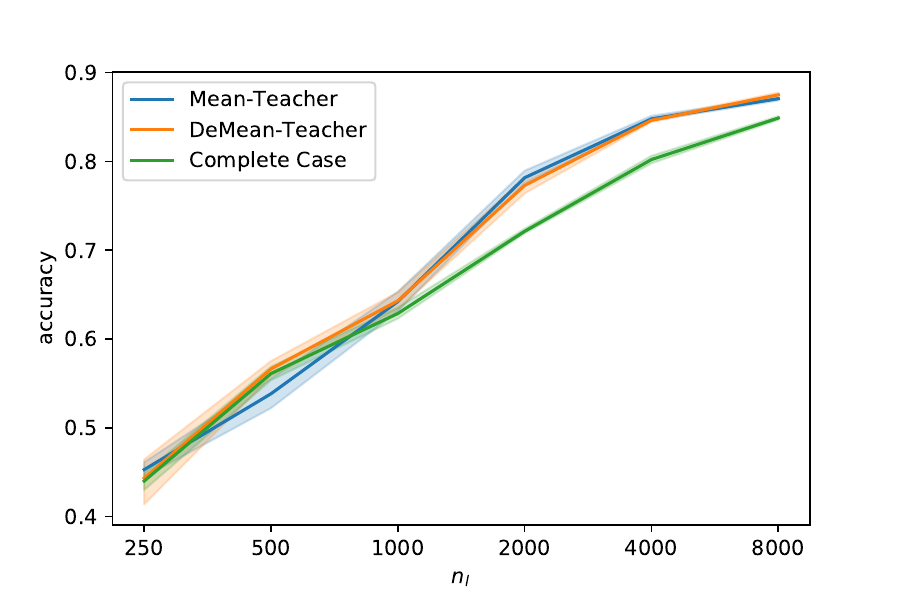}
    \caption{\small Test accuracy for each SSL approaches on CIFAR-10 with various amounts of labelled data $n_l$.(Left)  $\Pi$-model and De$\Pi$-model. (Right)  VAT+EntMin and VAT+DeEntMin. (Bottom) Mean-teacher and DeMean-teacher. Shadows represent 95\% CI.}
    \label{fig:cifacc}
\end{figure}

\begin{figure}[H]
    \centering
    \includegraphics[width=0.33\columnwidth]{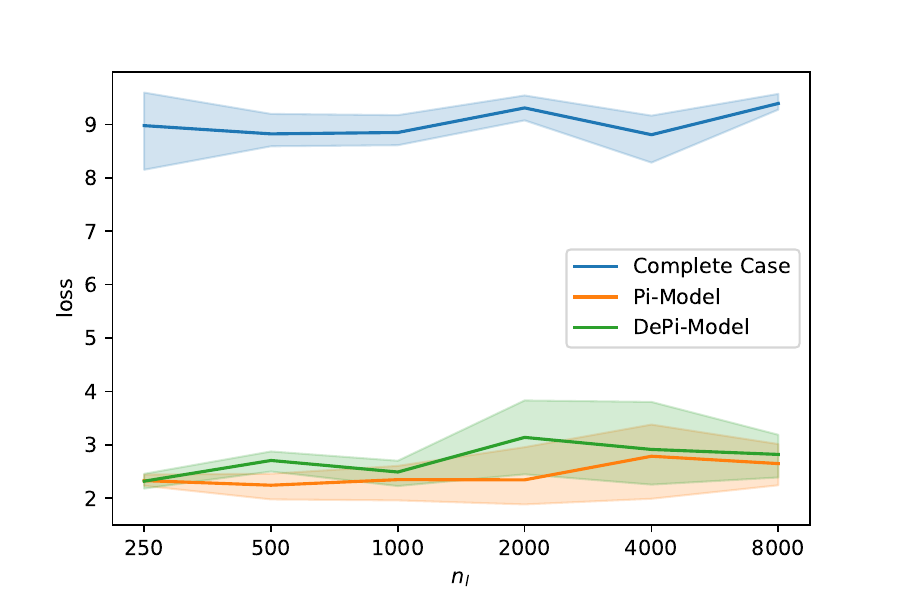}
    \includegraphics[width=0.33\columnwidth]{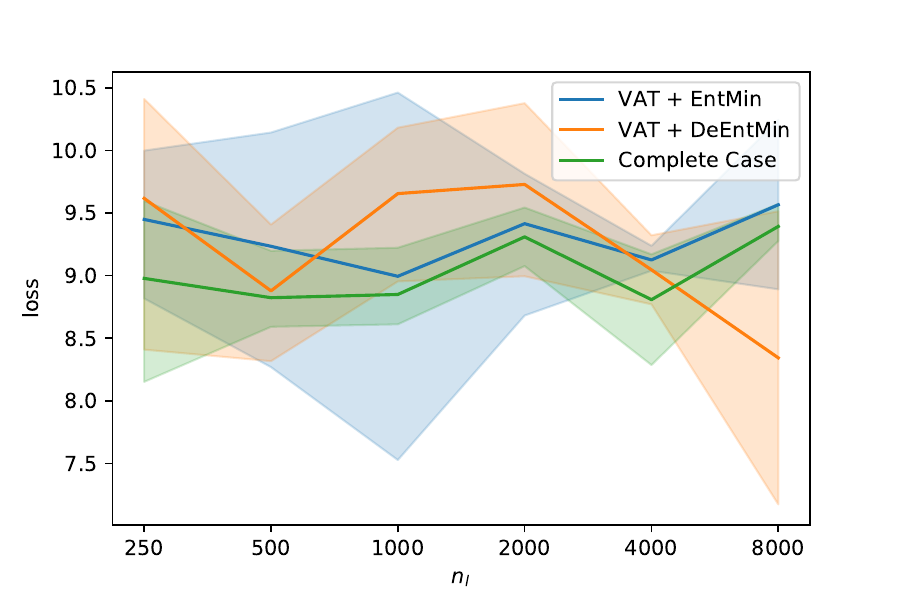}
    \includegraphics[width=0.33\columnwidth]{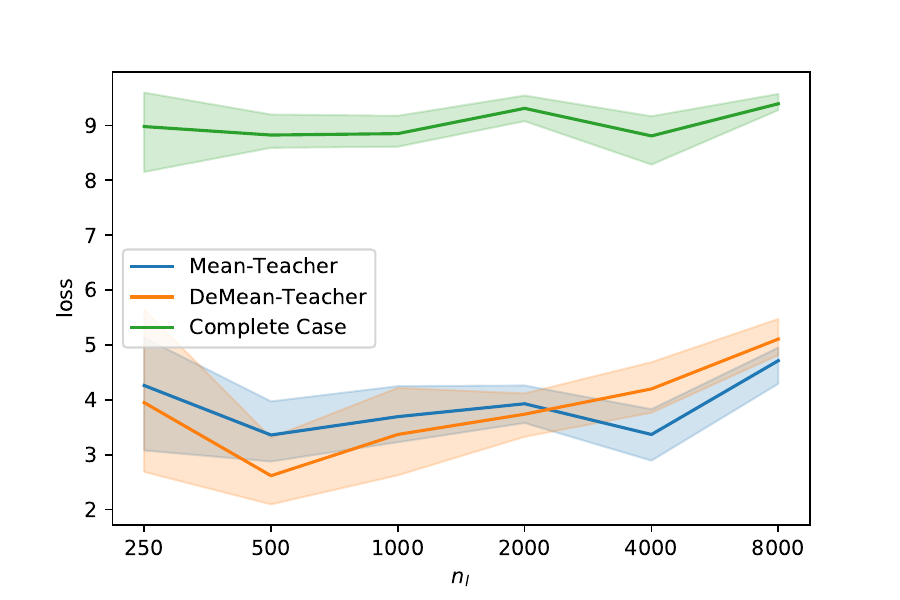}
    \caption{\small Test cross-entropy for each SSL approaches on CIFAR-10 with various amounts of labelled data $n_l$.(Left)  $\Pi$-model and De$\Pi$-model. (Right)  VAT+EntMin and VAT+DeEntMin. (Bottom) Mean-teacher and DeMean-teacher. Shadows represent 95\% CI.}
    \label{fig:cifloss}
\end{figure}
\subsection{SVHN}

\begin{figure}[H]
    \centering
    \includegraphics[width=0.33\columnwidth]{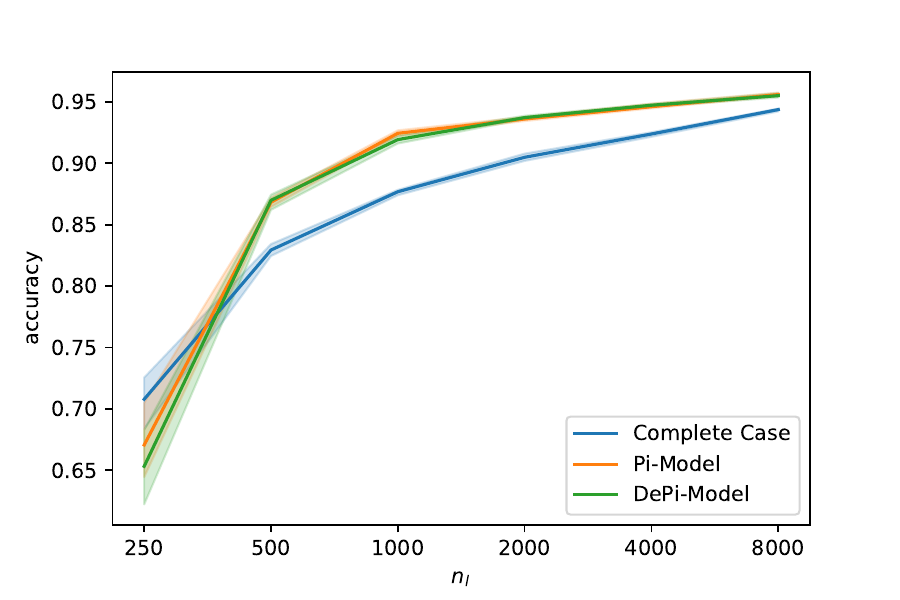}
    \includegraphics[width=0.33\columnwidth]{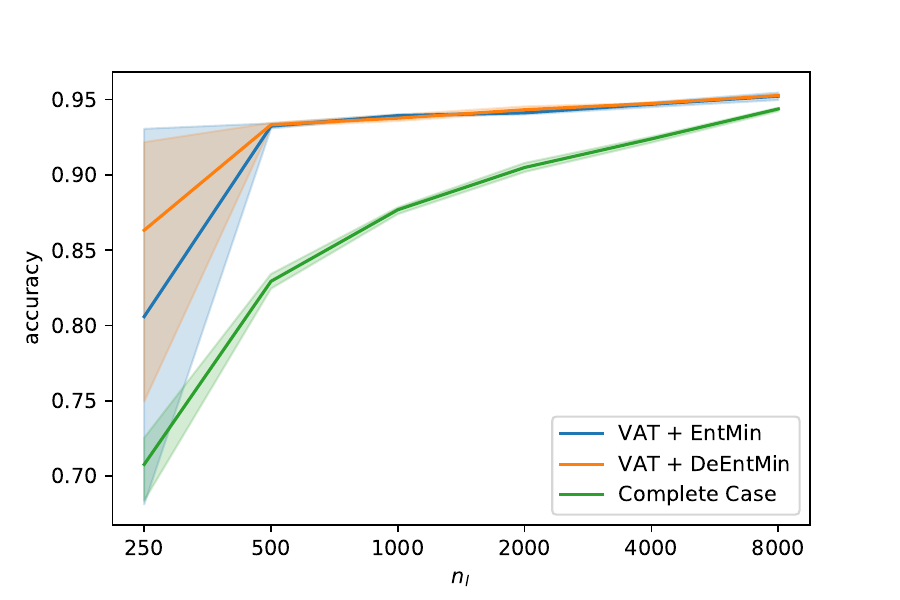}
    \includegraphics[width=0.33\columnwidth]{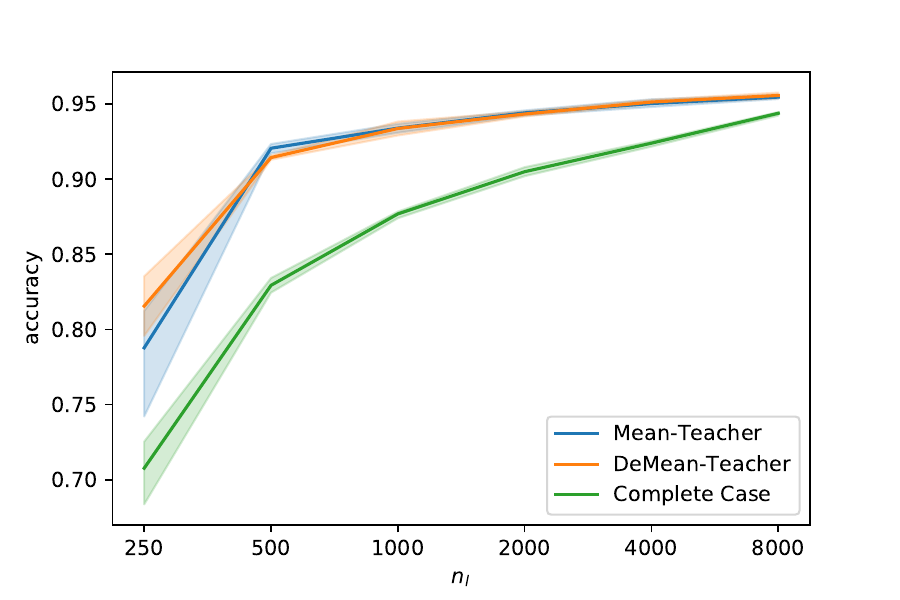}
    \caption{\small Test accuracy for each SSL approaches on CIFAR-10 with various amounts of labelled data $n_l$.(Left)  $\Pi$-model and De$\Pi$-model. (Right)  VAT+EntMin and VAT+DeEntMin. (Bottom) Mean-teacher and DeMean-teacher. Shadows represent 95\% CI.}
    \label{fig:svhnacc}
\end{figure}

\begin{figure}[H]
    \centering
    \includegraphics[width=0.33\columnwidth]{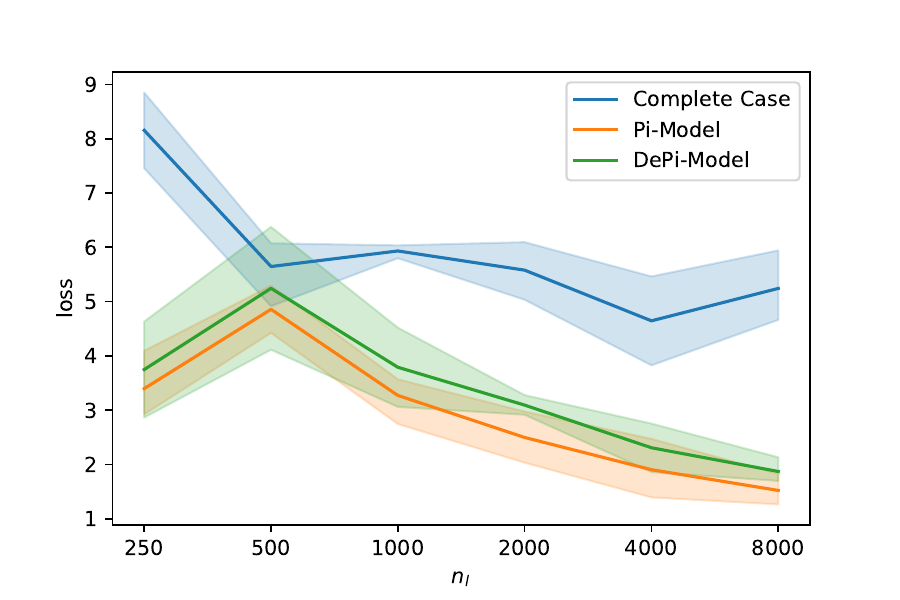}
    \includegraphics[width=0.33\columnwidth]{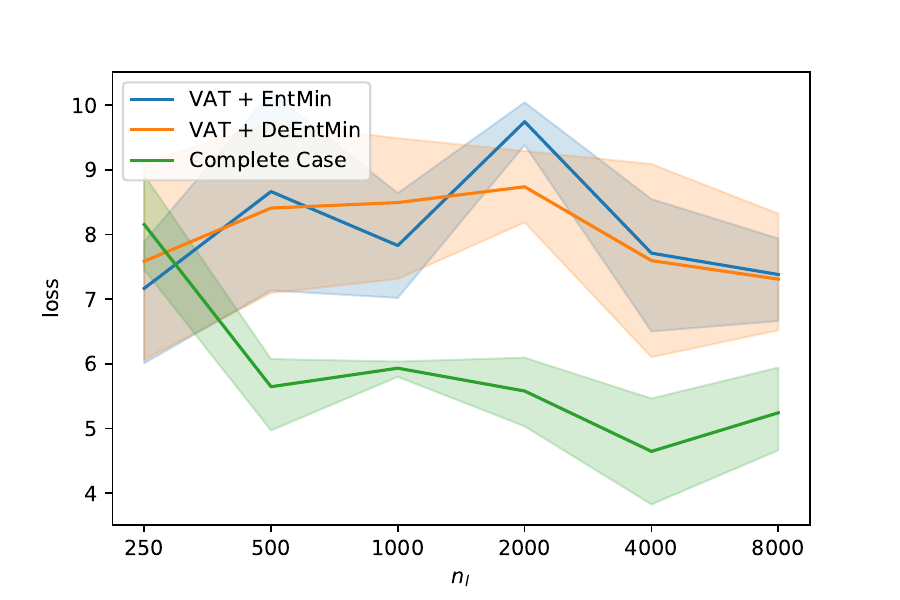}
    \includegraphics[width=0.33\columnwidth]{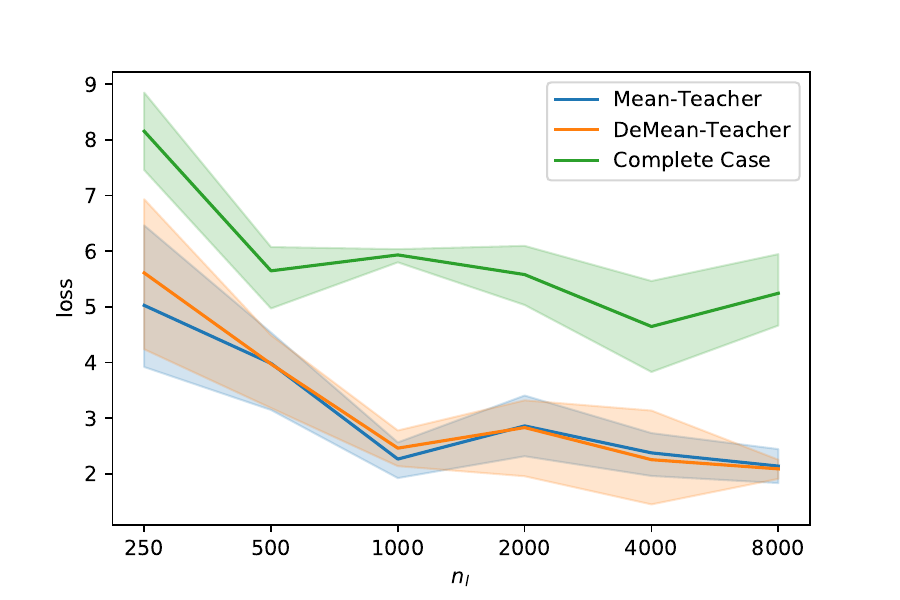}
    \caption{\small Test cross-entropy for each SSL approaches on CIFAR-10 with various amounts of labelled data $n_l$.(Left)  $\Pi$-model and De$\Pi$-model. (Right)  VAT+EntMin and VAT+DeEntMin. (Bottom) Mean-teacher and DeMean-teacher. Shadows represent 95\% CI.}
    \label{fig:svhnloss}
\end{figure}

\newpage
\section{Computation details}

\subsection{Computation resources}

Deep Learning experiments of this work required approximately 11,250 hours of GPU computation. In particular, Fixmatch was trained using 4 GPUs. Here are the details:
\begin{itemize}
    \item MNIST : 300 hours
    \item medMNIST: 3 hours
    \item CIFAR-10: 525 hours
    \item CIFAR-100: 1500 hours
    \item Fixmatch : 3000 hours
    \item Realistic SSL evaluation on both CIFAR and SVHN: 5880 hours
\end{itemize}

\subsection{Computation libraries and tools}
\begin{itemize}
    \item Python \cite{van1995python}
    \item PyTorch \cite{NEURIPS2019_9015}
    \item TensorFlow \cite{tensorflow2015-whitepaper}
    \item Scikit-learn \cite{pedregosa2011scikit}
    \item Seaborn \cite{michael_waskom_2017_883859}
    \item Python imaging library \cite{lundh2012python}
    \item Numpy \cite{2020NumPy-Array}
    \item Pandas \cite{mckinney2010data}
    \item RandAugment \cite{cubuk2020randaugment}
    \item Fixmatch-Pytorch \footnote{\url{https://https://github.com/LeeDoYup/FixMatch-pytorch}}
    \item Realistic-SSL-evaluation \cite{oliver2018realistic}
\end{itemize}
\end{document}